%% file: main.tex
\documentclass{article}

 \usepackage[preprint]{neurips_2026}

\usepackage[utf8]{inputenc} 
\usepackage[T1]{fontenc}    
\usepackage{hyperref}       
\usepackage{url}            
\usepackage{booktabs}       
\usepackage{amsfonts}       
\usepackage{nicefrac}       
\usepackage{microtype}      
\usepackage{xcolor}         
\usepackage{multirow}
\usepackage{multicol}
\usepackage{kotex}
\usepackage{amsmath}
\usepackage{amssymb}
\usepackage{graphicx} 
\usepackage{subcaption}
\usepackage{wrapfig}
\usepackage{relsize}
\usepackage{amsthm}
\usepackage{mathtools}
\definecolor{tabfirst}{rgb}{1, 0.7, 0.7}
\definecolor{tabsecond}{rgb}{1, 0.85, 0.7}
\definecolor{tabthird}{rgb}{1, 1, 0.7}
\definecolor{lightyellow}{RGB}{255, 255, 204}
\definecolor{softgray}{RGB}{230, 230, 230}
\newtheorem{theorem}{Theorem}
\newtheorem{proposition}{Proposition}
\newtheorem{lemma}{Lemma}
\newtheorem{definition}{Definition}
\usepackage[dvipsnames]{xcolor}
\usepackage[x11names]{xcolor}
\usepackage[table,xcdraw]{xcolor}
\definecolor{babyblue}{rgb}{0.32, 0.52, 1.0}
\usepackage{amsthm}     
\usepackage{enumitem}   
\usepackage{pifont}

\usepackage{placeins}
\definecolor{blue}{rgb}{0.0, 0.0, 1.0}

\usepackage{booktabs}
\usepackage{multirow}
\usepackage{longtable}
\usepackage{array}
\usepackage{xltabular}
\usepackage{makecell}
\newcolumntype{Y}{>{\raggedright\arraybackslash}X}

\newcommand\blfootnote[1]{%
  \begingroup
  \renewcommand\thefootnote{}%
  \NoHyper\footnote{#1}\endNoHyper%
  \addtocounter{footnote}{-1}%
  \endgroup
}
\title{Strong Helps Weak: Directional Cross-Modal Alignment Transfer in Multi-modal LLMs}

\author{%
\textbf{Hoigi Seo}$^{1*}$ \qquad \textbf{Byung Hyun Lee}$^{1*}$ \qquad \textbf{Minjun Kim}$^{1*}$\\
\textbf{Dohyun Mah}$^1$ \qquad \textbf{Jongho Lee}$^{1,2}$ \qquad \textbf{Se Young Chun}$^{1,2,3\dag}$\\
$^1$Dept.\ of ECE \quad $^2$INMC \& AIIS \quad $^3$IPAI, Seoul National University, Republic of Korea\\
{\small\texttt{\{seohoiki3215, ldlqudgus756, minjoony, dohyun0505\}}}\\[-1pt]
{\small\texttt{\{jonghoyi, sychun\}@snu.ac.kr}}
}

\begin{document}

\maketitle

\begin{abstract}
Multi-modal large language models (MLLMs) achieve strong modality understanding by pairing a large language model (LLM) with an encoder for a target modality such as vision, video, or audio. However, improving an MLLM's capability for a given modality typically requires additional training on large modality-specific datasets, incurring substantial data collection and compute costs. Model merging offers an alternative, but it is often infeasible for data-scarce, large per-sample size, or domain-specific modalities (\textit{e.g.}, audio and video), where same-modality model variants are rarely available. In this work, we characterize an intriguing asymmetric phenomenon: merging a well-aligned, data-rich source-modality MLLM into a data-scarce target-modality MLLM substantially improves the target on its own benchmarks. Our theoretical and empirical analyses show that this gain stems from enhanced alignment between modality-specific and textual tokens, induced by the stronger donor modality. Specifically, we derive a mutual-information lower bound that is monotonic in alignment-related quantities and strongly correlated with downstream MLLM performance. Building on this principle, we propose Directional Cross-modal Alignment Transfer (DCAT), a novel framework that transfers textual alignment from a strong, well-aligned source (donor) modality to a weak target (recipient) modality, boosting target-modality performance without further fine-tuning. We further show that the alignment-enhancing objective admits a closed-form weight-space solution computed from only a small calibration set. DCAT outperforms existing model-merging methods, offering an efficient path toward cross-modal alignment transfer. Project page with code is available at \url{https://seohoiki3215.github.io/DCAT_project_page}
\end{abstract}

\section{Introduction}
\blfootnote{* equal contribution, $\dagger$ corresponding author.}
Large language models (LLMs)~\cite{touvron2023llama, vaswani2017attention, liu2024deepseek, radford2019language, bai2023qwen} achieve strong performance across textual tasks through scaling in data and model size.
Most existing multi-modal LLMs (MLLMs)~\cite{liu2023visual, chu2023qwen, xu2024pointllm} have advanced visual reasoning, video understanding, or audio recognition by pairing a strong LLM backbone with a single modality encoder such as vision~\cite{radford2021learning, tschannen2025siglip, zhai2023sigmoid}, video~\cite{maaz2024video, lin2024video, li2024llava} or audio~\cite{radford2023robust, shi2025sam, schneider2019wav2vec} encoders, and a projector~\cite{li2023blip, liu2023visual} that aligns the modality with the textual space. However, enhancing the target-modality capability of an MLLM typically requires additional training on large modality-specific data, incurring substantial data collection and compute costs.

Model merging~\cite{ilharco2022editing, yadav2023ties, marczak2025no, gargiulo2025task, cheng2025whoever, wei2025optmerge} offers a promising alternative for improving MLLMs by combining same-architecture models with little data.
For vision MLLMs, multiple same-backbone variants and abundant modality data are often available and can be merged to enhance visual capabilities.
However, such data abundance and same-modality variants are much scarcer for other domain-specific modalities, where training multiple variants is costly or infeasible. While existing studies have mostly focused on merging within a single modality, merging different-modality MLLMs remains largely unexplored. We argue that MLLMs trained for different modalities may still share the same base LLM, making cross-modality weight merging possible. Thus, we ask a more targeted question: can a source-modality MLLM (donor) transfer useful alignment to a target-modality MLLM (recipient), thereby improving the recipient's target-modality performance without additional training?

\begin{wrapfigure}{r}{0.4\linewidth}
  \centering
  \includegraphics[width=\linewidth]{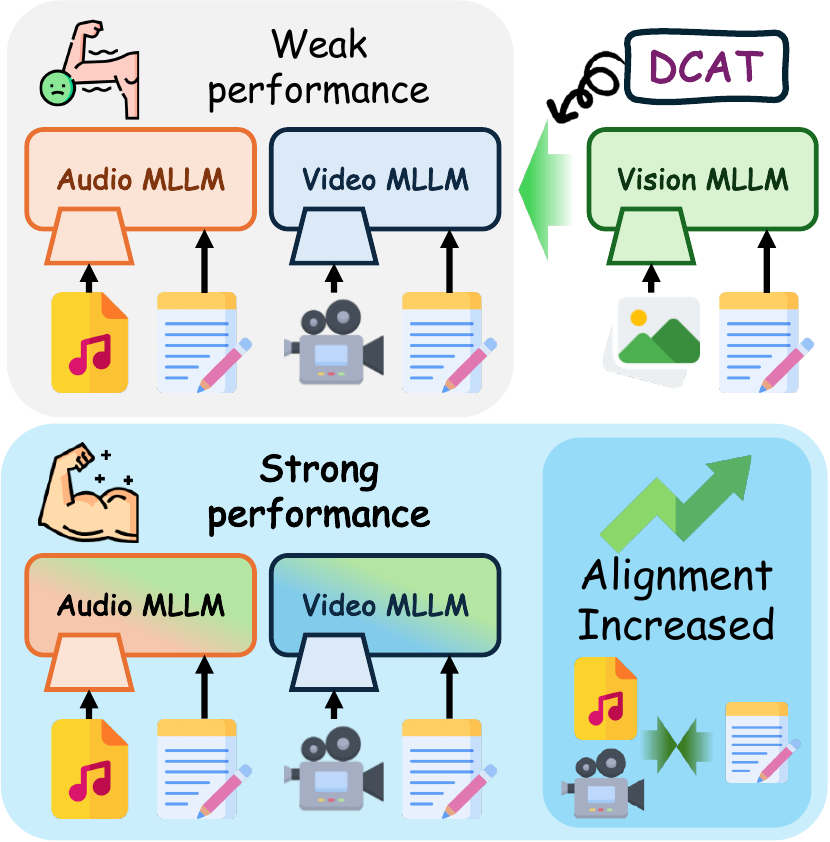}
    \caption{\textbf{Conceptual diagram of DCAT.} DCAT transfers alignment from a strong donor modality model to a relatively weak recipient modality model, thereby improving the recipient model’s modality-specific performance.}
  \label{fig:intro}
  \vspace{-1.5em}
\end{wrapfigure}
In this paper, we identify and study an intriguing and recurring phenomenon: merging a donor and a recipient via existing merging methods can yield substantial performance gains over the original recipient on its associated benchmark. We investigate this phenomenon along two complementary axes. On the theoretical side, we analyze it through the lens of mutual information (MI)~\cite{kreer1957question, shannon1948mathematical}. Under a mild assumption, we prove that a lower bound on the MI between text and the other modality is monotonic in the product of two quantities: the Subspace Energy Overlap (SEO), which measures the extent to which each modality's token energy lies within the text-aligned subspace, and the Spectral Diversity (SD)~\cite{roy2007effective}, which measures how uniformly that energy is distributed within this subspace. We refer to this product as the \textit{Alignment} score. On the empirical side, across hundreds of model checkpoints spanning multiple architectures, we conduct a series of statistically tested analyses and find that the Alignment score is strongly and significantly correlated with MLLM performance, and that merged models exhibiting performance gains also show a statistically significant increase in the Alignment. Taken together, these theoretical and empirical results support our MI-lower-bound-based Alignment score, as a plausible explanatory mechanism for the observed benefits of model merging.

Building on these analyses, we propose Directional Cross-modal Alignment Transfer (DCAT), a novel framework that transfers alignment from a high-alignment donor MLLM to a recipient MLLM using only a few hundred calibration samples, thereby improving the recipient's target-modality performance. To our knowledge, we are the first to formulate and analyze this phenomenon; accordingly, the most natural baselines for DCAT are the existing model-merging methods through which the phenomenon is initially observed. As illustrated in Fig.~\ref{fig:intro}, we instantiate this directional transfer by using vision MLLMs, which generally exhibit high alignment, as donors and audio or video MLLMs as recipients. Across a diverse suite of benchmarks, including MMAU~\cite{sakshi2024mmau}, AIR-Bench~\cite{yang2024air}, Video-MME~\cite{fu2025video}, and MVBench~\cite{li2024MVBench}, DCAT achieves 29.55\% improvement for audio tasks and 24.92\% for video tasks on average over the original recipient models.

Our contributions can be summarized as follows:
\begin{itemize}
    \item We investigate an asymmetric phenomenon: model merging between MLLMs trained on different modalities improves target-modality performance. We introduce a theory-motivated alignment metric that explains this phenomenon, supported by strong empirical analysis. 
    \item Building on this analysis, we propose Directional Cross-modal Alignment Transfer (DCAT), a method that leverages a small amount of data to perform a closed-form weight update, thereby transferring alignment from a well-aligned modality to a relatively weaker one.
    \item We conduct extensive experiments across multiple MLLMs on various benchmarks. DCAT consistently outperforms existing model merging methods and achieves average performance gains of 29.55\% over the base model on audio tasks and 24.92\% on video tasks.
\end{itemize}

\section{Related Work}
\paragraph{Multi-modal large language models.}
Recent advances in LLMs have spurred extensive work on MLLMs, which attach modality-specific encoders and projectors to an LLM to generate text from inputs such as vision~\cite{liu2023visual, seo2025epistemic, li2025llava, an2025llava, zhu2024minigpt, chen2023shikra, maaz2024video, li2024llava, lin2024video, wang2024qwen2, bai2025qwen3} and audio~\cite{hu2024wavllm, chu2023qwen, chu2024qwen2, xu2025qwen3, huang2025step, hurst2024gpt}. Training these models is compute-intensive, often requiring optimization of the LLM together with newly added components. Moreover, modality-specific data, such as video, requires far more storage than text and is costly or difficult to collect, making modality-specific performance gains expensive. These challenges are especially severe for rare or high-volume modalities with limited supervision, motivating training-free and data-efficient alternatives for improving MLLM capabilities.

\paragraph{Model merging.}
As neural networks scale, deploying separate task specialists becomes prohibitive, motivating model merging as a training-free way to consolidate independently fine-tuned models.
TA~\cite{ilharco2022editing} enables model editing via task vectors, defined as weight differences between fine-tuned and pretrained models.
Subsequent methods mitigate interference among task-specific updates: TIES~\cite{yadav2023ties} trims redundant changes and resolves sign conflicts; ISO~\cite{marczak2025no} separates task-specific subspaces; TSV~\cite{gargiulo2025task} decomposes updates into singular directions for low-dimensional merging; and WUDI~\cite{cheng2025whoever} identifies interfering vectors for robust merging.
OptMerge~\cite{wei2025optmerge} extends this paradigm across heterogeneous tasks and modalities. However, existing studies have not systematically examined cross-modal merging from a directional transfer perspective, where a source-modality MLLM is used to improve a designated target-modality MLLM on its own modality-specific benchmarks.

\section{Analysis}
\begin{table*}[t]
    \centering
    \caption{\textbf{Quantitative comparison between modality-specific models and merged models.} We report \textbf{(Left)} the MMAU~\cite{sakshi2024mmau} performance obtained by merging an audio-only model with an audio-vision merged model (A+V), and \textbf{(Right)} the Video-MME~\cite{fu2025video} performance obtained by merging a video-only model with a video-vision merged model (Vd+V). Results are reported for Task Arithmetic (TA), ISO-C (ISO), TSV, and OptMerge (Opt.). In all cases, merging is applied only to the LLM and the Alignment score was measured at the 21st layer of the Vicuna model. We set $\gamma=0.8$.}
    \label{tab:motivation}
    \setlength{\tabcolsep}{2.5pt}
    \renewcommand{\arraystretch}{1.05}
    \noindent
    \begin{minipage}[t]{0.49\textwidth}
        \centering
        \small
        \begin{tabular}{@{}lccccc@{}}
            \toprule
            & Music & Sound & Speech & Overall & Align$_\gamma$ \\
            \midrule
            Audio only & 54.79 & 60.66 & 31.23 & 48.90 & 0.0091 \\
            \midrule
            A+V (TA)   & 55.69 & 64.56 & 39.34 & 53.20 & 0.0103 \\
            A+V (ISO)  & 58.68 & 66.67 & 34.83 & 53.40 & 0.0950 \\
            A+V (TSV)  & 55.99 & 66.67 & 35.44 & 52.70 & 0.0100 \\
            A+V (Opt.) & 59.58 & 64.26 & 35.44 & 53.10 & 0.0099 \\
            \bottomrule
        \end{tabular}
    \end{minipage}%
    \hfill
    \begin{minipage}[t]{0.49\textwidth}
        \centering
        \small
        \begin{tabular}{@{}lccccc@{}}
            \toprule
            & Short & Medium & Long & Overall & Align$_\gamma$ \\
            \midrule
            Video only & 39.89 & 34.22 & 30.89 & 35.00 & 0.0248 \\
            \midrule
            Vd+V (TA)   & 44.67 & 34.78 & 31.89 & 37.11 & 0.0287 \\
            Vd+V (ISO)  & 43.67 & 38.22 & 33.44 & 38.44 & 0.0269 \\
            Vd+V (TSV)  & 46.44 & 36.67 & 34.00 & 39.04 & 0.0278 \\
            Vd+V (Opt.) & 44.33 & 37.00 & 33.56 & 38.30 & 0.0267 \\
            \bottomrule
        \end{tabular}
    \end{minipage}
\end{table*}
\subsection{Model merging increases modality performance}
While task-wise model merging, which combines LLMs fine-tuned on different downstream tasks into one model, has been extensively studied, comparatively little attention has been paid to merging MLLMs trained for different input modalities.
Tab.~\ref{tab:motivation} reports MMAU~\cite{sakshi2024mmau} scores for vision--audio MLLM merges and Video-MME~\cite{fu2025video} scores for vision--video MLLM merges. All checkpoints are from OptMerge~\cite{wei2025optmerge} which are Vicuna~\cite{chiang2023vicuna}-based.
Despite modality heterogeneity, merging with a vision MLLM improves audio and video benchmark performance by up to 18.45\% over the original modality-specific MLLM.
This raises a natural question: what drives this striking performance gain?

\subsection{On the lower bound of Mutual Information}
We interpret this phenomenon through the lens of the mutual information (MI) between text and the other modality since it provides a principled way to quantify how much modality-specific information is accessible from text representations. As a first step toward making this perspective concrete, we introduce a quantity that we call the Subspace Energy Overlap (SEO), defined as follows.
\begin{definition}[Subspace Energy Overlap]
Let $Y_{\mathrm{text}}\in\mathbb{R}^{N_t\times d}$ denote the text-token activation matrix, and let $U_k\in\mathbb{R}^{k\times d}$ be the top-$k$ right singular vectors of the centered text activations. For modality-token activations $Y_{\mathrm{mod}}=\{y_i\}_{i=1}^{N_m}$, the \emph{Subspace Energy Overlap} (SEO) is defined as
\begin{equation}
\text{SEO}
= \frac{ \|Y_{\mathrm{mod}} U_k^\top\|_F^2}{ \|Y_{\mathrm{mod}}\|_F^2}.
\end{equation}
\end{definition}
SEO quantifies the fraction of modality-token energy captured by the principal text subspace, with higher values indicating that modality representations are more concentrated within this subspace.

Next, we introduce a second quantity, which we call the Spectral Diversity (SD), defined as follows.
\begin{definition}[Spectral Diversity]
Let $Z=Y_{\mathrm{mod}}U_k^\top \in\mathbb{R}^{N_m\times k}$
and $\Sigma_Z$ be the empirical covariance matrix of the projected representations. Denote the eigenvalues of $\Sigma_Z$ by $\{\lambda_j\}_{j=1}^k$, and define $p_j=\frac{\lambda_j}{\sum_{\ell=1}^{k}\lambda_\ell}.$
The \emph{Spectral Diversity} (SD) is defined as
\begin{equation}
\text{SD}
=
\frac{1}{k}
\exp\left(
-\sum_{j=1}^{k} p_j\log p_j
\right).
\end{equation}
\end{definition}

Intuitively, $k\cdot \mathrm{SD}$ corresponds to the effective rank~\cite{roy2007effective} in text-subspace for the projected modality representation. Thus, the effective-rank view makes SD directly interpretable in the text-subspace: higher SD indicates that modality information is broadly distributed within that subspace. Then, the following theorem lower-bounds the mutual information between modality tokens and text.

\begin{theorem}[Weighted Alignment-MI Lower Bound]\label{thm:lower_bound}
    Let $z = y U_k^\top$ where $U_k$ contains the top-$k$ right singular vectors of the centered text activation matrix. Suppose that $z\sim\mathcal{N}(\mu_Z,\Sigma_Z)$, $z|x \sim \mathcal{N}(\mu_z(x), \Sigma_{z|x})$, and      $p_j \geq \frac{\delta}{k}$
     for some sufficiently large $ \delta\in(0,1]$. Then, for any $\gamma \in (0,1)$:
    \begin{equation}
    \label{eq:mi_lower_bound}
    I(y; x) \;\geq\; \kappa(\gamma) \cdot \log\!\left(\text{SEO}^\gamma \cdot \text{SD}^{1-\gamma}\right) + C_{\delta},
    \quad \kappa(\gamma) = \frac{k}{2\min(\gamma, 1-\gamma)}
    \end{equation}
    with $C_{\delta} = \frac{k}{2}\log(\text{tr}(\Sigma_y)/k) - \frac{1}{2}\log\det(\bar\Sigma_{z|x}) +
    \frac{k}{2}\log\delta$.
\end{theorem}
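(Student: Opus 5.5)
We will first pass from $y$ to its projection: since $z = yU_k^\top$ is a deterministic function of $y$, the data-processing inequality gives $I(y;x)\ge I(z;x)$. Under the Gaussian assumptions, $I(z;x) = h(z) - h(z\mid x)$. We will evaluate $h(z)=\tfrac12\log\big((2\pi e)^k\det\Sigma_Z\big)$ exactly. For the conditional term, we will use concavity of $\log\det$ (equivalently, Gaussian maximum entropy applied to the average conditional covariance $\bar\Sigma_{z|x}=\mathbb{E}_x\Sigma_{z|x}$) to obtain $h(z\mid x)\le \tfrac12\log\big((2\pi e)^k\det\bar\Sigma_{z|x}\big)$. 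Together these give
\begin{equation*}
I(y;x)\ \ge\ \tfrac12\log\det\Sigma_Z - \tfrac12\log\det\bar\Sigma_{z|x}.
\end{equation*}
The second term already appears in $C_\delta$, so the remaining work is to lower-bound $\log\det\Sigma_Z$ by SEO and SD.

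For $\log\det\Sigma_Z$, we write $\det\Sigma_Z=\prod_j\lambda_j=(\mathrm{tr}\Sigma_Z)^k\prod_j p_j$. Interpreting SEO at the population level as $\mathrm{tr}(\Sigma_Z)/\mathrm{tr}(\Sigma_y)$, we get $\mathrm{tr}\Sigma_Z=\mathrm{SEO}\cdot\mathrm{tr}\Sigma_y$. This uses centered energies; any centering mismatch would be absorbed into the definitions. Hence
\begin{equation*}
\log\det\Sigma_Z = k\log\mathrm{SEO} + k\log\mathrm{tr}\Sigma_y + \textstyle\sum_j\log p_j .
\end{equation*}
We then relate $\sum_j \log p_j$ to SD. Since $\log(k\,\mathrm{SD}) = H(p) = -\sum_j p_j\log p_j$, we need to compare the arithmetic mean of $\log p_j$ with the entropy. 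Here the assumption $p_j\ge\delta/k$ enters. From it, $\log p_j\ge\log(\delta/k)$, and $H(p)\le\log k$. We will aim for an inequality of the form $\tfrac1k\sum_j\log p_j \ge c\,\log(\mathrm{SD}) + \log\delta - \log k$ with a constant $c\ge 1$. A convenient route is to note that $\log p_j - \log(\delta/k)\ge 0$, and to use $H(p)-\log k = \log\mathrm{SD}\le 0$. The deviation $-\sum_j\log(kp_j)$, which is the reverse KL from uniform scaled by $k$, is controlled by $k\cdot\mathrm{KL}(p\,\|\,u)=-k\log\mathrm{SD}$ up to a factor depending on $\delta$. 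The phrase ``sufficiently large $\delta$'' will be used to make this factor at most a fixed constant. This yields $\tfrac{1}{2}\sum_j\log p_j \gtrsim \tfrac{k}{2}\log\mathrm{SD} + \tfrac{k}{2}\log\delta - \tfrac{k}{2}\log k$.

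Combining the pieces gives
\begin{equation*}
I(y;x)\ \ge\ \tfrac{k}{2}\big(\log\mathrm{SEO} + \log\mathrm{SD}\big) + \tfrac{k}{2}\log(\mathrm{tr}\Sigma_y/k) - \tfrac12\log\det\bar\Sigma_{z|x} + \tfrac{k}{2}\log\delta ,
\end{equation*}
which is exactly $C_\delta$ plus $\tfrac k2\log(\mathrm{SEO}\cdot\mathrm{SD})$. To introduce $\gamma$, we use that both $\log\mathrm{SEO}\le 0$ and $\log\mathrm{SD}\le0$. Setting $m=\min(\gamma,1-\gamma)$, we have $\gamma\log\mathrm{SEO}+(1-\gamma)\log\mathrm{SD}\ge$ each of its terms scaled by $1/\max$. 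More directly, $\log\mathrm{SEO}+\log\mathrm{SD}\ \ge\ \tfrac1m\big(\gamma\log\mathrm{SEO}+(1-\gamma)\log\mathrm{SD}\big)$, because $\gamma/m\ge1$ and $(1-\gamma)/m\ge1$ multiply nonpositive numbers. Multiplying by $k/2$ gives $\kappa(\gamma)\log(\mathrm{SEO}^\gamma\mathrm{SD}^{1-\gamma})$, which completes the bound.

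The main obstacle is the middle step: controlling $\sum_j\log p_j$ (a log-det, or reverse-KL, quantity) by the entropy-based SD. In general this needs the floor $p_j\ge\delta/k$, and the constant must come out cleanly. First we will try the elementary bound $\log t\ge (t-1)/t$ with $t=kp_j\ge\delta$, combined with a reverse-Pinsker-type comparison between $\mathrm{KL}(u\|p)$ and $\mathrm{KL}(p\|u)$ valid when $p$ is bounded below. If the constant does not match, we will absorb the discrepancy into the $\log\delta$ term. This is where the ``sufficiently large $\delta$'' qualifier is needed.
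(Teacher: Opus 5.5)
Your skeleton is the paper's: data processing to pass to $z$, Gaussian entropies with Jensen on $\log\det$ to bound $h(z\mid x)$, factoring $\det\Sigma_Z$ through the trace and the normalized spectrum, and the final $\gamma$-weighting using $\log\mathrm{SEO},\log\mathrm{SD}\le 0$. The gap is the step you flag as the main obstacle. It is left open, and the plan you sketch would not produce the stated bound.

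You aim for $\frac1k\sum_j\log p_j\ge c\log\mathrm{SD}+\log\delta-\log k$ with $c\ge 1$. Because $\log\mathrm{SD}\le 0$, a larger $c$ makes this inequality \emph{weaker}. Any $c>1$ leaves coefficient $ck/2$ on $\log\mathrm{SD}$, and the $\gamma$-conversion then needs $c\le(1-\gamma)/\min(\gamma,1-\gamma)$, which is violated for $\gamma\ge\frac12$ (take $\mathrm{SEO}=1$). The multiplicative comparison you propose cannot reach $c\le1$: $\mathrm{KL}(u\,\|\,p)\le c\,\mathrm{KL}(p\,\|\,u)$ fails in general, since for $k=2$ and $p$ near uniform the ratio already exceeds $1$. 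Nor can a multiplicative mismatch on the variable quantity $\log\mathrm{SD}$ be ``absorbed into the $\log\delta$ term,'' because $C_\delta$ is fixed by the statement.

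The missing observation is that no KL comparison is needed, and you already wrote down both ingredients. Averaging $\log(kp_j)\ge\log\delta$ over $j$ gives $\frac1k\sum_j\log(kp_j)\ge\log\delta$. Since $\log\mathrm{SD}=H(p)-\log k\le 0$, this is at least $\log\delta+\log\mathrm{SD}$. That is the paper's lemma, $\mathrm{SD}_{\det}=\bigl(\prod_j kp_j\bigr)^{1/k}\ge\delta\ge\delta\,\mathrm{SD}$. In particular, ``sufficiently large $\delta$'' does no work in the proof: any $\delta\in(0,1]$ satisfying the floor yields the bound, and $\delta$ only controls how loose it is.

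A second, smaller issue is the SEO step. The statement's SEO is the \emph{uncentered} ratio $\|Y_{\mathrm{mod}}U_k^\top\|_F^2/\|Y_{\mathrm{mod}}\|_F^2$, and the centering mismatch is not automatically harmless. Writing $Y_{\mathrm{mod}}=\widetilde Y_{\mathrm{mod}}+\mathbf{1}\mu_y^\top$, the cross terms vanish and
\[
\mathrm{SEO}=\frac{(N_m-1)\operatorname{tr}\Sigma_Z+N_m\|\mu_yU_k^\top\|_2^2}{(N_m-1)\operatorname{tr}\Sigma_y+N_m\|\mu_y\|_2^2}.
\]
This exceeds $\operatorname{tr}\Sigma_Z/\operatorname{tr}\Sigma_y$ whenever the fraction of the mean's energy lying in the text subspace exceeds the centered fraction. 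You only need the one-sided inequality $\operatorname{tr}\Sigma_Z\ge\mathrm{SEO}\cdot\operatorname{tr}\Sigma_y$. The paper obtains it by adding the hypothesis $\mu_yU_k^\top=0$ (its first lemma), which removes the mean from the numerator while only enlarging the denominator. You should state such a hypothesis explicitly rather than redefining SEO as the centered trace ratio.
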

The proof of Theorem~\ref{thm:lower_bound} is in Sec.~\ref{sec:append_thm_proof} and we empirically verify that $\delta$ is not vanishingly small in Sec.~\ref{sec:empirical_validation_tau}. 
Although the lower bound in this theorem is maximized at $\gamma=0.5$, the most suitable value of $\gamma$ may vary across models and modalities; we therefore consider a more flexible, looser bound.

\begin{figure}[t]
    \centering
    \includegraphics[width=\textwidth]{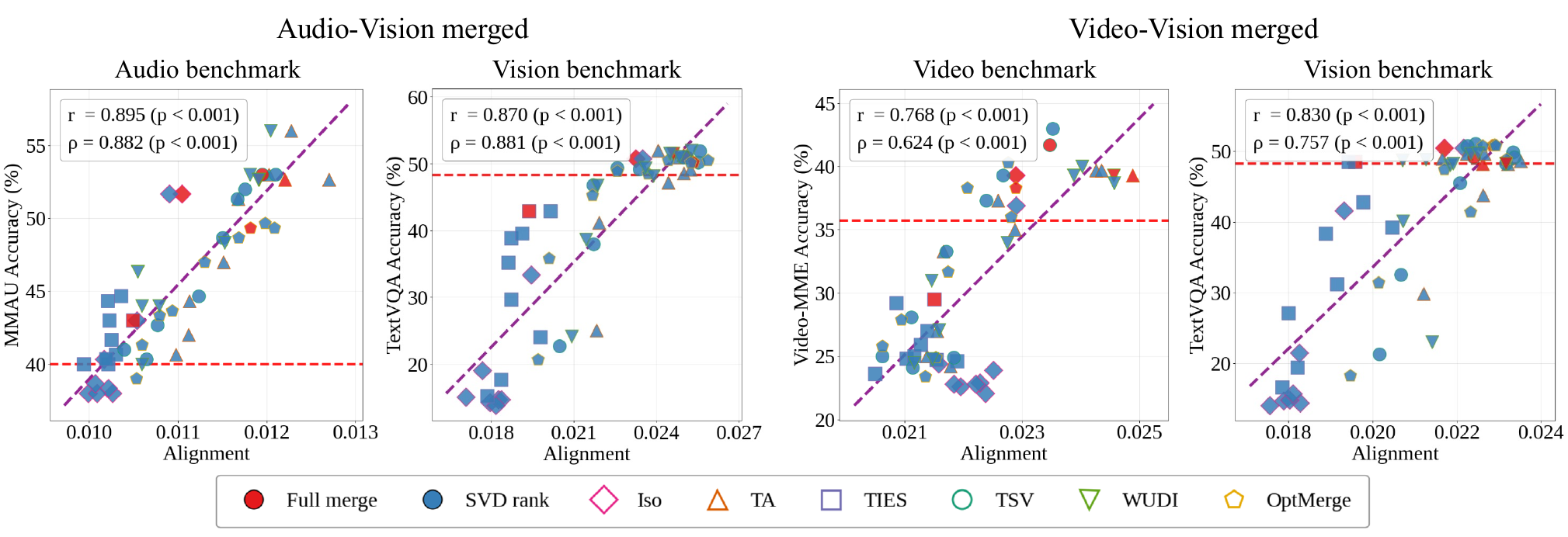}
    \caption{\textbf{Empirical analysis of our alignment metric and performance.} We analyze the correlation between the Alignment score and modality performance across a total of $N=114$ model checkpoints. We use MMAU~\cite{sakshi2024mmau} for audio, Video-MME~\cite{fu2025video} for video, and TextVQA~\cite{singh2019towards} for vision performance. Checkpoints are obtained from a range of merging methods~\cite{ilharco2022editing, yadav2023ties, marczak2025no, gargiulo2025task, cheng2025whoever, wei2025optmerge}, combined with several variations of the merged weights. The red dotted line refers to the performance of original model. For all experiments, we chose $k=32$ and $\gamma=0.8$, and measure performance on 300 questions per benchmark across categories. Text embeddings are extracted from the questions.}
    \label{fig:correlation}
\end{figure}

It is widely recognized that increasing the MI between modalities is essential for improving performance in multi-modal models~\cite{su2023towards, almudevar2025aligning, oord2018representation}. This implies that maximizing the lower bound established in Theorem~\ref{thm:lower_bound} provides a principled means of enhancing performance. 
Motivated by the bound in Eq.~\ref{eq:mi_lower_bound}, we define the Alignment score as a weighted geometric mean of SEO and SD:
\begin{equation}
    \mathrm{Align}_{\gamma}
    :=
    \mathrm{SEO}^{\gamma}\cdot \mathrm{SD}^{1-\gamma}.
\end{equation}
That is, it measures how much modality-token energy enters the text principal subspace (SEO) and how broadly that energy is distributed across the effective directions of the text subspace (SD).
In the following section, we leverage the proposed $\text{Align}_\gamma$ to analyze the effect of various weight choices and present a statistically grounded empirical study examining whether the performance gains from model merging can be attributed to an increase in the MI lower bound for each modality.

\subsection{Empirical analysis on Alignment score}
We measure the $\text{Align}_\gamma$ for each model before and after merging, with the Align. column of Tab.~\ref{tab:motivation}. The alignment increases significantly under model merging ($p\ll10^{-10}$, Wilcoxon signed-rank test), suggesting that merging may improve performance by raising the MI lower bound.

To test whether this relationship is global, we evaluate 114 Vicuna-based MLLM checkpoints generated by multiple merging methods, low-rank approximations of weight differences, and module-restricted merging (\textit{e.g.}, attention or FFN).
For each checkpoint, we plot modality-wise performance against $\text{Align}_\gamma$ at the 21st layer (Fig.~\ref{fig:correlation}), where Pearson ($r$) and Spearman ($\rho$) coefficients show significant ($p<10^{-3}$) strong positive correlations across all modalities.
Audio and video models improve clearly, while the vision model shows only marginal improvement or degradation; thus, we refer to relatively degraded models as \textit{donor models} and improved models as \textit{recipient models}.

\begin{wrapfigure}{r}{0.5\linewidth}
  \centering
  \vspace{-1.0em}
  \includegraphics[width=\linewidth]{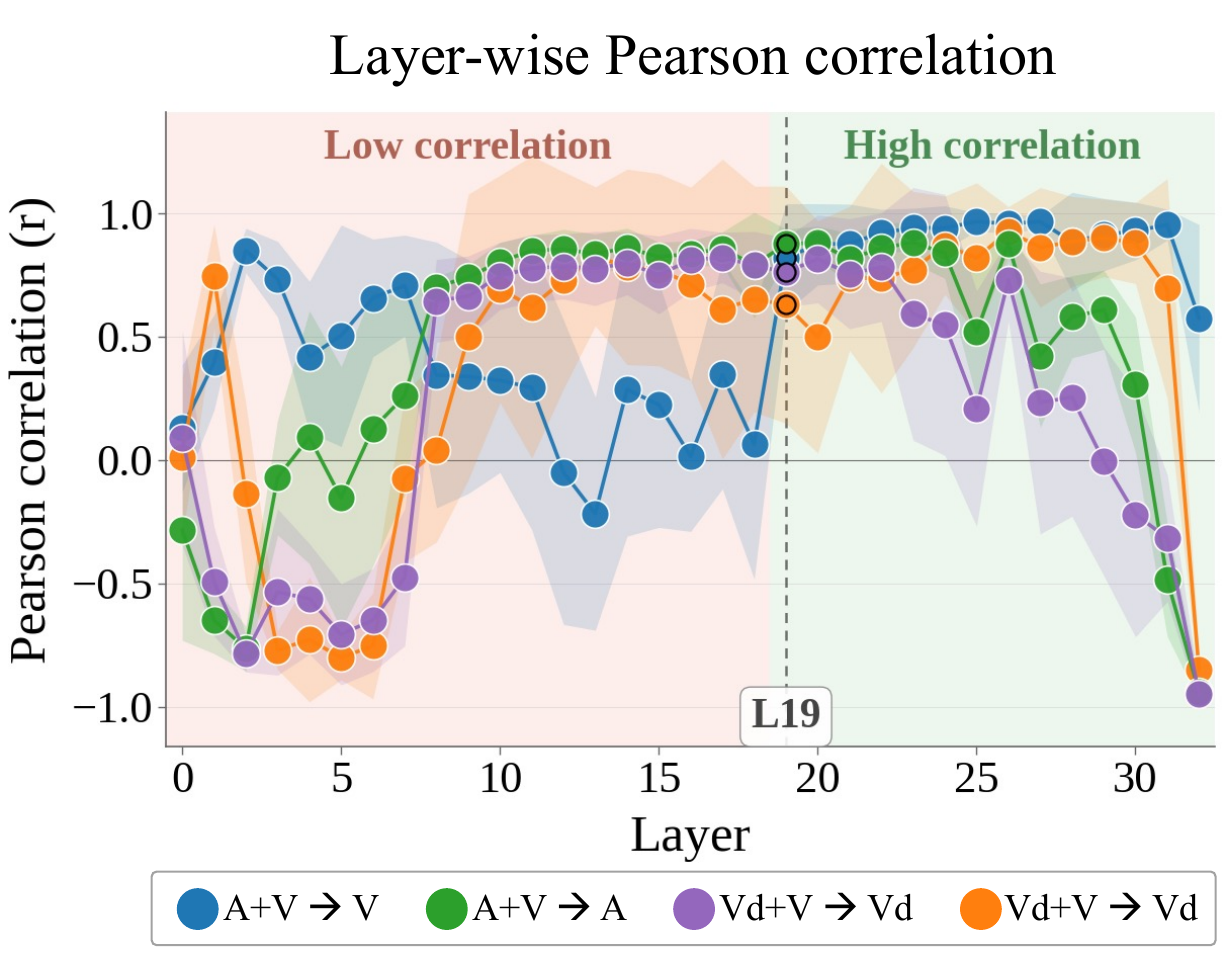}
    \caption{\textbf{Layer-wise Pearson correlation between Alignment score and performance across MLLM combinations and modalities.} We measure alignment in a layer-wise manner for each modality across multiple modality pairs. A consistently strong positive correlation between the Alignment score and performance emerges from the middle layers onward.}
  \label{fig:layerwise_alignment}
  \vspace{-1.5em}
\end{wrapfigure}
We selected the 21st layer, the mid-to-late layer, to see this trend. In Fig.~\ref{fig:layerwise_alignment}, we plot the Pearson coefficient between $\text{Align}_\gamma$ and performance at each layer. Beyond the 19th layer, we consistently observe a high correlation across all modality merge pairs and performance. This finding is consistent with prior works~\cite{venhoff2025visual, wu2026vision, jiang2025devils, yoon2025visual, hsu2026anatomy, seo2025epistemic, skean2025layer, ye2025atp} showing that MLLMs exhibit high, meaningful attention from the middle layers onward, which has been interpreted as the point where alignment between text and the corresponding modality begins. In other words, the phenomenon captured here parallels observations made in earlier studies using different metrics, suggesting that our $\text{Align}_\gamma$ metric not only functions as a lower bound on MI but also quantifies the alignment between textual and non-textual modalities. To ensure robustness, we adopt the 21st layer in our analysis, as it exhibits the lowest standard deviation. Experimental details and broader analysis of alignment across different models and benchmarks are provided in the Appendix Sec.~\ref{sec:append_details} and Sec.~\ref{sec:append_analysis}.

\section{Method}
Building on these analyses, we propose \textbf{Directional Cross-modal Alignment Transfer (DCAT)}, which transfers donor alignment to recipient models via a closed-form solution using minimal calibration inputs. DCAT exploits cross-modal merging benefits by targeting the alignment-related terms in Eq.~\eqref{eq:mi_lower_bound}. We further incorporate $C_{\delta}$ to preserve the overall modality activation scale.
\paragraph{Notation.}
Specifically, let $\theta_0$ be the pretrained base weight (\textit{e.g.}, base LLM weight) for a given layer, and let $\tau_r=\theta_r-\theta_0$ and $\tau_d=\theta_d-\theta_0$ denote the recipient and donor task vectors.
We denote recipient-modality calibration inputs and text calibration inputs as $X_{\mathrm{mod}}$ and $X_{\mathrm{text}}$, respectively.
For a candidate task vector $\tau$, the corresponding modality-token and text-token activations are given by $X_{\mathrm{mod}}(\theta_0+\tau)$ and $X_{\mathrm{text}}(\theta_0+\tau)$.
Then recipient activations are computed as:
\begin{equation}
Y_{\mathrm{text}} = X_{\mathrm{text}}(\theta_0+\tau_r), \quad 
Y_{\mathrm{mod}} = X_{\mathrm{mod}}(\theta_0+\tau_r).
\end{equation}
We also define the orthogonal projector of $U_k$ and its orthogonal complement projector as $P_k (=U_k^\top U_k )$ and $P_k^\perp (=I-P_k)$, respectively.
Here, $U_k$ represents the principal text subspace, so $P_k$ and $P_k^\perp$ decompose an activation into the text-subspace component and its residual component.

\paragraph{Cross-modal alignment improvement.}
Motivated by the former analysis, DCAT first introduces a donor-guided cross-modal alignment term on text inputs:
\begin{equation}
\label{eq:text_regularization}
\mathcal{L}_{\mathrm{cross}}(\tau) = \beta\cdot\|X_{\mathrm{text}} (\tau - \tau_r) \|_F^2 + (1-\beta)\cdot\|X_{\mathrm{text}} (\tau - \tau_d) \|_F^2 .
\end{equation}
The recipient anchor preserves the recipient model’s original behavior and prevents the update from overwriting modality-specific capabilities.
The donor anchor is complementary: from the perspective of linear mode connectivity~\cite{frankle2020linear,wu2023pi}, donor weights on text inputs define directions along which DCAT can improve modality-text alignment while remaining in a text-compatible region of weight space.
Thus, the donor-side term does not merely pull the solution toward the donor model; it transfers donor-induced cross-modal alignment directions through text space, allowing the recipient model to benefit from the donor while suppressing unexpected drift of $C_{\delta}$.

\paragraph{Activation-space target construction.}
Beyond donor-guided cross-modal alignment transfer, DCAT further improves the recipient model's self-alignment by constructing an activation-space target for the recipient modality.
Since $U_k$ defines the principal text subspace, we project the recipient representation as $X_{\mathrm{mod}}(\theta_0+\tau_r) U_k^T$.
We compute its SVD as $A\;\mathrm{diag}(\sigma_1,\ldots,\sigma_k) B^\top$ and define $\rho=\|X_{\mathrm{mod}}(\theta_0 + \tau_r )\|_F/\sqrt{k}$, which preserves the total output energy across all $k$ text-subspace directions.
DCAT then reshapes the original spectrum into a more balanced projected target $T_\alpha$:
\begin{equation}
\label{eq:projected_target}
T_\alpha = c_\alpha \cdot A\;\mathrm{diag}\left(\sigma_j^\alpha \rho^{1-\alpha}\right)_{j=1}^{k}B^\top U_k,
\qquad
c_\alpha = \frac{\|X_{\mathrm{mod}}(\theta_0+\tau_r) \|_F}{\left(\sum_{j=1}^{k}\sigma_j^{2\alpha}\rho^{2(1-\alpha)}\right)^{1/2}}.
\end{equation}
Note that $T_\alpha$ reflects favorable properties for increasing the lower bound in Theorem~\ref{thm:lower_bound} such that:
\begin{equation}
\label{eq:target_properties}
T_\alpha P_k^\perp=0,
\qquad
\|T_\alpha\|_F=\|X_{\mathrm{mod}}(\theta_0+\tau_r)\|_F .
\end{equation}
Derivation for Eq.~\ref{eq:target_properties} is provided in Appendix Sec.~\ref{sec:append_properties}.
The first property makes $T_\alpha$ contained in the principal text subspace, while the second prevents the target from changing the scale of the recipient activations.
Thus, $T_\alpha$ improves self-alignment by pulling the recipient modality toward its own text subspace while preserving the recipient's activation energy, which prevents increasing text-subspace overlap by merely shrinking the modality activation, undesirable for the trace component in $C_{\delta}$.

The role of $\alpha$ is to control the trade-off between preserving the recipient representation within the text subspace and increasing spectral diversity, while the target remains norm-preserving for all $\alpha$.
Specifically, keeping the original singular values $\{\sigma_j\}$ would preserve the recipient geometry but may leave the projected representation spectrally concentrated.
On the other hand, replacing them with a fully isotropic spectrum would maximize diversity but may distort the recipient representation.

\paragraph{Unified self-alignment matching.}
Given the target $T_\alpha$, DCAT improves self-alignment by matching the recipient-modality activations to this target:
\begin{align}
\mathcal{L}_{\mathrm{self}}(\tau) 
&= \underbrace{\|X_{\mathrm{mod}}(\theta_0 + \tau) P_k^\perp\|_F^2}_{\text{\textbf{Encouraging SEO}}} 
+ \underbrace{\|X_{\mathrm{mod}}(\theta_0 + \tau)P_k-T_\alpha\|_F^2}_{\text{\textbf{Encouraging SD}}} \nonumber \\
&= \|X_{\mathrm{mod}}(\theta_0 + \tau)-T_\alpha\|_F^2,
\label{eq:alignment_loss}
\end{align}
where the equality follows from the orthogonality of $P_k$ (derived in Appendix Sec.~\ref{sec:append_align}).
Note that the first term reduces activation energy outside the principal text subspace, while the second shapes the projected representation within that subspace.
Therefore, once the target $T_\alpha$ is constructed inside the text subspace with a balanced spectrum, $\mathcal{L}_{\mathrm{self}}(\tau)$ simultaneously encourages {SEO} and {SD}.

\paragraph{Closed-form DCAT update.}
Combining the cross-modal alignment improvement term $\mathcal{L}_{\mathrm{cross}}(\tau)$ and the self-alignment matching term $\mathcal{L}_{\mathrm{self}}(\tau)$, the final objective is:
\begin{equation}
\label{eq:DCAT_objective}
\min_{\tau} \; \mathcal{L}_{\mathrm{self}}(\tau) + \eta \cdot\mathcal{L}_{\mathrm{cross}}(\tau).
\end{equation}
That is, $\mathcal{L}_{\mathrm{cross}}(\tau)$ transfers donor-induced cross-modal alignment directions while preserving the recipient model's original text behavior, and $\mathcal{L}_{\mathrm{self}}(\tau)$ further reshapes the recipient-modality activations toward the norm-preserving target $T_\alpha$.
The donor enters through the cross-modal alignment improvement term, which biases the closed-form solution toward donor-compatible text-aligned directions without overwriting the recipient modality representation.

Since Eq.~\ref{eq:DCAT_objective} is quadratic in $\tau$, DCAT admits a closed-form layer-wise solution without gradient-based optimization (see Appendix Sec.~\ref{sec:append_closed}).
To avoid mismatch from applying all layer-wise solutions at once, since $T_\alpha$ is constructed using pre-update recipient activations, DCAT progressively merges each layer’s solution and recomputes calibration activations for the next layer using the partially updated model.
This reduces accumulated activation drift while preserving closed-form efficiency.

\paragraph{Scheduling $\alpha$.}
The degree of alignment can vary across layers, as well as across the linear projections within each layer. Accordingly, $\alpha$ should adaptively balance SEO and SD by adjusting the spectrum of the target $T_\alpha$ inside the principal text subspace. To this end, we apply a simple yet effective adaptive schedule for $\alpha$ as follows:
\begin{equation}
\label{eq:alpha_schedule}
\alpha = \mathrm{Sigmoid} \left(\frac{\log \mathrm{SEO}_r}{   \log \mathrm{SD}_r + \epsilon }  \right),
\end{equation}

where $\epsilon > 0$ is for stability.
This schedule makes the target more conservative when the recipient has weak SEO, and more isotropic when the projected representation is spectrally concentrated.
Accordingly, larger $\alpha$ preserves more of the recipient's projected spectrum, while smaller $\alpha$ encourages SD.
To support this formulation, we analyze the effective rank of textual and modality-specific token representations across models and modalities; details are provided in Appendix Sec.~\ref{sec:append_analysis}.

\section{Experiment}
\subsection{Experimental settings}\label{sec:experimental_settings}
\paragraph{Baselines.} Although first observed in model merging, DCAT has a distinct goal: transferring stronger donor alignment to a recipient rather than equipping one MLLM with multiple tasks or modalities. Because both settings combine models at the weight level, we compare DCAT with representative merging baselines: Task Arithmetic~\cite{ilharco2022editing} (TA), TIES-merging~\cite{yadav2023ties} (TIES), ISO-C~\cite{marczak2025no} (ISO), Task Singular Vector~\cite{gargiulo2025task} (TSV), WUDI~\cite{cheng2025whoever}, and OptMerge~\cite{wei2025optmerge}. Experiments cover Audio+Vision and Video+Vision settings on two widely used backbones, Vicuna~\cite{chiang2023vicuna} and LLaMA 3.1~\cite{grattafiori2024llama}. For Vicuna, we use the weights released by OptMerge. For LLaMA 3.1, we use in-the-wild models that share the same architecture but are independently trained with different recipes, demonstrating DCAT’s robustness and practical utility. Full baseline details are in the Appendix Sec.~\ref{sec:append_comparison_detail}.
\vspace{-1em}
\paragraph{Benchmarks.} We extensively compare DCAT’s alignment transfer performance across audio and video benchmarks. For audio, we evaluate on MMAU~\cite{sakshi2024mmau} (1K samples), AIR-Bench Foundation~\cite{yang2024air} (25K samples), MuCho-music~\cite{weck2024muchomusic} (1.2K samples), Clotho-AQA~\cite{lipping2022clotho} (1.4K samples), and VocalSound~\cite{gong_vocalsound} (3.6K samples). For video, we evaluate on Video-MME~\cite{fu2025video} (2.7K samples), TempCompass~\cite{liu2024tempcompass} (5.5K samples), MVBench~\cite{li2024MVBench} (4K samples), and MLVU~\cite{zhou2024mlvu} (2.2K samples). Subcategory-level results are provided in the Appendix Sec.~\ref{sec:append_subcategory}.
\vspace{-1em}
\paragraph{Implementation details.} DCAT progressively merges models layer by layer from the first to the last layer. At each layer, our closed-form update is applied to all linear projections, aligning internal representations across computational modules. For calibration,  we use AVQA~\cite{yang2022avqa}, which supports both audio and video: audio tracks calibrate audio MLLMs, and video tracks calibrate video MLLMs, using 256 samples each. Remaining hyperparameters, including the text subspace dimension $k$, cross-modal alignment weight $\eta$, and anchor parameter $\beta$, are detailed in Appendix~\ref{sec:append_implementation}.

\begin{table}[t]
    \caption{\textbf{Quantitative comparison of DCAT.} We evaluate DCAT against various model merging methods across benchmarks, using a vision model as donor and an audio/video model as recipient. \textbf{(Top)} In Audio+Vision, DCAT transfers vision-model alignment to the audio model, achieving gains up to \textbf{100.92\%} (MuCho-Music) over the original. \textbf{(Bottom)} Video+Vision shows a similar trend, with improvements up to \textbf{53.88\%} (MLVU). ($^*$Since our LLaMA 3.1-based audio model does not tune the language backbone, several merging methods reduce to TA, yielding identical outputs.)}

    \label{tab:quant}
    \centering

    \resizebox{\textwidth}{!}{
    \begin{tabular}{ccl|cccc|cccc|ccc|c}
    \toprule
         & & & \multicolumn{4}{c|}{MMAU} & \multicolumn{4}{c|}{AIR-Bench} & \multirow{2}{*}{MuCho} & \multirow{2}{*}{Clotho} & \multirow{2}{*}{Vocal} & \multirow{2}{*}{Average} \\
         & & & Music & Sound & Speech & Overall & Music & Sound & Speech & Overall & & & & \\
    \midrule
    \multirow{8}{*}{\rotatebox{90}{Vicuna-based}} 
    & \multirow{8}{*}{} 
    & Original & 54.79 & 60.66 & 31.23 & 48.90 & 38.44 & 46.76 & 44.62 & 43.17 & 26.96 & 61.23 & 48.98 & 45.85 \\
    \cmidrule{2-15}
    & & TA~\cite{ilharco2022editing}  &  55.69 & 64.56 & \cellcolor{tabthird}39.34 & 53.20 & 45.64 & 57.48 & 47.03 & 49.09 & 50.04 & \cellcolor{tabthird}81.00 & 44.81 & 55.63 \\
    & & TIES~\cite{yadav2023ties} & 54.49 & 62.46 & 36.04 & 51.00 & 44.86 & 55.92 & 47.51 & 48.68 & 38.42 & 74.97 & \cellcolor{tabsecond}50.93 & 52.80\\
    & & ISO~\cite{marczak2025no} &  \cellcolor{tabthird}58.68 & \cellcolor{tabsecond}66.67 & 34.83 & \cellcolor{tabthird}53.40 & 45.82 & \cellcolor{tabfirst}60.77 & 46.72 & 49.79 & \cellcolor{tabsecond}52.82 & 78.57 & 44.31 & 55.78\\
    & & TSV~\cite{gargiulo2025task} &  55.99 & \cellcolor{tabsecond}66.67 & 35.44 & 52.70 & \cellcolor{tabsecond}47.03 & \cellcolor{tabthird}58.51 & \cellcolor{tabthird}48.69 & \cellcolor{tabsecond}50.51 & \cellcolor{tabthird}50.80 & \cellcolor{tabsecond}81.21 & 47.54 & \cellcolor{tabthird}56.55\\
    & & WUDI~\cite{cheng2025whoever} & 55.99 & \cellcolor{tabthird}65.17 & \cellcolor{tabsecond}39.94 & \cellcolor{tabsecond}53.70 & 45.84 & 58.16 & 46.85 & 49.23 & 49.45 & 80.72 & 46.98 & 56.02 \\
    & & OptMerge~\cite{wei2025optmerge} & \cellcolor{tabsecond}59.58 & 64.26 & 35.44 & 53.10 & \cellcolor{tabthird}47.02 & 58.24 & \cellcolor{tabsecond}48.83 & \cellcolor{tabthird}50.50 & 50.21 & 80.24 & \cellcolor{tabthird}49.07 & \cellcolor{tabsecond}56.62 \\
    & & \textbf{DCAT (Ours)} & \cellcolor{tabfirst}60.22 & \cellcolor{tabfirst}69.06 & \cellcolor{tabfirst}41.74 & \cellcolor{tabfirst}56.20 & \cellcolor{tabfirst}48.46 & \cellcolor{tabsecond}59.51 & \cellcolor{tabfirst}49.36 & \cellcolor{tabfirst}52.44 & \cellcolor{tabfirst}54.17 & \cellcolor{tabfirst}82.96 & \cellcolor{tabfirst}51.25 & \cellcolor{tabfirst}59.40 \\
    \midrule
    \multirow{14}{*}{\rotatebox{90}{LLaMA 3.1-based}} 
    & 
    & Original & 52.40 & 57.66 & 56.46 & 55.50 & 33.53 & \cellcolor{tabthird}36.35 & 53.00 & 42.86 & 52.57 & 55.48 & \cellcolor{tabthird}31.02 & 47.49 \\
    \cmidrule{2-15}
    & \multirow{6}{*}{\rotatebox{90}{$\lambda=1.0$}} 
    & TA$^*$  &  48.50 & 54.65 & 54.65 & 52.60 & 35.54 & 32.97 & 46.23 & 39.68 & 49.62 & 57.07 & 27.57 & 45.31 \\
    & & TIES & 52.10 & 56.16 & \cellcolor{tabfirst}59.76 & 56.00 & \cellcolor{tabsecond}37.43 & 36.11 & 49.39 & 42.43 & \cellcolor{tabthird}54.93 & \cellcolor{tabthird}57.56 & 30.49 & 48.28 \\
    & & ISO & 55.39 & 57.96 & \cellcolor{tabsecond}59.46 & \cellcolor{tabsecond}57.60 & \cellcolor{tabthird}36.15 & 34.69 & 47.01 & 40.63 & \cellcolor{tabsecond}55.94 & \cellcolor{tabsecond}58.67 & 29.35 & 48.44 \\
    & & TSV$^*$ & 48.50 & 54.65 & 54.65 & 52.60 & 35.54 & 32.97 & 46.23 & 39.68 & 49.62 & 57.07 & 27.57 & 45.31 \\
    & & WUDI & 49.10 & 55.56 & 56.16 & 53.60 & 35.87 & 32.53 & 45.97 & 39.56 & 49.96 & 57.28 & 26.90 & 45.46\\
    & & OptMerge$^*$ & 48.50 & 54.65 & 54.65 & 52.60 & 35.54 & 32.97 & 46.23 & 39.68 & 49.62 & 57.07 & 27.57 & 45.31 \\
    \cmidrule{2-15}
    & \multirow{6}{*}{\rotatebox{90}{$\lambda=0.2$}} 
    & TA$^*$  & \cellcolor{tabthird}55.99 & \cellcolor{tabsecond}58.56 & \cellcolor{tabthird}57.06 & \cellcolor{tabthird}57.20 & 35.73 & 35.57 & 51.75 & 42.81 & 54.51 & \cellcolor{tabthird}57.56 & \cellcolor{tabsecond}31.22 & \cellcolor{tabsecond}48.66 \\
    & & TIES & \cellcolor{tabsecond}57.49 & \cellcolor{tabthird}58.26 & \cellcolor{tabthird}57.06 & \cellcolor{tabsecond}57.60 & 35.26 & 36.01 & \cellcolor{tabthird}53.10 & \cellcolor{tabthird}43.37 & 54.59 & 56.66 & 30.58 & \cellcolor{tabthird}48.56\\
    & & ISO & 52.99 & \cellcolor{tabsecond}58.56 & 56.76 & 56.10 & 34.55 & \cellcolor{tabsecond}36.43 & \cellcolor{tabsecond}53.67 & \cellcolor{tabsecond}43.50 & 53.24 & 57.49    & 30.19 & 48.10 \\
    & & TSV$^*$ & \cellcolor{tabthird}55.99 & \cellcolor{tabsecond}58.56 & \cellcolor{tabthird}57.06 & \cellcolor{tabthird}57.20 & 35.73 & 35.57 & 51.75 & 42.81 & 54.51 & \cellcolor{tabthird}57.56 & \cellcolor{tabsecond}31.22 & \cellcolor{tabsecond}48.66\\
    & & WUDI & 54.49 & 57.06 & 55.56 & 55.70 & 35.49 & 35.45 & 52.14 & 42.88 & 52.74 & 57.49 & 29.91 & 47.74 \\
    & & OptMerge$^*$ & \cellcolor{tabthird}55.99 & \cellcolor{tabsecond}58.56 & \cellcolor{tabthird}57.06 & \cellcolor{tabthird}57.20 & 35.73 & 35.57 & 51.75 & 42.81 & 54.51 & \cellcolor{tabthird}57.56 & \cellcolor{tabsecond}31.22 & \cellcolor{tabsecond}48.66\\
    \cmidrule{2-15}
    & 
    & \textbf{DCAT (Ours)} & \cellcolor{tabfirst}58.56 & \cellcolor{tabfirst}59.16 & \cellcolor{tabfirst}59.76 & \cellcolor{tabfirst}59.16 & \cellcolor{tabfirst}37.96 & \cellcolor{tabfirst}37.28 & \cellcolor{tabfirst}54.25 & \cellcolor{tabfirst}44.93 & \cellcolor{tabfirst}58.30 & \cellcolor{tabfirst}61.56 & \cellcolor{tabfirst}31.94 & \cellcolor{tabfirst}51.18\\    
    \bottomrule
    \end{tabular}
    }

    \resizebox{\textwidth}{!}{
    \begin{tabular}{ccl|cccc|cccccc|cc|c}
    \toprule
         & & & \multicolumn{4}{c|}{Video-MME} & \multicolumn{6}{c|}{TempCompass} & \multirow{2}{*}{MVBench} & \multirow{2}{*}{MLVU} & \multirow{2}{*}{Average} \\
         & & & Short & Medium & Long & Overall & Action & Attr. & Dir. & Order & Speed & Overall & & & \\
    \midrule
    \multirow{8}{*}{\rotatebox{90}{Vicuna-based}} 
    & \multirow{8}{*}{} 
    & Original & 39.89 & 34.22 & 30.89 & 35.00 & 78.48 & 54.00 & \cellcolor{tabsecond}48.53 & 47.56 & 45.85 & 55.46 & 38.48 & 32.59 & 40.38 \\
    \cmidrule{2-16}
    & & TA~\cite{ilharco2022editing}  & \cellcolor{tabthird}44.67 & 34.78 & 31.89 & 37.11 & 78.16 & 53.13 & 46.89 & \cellcolor{tabthird}49.09 & 46.99 & 55.38 & 43.68 & 44.84 & 45.25 \\
    & & TIES~\cite{yadav2023ties} & 31.44 & 27.78 & 25.67 & 28.30 & 76.53 & 50.78 & 46.45 & 47.66 & \cellcolor{tabsecond}48.82 & 54.62 & 32.78 & 33.15 & 37.21 \\
    & & ISO~\cite{marczak2025no} & 43.67 & \cellcolor{tabsecond}38.22 & 33.44 & \cellcolor{tabthird}38.44 & 79.38 & 53.81 & 46.45 & 46.44 & 47.77 & 55.38 & 43.00 & \cellcolor{tabsecond}45.31 & 45.53 \\
    & & TSV~\cite{gargiulo2025task} & \cellcolor{tabsecond}46.44 & 36.67 & \cellcolor{tabsecond}34.00 & \cellcolor{tabsecond}39.04 & \cellcolor{tabsecond}82.72 & \cellcolor{tabsecond}55.76 & \cellcolor{tabthird}47.75 & \cellcolor{tabsecond}49.29 & 47.69 & \cellcolor{tabsecond}57.24 & \cellcolor{tabthird}44.58 & \cellcolor{tabthird}45.18 & \cellcolor{tabsecond}46.51\\
    & & WUDI~\cite{cheng2025whoever} & 44.56 & 34.22 & 30.89 & 36.56 & 76.94 & 53.71 & 46.11 & 48.68 & 46.72 & 54.93 & 43.30 & 44.38 & 44.79\\
    & & OptMerge~\cite{wei2025optmerge} & 44.33 & \cellcolor{tabthird}37.00 & \cellcolor{tabthird}33.56 & 38.30 & \cellcolor{tabthird}80.68 & \cellcolor{tabthird}55.57 & 47.40 & 47.26 & \cellcolor{tabthird}48.38 & \cellcolor{tabthird}56.47 & \cellcolor{tabsecond}44.83 & 43.29 & \cellcolor{tabthird}45.72 \\
    & & \textbf{DCAT (Ours)} & \cellcolor{tabfirst}47.96 & \cellcolor{tabfirst}40.46 & \cellcolor{tabfirst}36.17 & \cellcolor{tabfirst}41.53 & \cellcolor{tabfirst}83.80 & \cellcolor{tabfirst}58.42 & \cellcolor{tabfirst}50.98 & \cellcolor{tabfirst}50.83 & \cellcolor{tabfirst}53.52 & \cellcolor{tabfirst}59.71 & \cellcolor{tabfirst}47.65 & \cellcolor{tabfirst}46.34 & \cellcolor{tabfirst}48.81 \\
    \midrule
    \multirow{14}{*}{\rotatebox{90}{LLaMA 3.1-based}} 
    & 
    & Original & 47.33 & 35.44 & 31.78 & 38.19 & 79.87 & 44.53 & 45.33 & 42.28 & 40.44 & 51.28 & 41.00 & 31.79 & 40.57 \\
    \cmidrule{2-16}
    & \multirow{6}{*}{\rotatebox{90}{$\lambda=1.0$}} 
    & TA  &  48.22 & 36.33 & 32.11 & 38.89 & 77.83 & 49.51 & 43.17 & 45.63 & 46.29 & 53.11 & 43.88 & 45.22 & 45.28 \\
    & & TIES & 50.78 & 38.67 & 35.00 & 41.48 & 74.74 & 44.63 & 41.70 & 37.40 & 34.06 & 47.22 & 46.88 & 47.12 & 45.68 \\
    & & ISO & 50.67 & 38.67 & 33.33 & 40.89 & 44.01 & 32.03 & 26.56 & 19.41 & 20.17 & 28.85 & 46.03 & 45.28 & 40.26 \\
    & & TSV & 51.78 & \cellcolor{tabthird}39.11 & 34.67 & 41.85 & 83.29 & \cellcolor{tabthird}52.83 & 46.63 & \cellcolor{tabthird}47.26 & 50.31 & 56.77 & \cellcolor{tabthird}47.00 & \cellcolor{tabthird}48.08 & \cellcolor{tabsecond}48.43 \\
    & & WUDI & 50.78 & 36.67 & 33.00 & 40.15 & 19.48 & 11.91 & 10.73 & 3.86 & 5.24 & 10.53 & 45.35 & 47.51 & 35.89\\
    & & OptMerge & \cellcolor{tabthird}52.44 & \cellcolor{tabsecond}39.89 & 36.22 & \cellcolor{tabthird}42.85 & 5.79 & 4.00 & 4.33 & 1.93 & 1.57 & 3.59 & \cellcolor{tabsecond}47.23 & \cellcolor{tabsecond}49.24 & 35.73 \\
    \cmidrule{2-16}
    & \multirow{6}{*}{\rotatebox{90}{$\lambda=0.2$}} 
    & TA  & \cellcolor{tabsecond}52.56 & 38.11 & \cellcolor{tabthird}38.11 & \cellcolor{tabsecond}42.93 & 63.49 & 38.09 & 36.33 & 35.87 & 39.04 & 43.15 & 43.18 & 40.94 & 42.55 \\
    & & TIES & 46.56 & 37.33 & 35.78 & 39.89 & 65.53 & 37.79 & 37.20 & 36.18 & 44.54 & 44.92 & 45.43 & 38.58 & 42.21 \\
    & & ISO & 45.67 & 36.44 & 36.44 & 39.52 & 83.21 & 52.34 & \cellcolor{tabthird}47.23 & 46.85 & 52.58 & 57.19 & 44.80 & 44.48 & 46.50 \\
    & & TSV & 49.22 & 37.67 & 37.44 & 41.44 & 79.05 & 47.56 & 43.43 & 43.90 & 49.17 & 53.36 & 45.65 & 40.42 & 45.22 \\
    & & WUDI & 51.56 & 37.00 & \cellcolor{tabsecond}38.33 & 42.30 & \cellcolor{tabsecond}85.74 & \cellcolor{tabsecond}53.52 & \cellcolor{tabsecond}47.75 & \cellcolor{tabsecond}48.78 & \cellcolor{tabthird}52.66 & \cellcolor{tabsecond}58.44 & 44.55 & 43.81 & \cellcolor{tabthird}47.28 \\
    & & OptMerge & 47.78 & 37.78 & 36.22 & 40.59 & \cellcolor{tabthird}83.94 & 51.86 & 46.71 & 47.05 & \cellcolor{tabsecond}53.10 & \cellcolor{tabthird}57.30 & 46.48 & 41.83 & 46.55 \\
    \cmidrule{2-16}
    & 
    & \textbf{DCAT (Ours)} & \cellcolor{tabfirst}53.67 & \cellcolor{tabfirst}40.33 & \cellcolor{tabfirst}39.33 & \cellcolor{tabfirst}44.44 & \cellcolor{tabfirst}86.37 & \cellcolor{tabfirst}55.64 & \cellcolor{tabfirst}50.47 & \cellcolor{tabfirst}50.19 & \cellcolor{tabfirst}56.02 & \cellcolor{tabfirst}59.88 & \cellcolor{tabfirst}51.18 & \cellcolor{tabfirst}50.15 & \cellcolor{tabfirst}51.41\\
    \bottomrule
    \end{tabular}
    }
\end{table}

\subsection{Quantitative comparison}\label{sec:quanti}
To evaluate how effectively DCAT boosts modality performance, we compare it against various model merging methods. The Audio+Vision results are shown in the top of Tab.~\ref{tab:quant}, and the Video+Vision results in the bottom. For the LLaMA-based model, the default merging coefficient $\lambda=1.0$ used by most methods consistently degrades performance; for a fair comparison, we therefore additionally report each method's performance at $\lambda=0.2$, which yields its best results. On audio, DCAT improves over the baseline by 7.62\% on comprehensive benchmarks such as MMAU, and by up to 100.92\% on MuCho-Music. On video, DCAT delivers gains of 16.37\% on Video-MME and 23.83\% on MVBench. Together with our analysis, these results demonstrate DCAT's effectiveness.

\begin{figure}
    \centering
    \includegraphics[width=\linewidth]{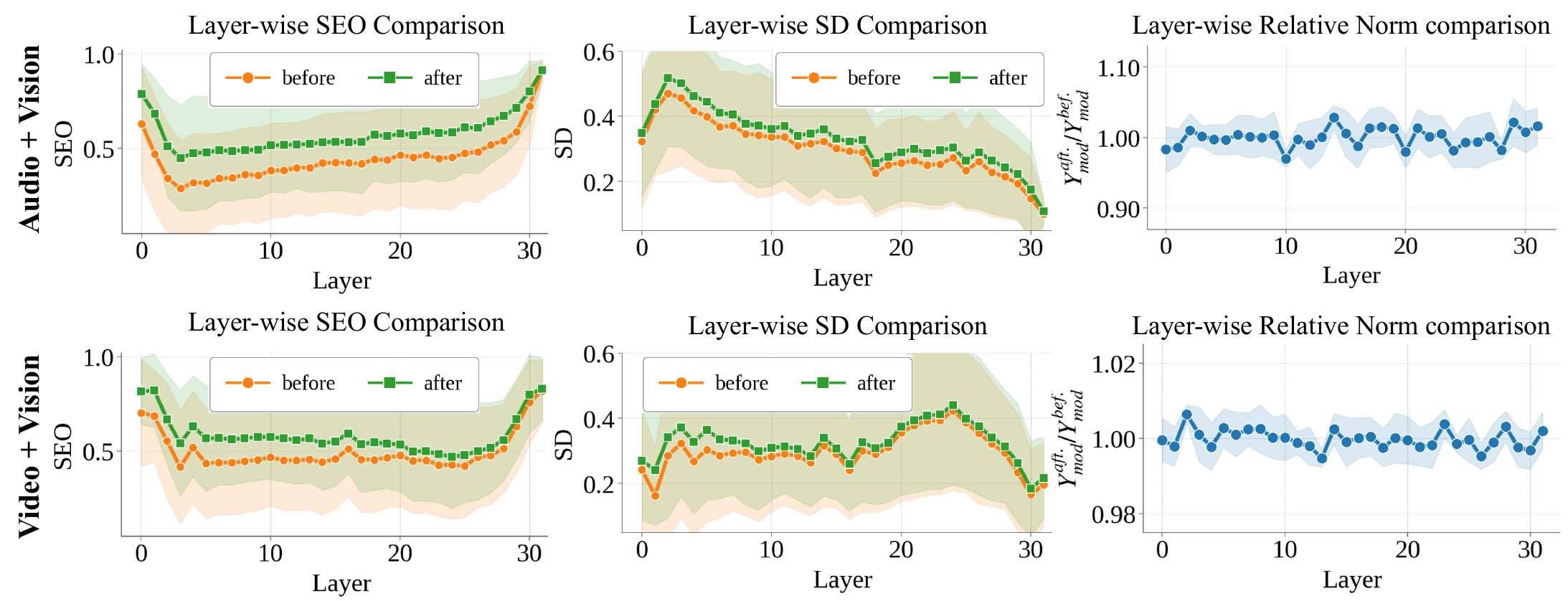}
    \caption{\textbf{SEO, SD and token norm ratio before and after applying DCAT.} We compare SEO, SD, and the norm ratio of modality tokens before and after applying DCAT. As depicted in first and second column, SEO and SD increase to a statistically significant degree ($p < 0.05$), while the relative norm remains close to 1.0, confirming that our DCAT formulation behaves as intended.}
    \label{fig:bef_aft_DCAT}
\end{figure}

\subsection{Does DCAT increase SEO and SD? Yes.}
We theoretically showed that increasing SEO and SD can raise a lower bound on the MI, and on this basis we proposed DCAT. To verify that DCAT indeed increases SEO and SD in practice, we measured both quantities at every layer of each Vicuna-based model under the Audio+Vision and Video+Vision settings. In addition, to check whether the first term of $C_{\delta}$ in Theorem~\ref{thm:lower_bound}, $\frac{k}{2}\log(\mathrm{tr}(\Sigma_y/k))$, is preserved, we measured the token norm ratio $\|Y_\text{mod}^\text{aft.}\|_F/\|Y_\text{mod}^\text{bef.}\|_F$ between before and after applying DCAT, while the second term $\det(\bar{\Sigma}_{z|x})$ is intractable to estimate directly. The results, presented in Fig.~\ref{fig:bef_aft_DCAT}, show that DCAT yields a significant increase in both SEO and SD while keeping the norm ratio essentially unchanged, suggesting that DCAT successfully raises the lower bound on the MI.

\subsection{Ablation study}
To assess the contribution of each component of DCAT, we conduct an ablation study summarized in Tab.~\ref{tab:ablation}. Experiments use Vicuna-based models: the audio+vision setting employs the full MMAU benchmark, while the video+vision setting uses 900 randomly sampled examples from Video-MME. A2--A5 ablate the components of DCAT's objective and the adaptive $\alpha$ schedule and the results show that each component contributes meaningfully to performance and confirm both the validity of design choices. Further ablation study including hyperparameter sweep, calibration dataset size and projection dimension $k$ is provided in Appendix Sec.~\ref{sec:append_ablation}.

\begin{table}[]
    \caption{\textbf{Ablation study on DCAT.} We ablate the individual components of DCAT. (A2--A5) isolate the contribution of each term in the DCAT objective by either dropping a term or fixing the $\alpha$ scheduler, while (A6--A9) examine the sensitivity of DCAT to its key hyperparameters.}
    \label{tab:ablation}
    \centering
    \resizebox{\textwidth}{!}{%
    \begin{tabular}{cl|cccc|cccc}
    \toprule
    \multirow{2}{*}{ID} & \multirow{2}{*}{Study subject} & \multicolumn{4}{c|}{MMAU} & \multicolumn{4}{c}{Video-MME} \\
    & & Music & Sound & Speech & Overall & Short & Medium & Long & Overall \\
    \midrule
    A0 & Original & 54.79 & 60.66 & 31.23 & 48.90 & 38.33 & 33.67 & 30.67 & 34.22 \\
    A1 & Base (DCAT) & 60.22 & 69.06 & 41.74 & 56.20 & 46.33 & 39.67 & 34.33 & 40.11 \\
    \midrule
    A2  & w/o SEO encouraging term in $\mathcal{L}_{\mathrm{self}}(\tau)$ & 57.49 & 66.97 & 34.83 & 53.10 & 45.33 & 34.00 & 30.33 & 36.56 \\
    A3  & w/o SD encouraging term in $\mathcal{L}_{\mathrm{self}}(\tau)$ & 55.09 & 64.26 & 34.53 & 51.30 & 35.67 & 25.00 & 25.00 & 28.56 \\
    A4  & w/o $\mathcal{L}_{\mathrm{cross}}(\tau)$ & 25.45 & 27.63 & 24.62 & 25.90 & 3.33 & 4.67 & 2.67 & 3.56 \\
    A5  & Fixed $\alpha$ scheduling & 55.99 & 61.86 & 36.34 & 51.40 & 36.67 & 31.33 & 27.67 & 31.89 \\
    \bottomrule
    \end{tabular}
    }
\end{table}

\section{Discussion}\label{sec:discussion}
As shown in Fig.~\ref{fig:correlation}, we consistently observe that vision models suffer a performance drop when merging is applied, and this trend holds across multiple models and architectures. Accordingly, most of our experiments employ a vision model as the donor and MLLMs of other modalities as recipients for alignment transfer. We offer the following conjecture as to why the donor has consistently been a vision model: because visual data is easier to obtain and far more abundant than audio or video data, existing vision MLLMs are likely to be more thoroughly aligned than their counterparts in other modalities. Under this hypothesis, DCAT is expected to leverage models trained on common modalities to enhance the performance of MLLMs built for relatively rare and data-scarce modalities.

\section{Conclusion}
In this paper, we report the phenomenon that inter-modality model merging can improve performance, and we interpret this effect from a Mutual Information (MI) perspective by deriving a lower bound on the MI and proposing a monotonic Alignment metric. An extensive empirical and statistical analysis demonstrates that this Alignment exhibits a strong correlation with downstream performance. Building on these theoretical and empirical findings, we propose Directional Cross-Modal Alignment Transfer (DCAT), which transfers alignment from a donor modality model to a recipient modality model in closed form. Across multiple audio and video MLLMs, we validate DCAT on a wide range of benchmarks, achieving average relative improvements of 29.55\% on audio and 24.92\% on video over the original models, thereby substantiating the soundness and the effectiveness of DCAT.

\section*{Acknowledgment}
This work was supported by Institute of Information \& communications Technology Planning \& Evaluation (IITP) grants funded by the Korea government(MSIT) [No.RS-2021-II211343, Artificial Intelligence Graduate School Program (Seoul National University), No.RS-2025-02314125, Effective Human-Machine Teaming With Multimodal Hazy Oracle Models], the National Research Foundation of Korea(NRF) grant funded by the Korea government(MSIT) (No. RS-2025-02263628), Electronics and Telecommunications Research Institute (ETRI) grant funded by the Korean government [26CR1100, Development of molecular endoscopy for real-time precision tumor analysis], and Korea Health Industry Development Institute(RS-2024-00439677). Also, the authors acknowledged the financial support from the BK21 FOUR program of the Education and Research Program for Future ICT Pioneers, Seoul National University.

{
    \small
    \bibliographystyle{plain}
    \bibliography{main}
}

\newpage
\appendix

\renewcommand{\thefigure}{A\arabic{figure}}
\renewcommand{\thetable}{A\arabic{table}}
\renewcommand{\theequation}{A\arabic{equation}}
\setcounter{section}{0}
\setcounter{figure}{0}
\setcounter{table}{0}
\setcounter{lemma}{0}
\setcounter{theorem}{0}
\setcounter{equation}{0}

\section{Code}\label{sec:append_code}
We provide the code implementation of DCAT for reproducibility. Please refer to \href{https://github.com/Hyun1A/DCAT}{this link} for environment setup instructions and execution guidelines.

\section{Proof}\label{sec:append_proof}
\subsection{Proof of Theorem~\ref{thm:lower_bound}}
\label{sec:append_thm_proof}
We first derive two auxiliary lemmas that connect the definitions of SEO and SD
to the covariance quantities used in the information-theoretic lower bound.

\begin{lemma}[SEO bounded by projected covariance ratio]
\label{lem:seo_from_main}
Let $Y_{\mathrm{mod}}\in\mathbb{R}^{N_m\times d}$ be the modality-token
activation matrix, and let $U_k\in\mathbb{R}^{k\times d}$ contain the top-$k$
right singular vectors of the centered text-token activations. Define
\begin{equation}
    \mu_y
    :=
    \frac{1}{N_m}Y_{\mathrm{mod}}^\top \mathbf{1}
    \in \mathbb{R}^{d},
    \qquad
    \widetilde Y_{\mathrm{mod}}
    :=
    Y_{\mathrm{mod}}-\mathbf{1}\mu_y^\top .
\end{equation}
Assume that the mean activation is orthogonal to the text principal subspace\footnote{Our empirical validation on LLaMA-3.1 indicates $\|\mu_y U_k^\top\|_2/\|\mu_y\|_2$ is $0.09726\pm 0.02166$ which is sufficiently small.}:
\begin{equation}
\label{eq:mean_orthogonal_to_text_subspace}
    \mu_y U_k^\top = 0 .
\end{equation}
Define
\begin{equation}
    Z:=Y_{\mathrm{mod}}U_k^\top,
    \qquad
    \widetilde Z:=\widetilde Y_{\mathrm{mod}}U_k^\top ,
\end{equation}
and define the unbiased covariance matrices
\begin{equation}
    \Sigma_y
    :=
    \frac{1}{N_m-1}
    \widetilde Y_{\mathrm{mod}}^\top \widetilde Y_{\mathrm{mod}},
    \qquad
    \Sigma_Z
    :=
    \frac{1}{N_m-1}
    \widetilde Z^\top \widetilde Z .
\end{equation}
The Subspace Energy Overlap is defined using the uncentered activations:
\begin{equation}
\label{eq:seo_main_app}
    \mathrm{SEO}
    =
    \frac{\|Y_{\mathrm{mod}}U_k^\top\|_F^2}
         {\|Y_{\mathrm{mod}}\|_F^2}.
\end{equation}
If $\operatorname{tr}(\Sigma_y)>0$, then
\begin{equation}
\label{eq:seo_trace_ratio_app}
    \mathrm{SEO}
    \le
    \frac{\operatorname{tr}(\Sigma_Z)}
         {\operatorname{tr}(\Sigma_y)}.
\end{equation}
\end{lemma}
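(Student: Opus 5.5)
The plan is to write both the numerator and the denominator of $\mathrm{SEO}$ in terms of the centered matrix $\widetilde Y_{\mathrm{mod}}$ and the mean $\mu_y$. The orthogonality assumption \eqref{eq:mean_orthogonal_to_text_subspace} should make the mean contribution vanish in the numerator. In the denominator the mean contributes only a nonnegative term, which can be dropped to get the inequality.

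\textbf{Step 1 (numerator).} Substitute $Y_{\mathrm{mod}}=\widetilde Y_{\mathrm{mod}}+\mathbf{1}\mu_y^\top$ to get
\[
Z=Y_{\mathrm{mod}}U_k^\top=\widetilde Y_{\mathrm{mod}}U_k^\top+\mathbf{1}\,(\mu_y^\top U_k^\top)=\widetilde Z .
\]
The second term is zero by \eqref{eq:mean_orthogonal_to_text_subspace}, reading that condition as "the projected mean $U_k\mu_y$ vanishes". Hence
\[
\|Y_{\mathrm{mod}}U_k^\top\|_F^2=\|\widetilde Z\|_F^2=\operatorname{tr}(\widetilde Z^\top\widetilde Z)=(N_m-1)\operatorname{tr}(\Sigma_Z).
\]
This also shows that $Z$ is already centered, so $\Sigma_Z$ is the covariance of $Z$ itself, consistent with the definition of SD.

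\textbf{Step 2 (denominator).} Expand
\[
\|Y_{\mathrm{mod}}\|_F^2=\|\widetilde Y_{\mathrm{mod}}\|_F^2+2\operatorname{tr}\bigl(\widetilde Y_{\mathrm{mod}}^\top\mathbf{1}\mu_y^\top\bigr)+N_m\|\mu_y\|_2^2 .
\]
The cross term vanishes because $\widetilde Y_{\mathrm{mod}}^\top\mathbf{1}=Y_{\mathrm{mod}}^\top\mathbf{1}-N_m\mu_y=0$ by the definition of $\mu_y$. Therefore
\[
\|Y_{\mathrm{mod}}\|_F^2=(N_m-1)\operatorname{tr}(\Sigma_y)+N_m\|\mu_y\|_2^2\ \ge\ (N_m-1)\operatorname{tr}(\Sigma_y)>0,
\]
where positivity follows from $\operatorname{tr}(\Sigma_y)>0$.

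\textbf{Step 3 (combine).} The numerator from Step 1 is nonnegative, and the denominator is bounded below by the positive quantity $(N_m-1)\operatorname{tr}(\Sigma_y)$. Dividing therefore gives
\[
\mathrm{SEO}\le\frac{(N_m-1)\operatorname{tr}(\Sigma_Z)}{(N_m-1)\operatorname{tr}(\Sigma_y)}=\frac{\operatorname{tr}(\Sigma_Z)}{\operatorname{tr}(\Sigma_y)},
\]
which is \eqref{eq:seo_trace_ratio_app}. Equality holds exactly when $\mu_y=0$.

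I expect no real obstacle. The only care needed is the row/column convention in \eqref{eq:mean_orthogonal_to_text_subspace}: the condition must be used as $\mathbf{1}\mu_y^\top U_k^\top=0$, so that the mean drops out of the projected energy. The other point to check is that both cross terms vanish, one by the orthogonality assumption and one by centering.
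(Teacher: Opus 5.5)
Your proposal is correct and matches the paper's proof step for step. The mean term drops out of the projected energy by the orthogonality assumption, centering makes the two components Frobenius-orthogonal so that $\|Y_{\mathrm{mod}}\|_F^2=(N_m-1)\operatorname{tr}(\Sigma_y)+N_m\|\mu_y\|_2^2$, and dropping the nonnegative mean term gives the bound. One small caveat on your side remark: equality also holds whenever $\operatorname{tr}(\Sigma_Z)=0$, since both sides then vanish, so ``exactly when $\mu_y=0$'' is only accurate under $\operatorname{tr}(\Sigma_Z)>0$.
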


\begin{proof}
We decompose the modality activations into their centered component and mean
component:
\begin{equation}
    Y_{\mathrm{mod}}
    =
    \widetilde Y_{\mathrm{mod}}+\mathbf{1}\mu_y^\top .
\end{equation}
Since $\mathbf{1}^\top \widetilde Y_{\mathrm{mod}}=0$, these two components
are orthogonal in Frobenius inner product. Therefore,
\begin{equation}
\label{eq:denominator_decomp}
\begin{aligned}
    \|Y_{\mathrm{mod}}\|_F^2
    &=
    \|\widetilde Y_{\mathrm{mod}}+\mathbf{1}\mu_y^\top\|_F^2 \\
    &=
    \|\widetilde Y_{\mathrm{mod}}\|_F^2
    +
    \|\mathbf{1}\mu_y^\top\|_F^2 \\
    &=
    (N_m-1)\operatorname{tr}(\Sigma_y)
    +
    N_m\|\mu_y\|_2^2 .
\end{aligned}
\end{equation}
Next, consider the projected activations. By the assumption
$\mu_y U_k^\top=0$, we have
\begin{equation}
    \mathbf{1}\mu_y^\top U_k^\top = 0 .
\end{equation}
Hence,
\begin{equation}
\label{eq:numerator_decomp}
\begin{aligned}
    \|Y_{\mathrm{mod}}U_k^\top\|_F^2
    &=
    \|
        \widetilde Y_{\mathrm{mod}}U_k^\top
        +
        \mathbf{1}\mu_y^\top U_k^\top
    \|_F^2 \\
    &=
    \|\widetilde Y_{\mathrm{mod}}U_k^\top\|_F^2 \\
    &=
    \|\widetilde Z\|_F^2 \\
    &=
    (N_m-1)\operatorname{tr}(\Sigma_Z).
\end{aligned}
\end{equation}
Substituting Eq.~\eqref{eq:denominator_decomp} and
Eq.~\eqref{eq:numerator_decomp} into the definition of SEO gives
\begin{equation}
\begin{aligned}
    \mathrm{SEO}
    &=
    \frac{\|Y_{\mathrm{mod}}U_k^\top\|_F^2}
         {\|Y_{\mathrm{mod}}\|_F^2} \\
    &=
    \frac{(N_m-1)\operatorname{tr}(\Sigma_Z)}
         {(N_m-1)\operatorname{tr}(\Sigma_y)
          +
          N_m\|\mu_y\|_2^2}.
\end{aligned}
\end{equation}
Since $N_m\|\mu_y\|_2^2\ge 0$, we have
\begin{equation}
    (N_m-1)\operatorname{tr}(\Sigma_y)
    +
    N_m\|\mu_y\|_2^2
    \ge
    (N_m-1)\operatorname{tr}(\Sigma_y).
\end{equation}
Therefore,
\begin{equation}
\begin{aligned}
    \mathrm{SEO}
    &=
    \frac{(N_m-1)\operatorname{tr}(\Sigma_Z)}
         {(N_m-1)\operatorname{tr}(\Sigma_y)
          +
          N_m\|\mu_y\|_2^2} \\
    &\le
    \frac{(N_m-1)\operatorname{tr}(\Sigma_Z)}
         {(N_m-1)\operatorname{tr}(\Sigma_y)} \\
    &=
    \frac{\operatorname{tr}(\Sigma_Z)}
         {\operatorname{tr}(\Sigma_y)}.
\end{aligned}
\end{equation}
\end{proof}

\begin{lemma}[SD in terms of determinant and trace of $\Sigma_Z$]
\label{lem:sd_from_main}
Let $\Sigma_Z\succ0$ be the empirical covariance matrix of the projected
modality-token activations $Z=Y_{\mathrm{mod}}U_k^\top$, and let
$\lambda_1,\ldots,\lambda_k>0$ denote its eigenvalues. Define
\begin{equation}
    p_j
    =
    \frac{\lambda_j}{\sum_{\ell=1}^{k}\lambda_\ell},
    \qquad
    j=1,\ldots,k.
\end{equation}
The Spectral Diversity is
\begin{equation}
\label{eq:sd_main_app}
    \mathrm{SD}
    =
    \frac{1}{k}
    \exp
    \left(
        -\sum_{j=1}^{k}p_j\log p_j
    \right).
\end{equation}
Define the determinant-normalized spectral volume as
\begin{equation}
\label{eq:sd_det_def_app}
    \mathrm{SD}_{\det}
    :=
    \frac{\det(\Sigma_Z)^{1/k}}
         {\operatorname{tr}(\Sigma_Z)/k}.
\end{equation}
Assume that the projected spectrum satisfies the condition
\begin{equation}
    p_j
    \geq
    \frac{\delta}{k},
    \qquad
    \forall j\in\{1,\ldots,k\},
    \label{eq:anti_collapse}
\end{equation}
for some sufficiently large $ \delta\in(0,1]$. Then
\begin{equation}
\label{eq:sd_det_tau_sd_app}
    \mathrm{SD}_{\det}
    \geq
    \delta\,\mathrm{SD}.
\end{equation}
Equivalently,
\begin{equation}
\label{eq:log_sd_det_tau_sd_app}
    \log\mathrm{SD}_{\det}
    \geq
    \log\mathrm{SD}
    +
    \log\delta.
\end{equation}
\end{lemma}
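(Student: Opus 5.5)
The plan is to express both sides of Eq.~\eqref{eq:sd_det_tau_sd_app} purely in terms of the normalized spectrum $\{p_j\}$, and then compare each with a constant: bound $\mathrm{SD}_{\det}$ below by $\delta$ using the anti-collapse condition Eq.~\eqref{eq:anti_collapse}, and bound $\mathrm{SD}$ above by $1$ using the maximum-entropy property of the uniform distribution.

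First I rewrite $\mathrm{SD}_{\det}$. Because $\Sigma_Z\succ0$, we have $\det(\Sigma_Z)=\prod_{j}\lambda_j$ and $\operatorname{tr}(\Sigma_Z)=\sum_\ell\lambda_\ell=:S>0$. Substituting $\lambda_j=S\,p_j$ makes the factor $S$ cancel between numerator and denominator:
\begin{equation*}
\mathrm{SD}_{\det}=\frac{\bigl(\prod_{j=1}^k S\,p_j\bigr)^{1/k}}{S/k}
= k\Bigl(\prod_{j=1}^k p_j\Bigr)^{1/k}
= k\exp\Bigl(\tfrac1k\textstyle\sum_{j=1}^k\log p_j\Bigr).
\end{equation*}
So $\mathrm{SD}_{\det}$ is $k$ times the geometric mean of the $p_j$, and it is scale-invariant, just like $\mathrm{SD}$.

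Next, Eq.~\eqref{eq:anti_collapse} gives $\log p_j\ge\log(\delta/k)$ for every $j$. The average of the $\log p_j$ is therefore at least $\log(\delta/k)$, and hence $\mathrm{SD}_{\det}\ge k\cdot\delta/k=\delta$.

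On the other side, $\{p_j\}$ is a probability vector on $k$ atoms, so its Shannon entropy satisfies $-\sum_j p_j\log p_j\le\log k$. This is Gibbs' inequality, equivalently Jensen's inequality applied to the concave function $\log$. It follows that $\mathrm{SD}\le \tfrac1k\exp(\log k)=1$.

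Combining the two bounds gives $\mathrm{SD}_{\det}\ge\delta\ge\delta\,\mathrm{SD}$. Taking logarithms, which is valid since all quantities are positive, yields Eq.~\eqref{eq:log_sd_det_tau_sd_app}.

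There is no real obstacle here; the only step that needs care is the cancellation of $\operatorname{tr}(\Sigma_Z)$ that reduces $\mathrm{SD}_{\det}$ to the geometric mean of the $p_j$. The argument also uses $\mathrm{SD}\le1$ somewhat wastefully. A sharper route, if one were wanted, would compare the geometric mean with $\exp(-H)$ via $\sum_j \log p_j \ge k\log(\delta/k)$ directly, but the crude bound above already suffices for the stated inequality.
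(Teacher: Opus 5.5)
Your proof is correct and follows the paper's argument essentially step for step. The paper likewise cancels $\operatorname{tr}(\Sigma_Z)$ to write $\mathrm{SD}_{\det}=\bigl(\prod_{j} k p_j\bigr)^{1/k}$, bounds this below by $\delta$ using the anti-collapse condition, and uses the entropy bound $\mathrm{SD}\le 1$ to conclude $\mathrm{SD}_{\det}\ge\delta\ge\delta\,\mathrm{SD}$ before taking logarithms.
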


\begin{proof}
First, by the entropy upper bound of a discrete distribution over $k$ outcomes,
\begin{equation}
    -\sum_{j=1}^{k}p_j\log p_j
    \leq
    \log k.
\end{equation}
Therefore, from Eq.~\eqref{eq:sd_main_app},
\begin{equation}
    0<\mathrm{SD}\leq1.
\end{equation}

Next, using $\sum_j\lambda_j=\operatorname{tr}(\Sigma_Z)$ and
$p_j=\lambda_j/\sum_{\ell=1}^{k}\lambda_\ell$, we have
\begin{equation}
    \lambda_j
    =
    p_j\operatorname{tr}(\Sigma_Z).
\end{equation}
Thus,
\begin{equation}
\begin{aligned}
    \mathrm{SD}_{\det}
    &=
    \frac{\det(\Sigma_Z)^{1/k}}
         {\operatorname{tr}(\Sigma_Z)/k} \\
    &=
    \frac{
        \left(
            \prod_{j=1}^{k}\lambda_j
        \right)^{1/k}
    }{
        \operatorname{tr}(\Sigma_Z)/k
    } \\
    &=
    \frac{
        \left(
            \prod_{j=1}^{k}
            p_j\operatorname{tr}(\Sigma_Z)
        \right)^{1/k}
    }{
        \operatorname{tr}(\Sigma_Z)/k
    } \\
    &=
    k
    \left(
        \prod_{j=1}^{k}p_j
    \right)^{1/k} \\
    &=
    \left(
        \prod_{j=1}^{k}kp_j
    \right)^{1/k}.
\end{aligned}
\end{equation}
By the condition in Eq.~\eqref{eq:anti_collapse}, we have
$kp_j\geq\delta$ for all $j$. Hence,
\begin{equation}
    \mathrm{SD}_{\det}
    =
    \left(
        \prod_{j=1}^{k}kp_j
    \right)^{1/k}
    \geq
    \delta.
\end{equation}
Since $\mathrm{SD}\leq1$, it follows that
\begin{equation}
    \mathrm{SD}_{\det}
    \geq
    \delta
    \geq
    \delta\,\mathrm{SD}.
\end{equation}
Taking logarithms gives
\begin{equation}
    \log\mathrm{SD}_{\det}
    \geq
    \log\mathrm{SD}
    +
    \log\delta.
\end{equation}
\end{proof}
We empirically validate the use of $\delta$ in Sec.~\ref{sec:empirical_validation_tau}. Based on the results of Lemma~\ref{lem:seo_from_main} and Lemma~\ref{lem:sd_from_main}, we prove the main theorem.
\begin{theorem}[Weighted Alignment-MI Lower Bound]
\label{thm:lower_bound}
Let $z=yU_k^\top$, where $U_k\in\mathbb{R}^{k\times d}$ contains the top-$k$
right singular vectors of the centered text activation matrix. Let
$\Sigma_y=\operatorname{Cov}(y)$ and $\Sigma_Z=\operatorname{Cov}(z)$.
For each text-token representation $x$, let
\begin{equation}
    \Sigma_{z|x}
    :=
    \operatorname{Cov}(z|x),
    \qquad
    \bar{\Sigma}_{z|x}
    :=
    \mathbb{E}_{x}
    \left[
        \Sigma_{z|x}
    \right].
\end{equation}
With assumption in Lemma~\ref{lem:sd_from_main}, we also supposed that $\Sigma_Z\succ0$, $\bar{\Sigma}_{z|x}\succ0$, and:
\begin{enumerate}[label=(A\arabic*)]
    \item $z\sim\mathcal{N}(\mu_Z,\Sigma_Z)$;
    \item $z|x\sim\mathcal{N}(\mu_z(x),\Sigma_{z|x})$;
\end{enumerate}
Then, for any $\gamma\in(0,1)$,
\begin{equation}
    I(y;x)
    \geq
    \kappa(\gamma)
    \log
    \left(
        \mathrm{SEO}^{\gamma}
        \mathrm{SD}^{1-\gamma}
    \right)
    +
    C_{\delta},
    \qquad
    \kappa(\gamma)
    =
    \frac{k}{2\min(\gamma,1-\gamma)},
\end{equation}
where
\begin{equation}
\label{eq:c_tau_app}
    C_{\delta}
    =
    \frac{k}{2}
    \log
    \left(
        \frac{\operatorname{tr}(\Sigma_y)}{k}
    \right)
    -
    \frac{1}{2}
    \log\det(\bar{\Sigma}_{z|x})
    +
    \frac{k}{2}\log\delta.
\end{equation}
\end{theorem}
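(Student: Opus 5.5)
We will bound the mutual information through the projected variable $z=yU_k^\top$, then rewrite the resulting Gaussian entropy difference in terms of SEO and SD using Lemma~\ref{lem:seo_from_main} and Lemma~\ref{lem:sd_from_main}.

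\emph{Step 1 (data processing).} Since $z$ is a deterministic function of $y$, the data processing inequality gives $I(y;x)\ge I(z;x)$. By (A1) and (A2),
\begin{equation*}
I(z;x)=h(z)-h(z|x)=\tfrac12\log\det\Sigma_Z-\mathbb{E}_x\big[\tfrac12\log\det\Sigma_{z|x}\big].
\end{equation*}
The $2\pi e$ constants cancel. Because $\log\det$ is concave on positive definite matrices, Jensen's inequality yields $\mathbb{E}_x[\log\det\Sigma_{z|x}]\le\log\det\bar\Sigma_{z|x}$. Hence
\begin{equation*}
I(y;x)\ge\tfrac12\log\det\Sigma_Z-\tfrac12\log\det\bar\Sigma_{z|x}.
\end{equation*}

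\emph{Step 2 (factor $\det\Sigma_Z$).} By the definition of $\mathrm{SD}_{\det}$,
\begin{equation*}
\tfrac12\log\det\Sigma_Z=\tfrac k2\log\mathrm{SD}_{\det}+\tfrac k2\log\big(\operatorname{tr}(\Sigma_Z)/k\big).
\end{equation*}
Write $\operatorname{tr}(\Sigma_Z)=\big(\operatorname{tr}(\Sigma_Z)/\operatorname{tr}(\Sigma_y)\big)\operatorname{tr}(\Sigma_y)$. Lemma~\ref{lem:seo_from_main} gives $\operatorname{tr}(\Sigma_Z)/\operatorname{tr}(\Sigma_y)\ge\mathrm{SEO}$. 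Lemma~\ref{lem:sd_from_main} gives $\log\mathrm{SD}_{\det}\ge\log\mathrm{SD}+\log\delta$. Together,
\begin{equation*}
I(y;x)\ge\tfrac k2\big(\log\mathrm{SEO}+\log\mathrm{SD}\big)+C_\delta,
\end{equation*}
with $C_\delta$ exactly as in Eq.~\eqref{eq:c_tau_app}. The remaining task is to compare $\tfrac k2(\log\mathrm{SEO}+\log\mathrm{SD})$ with $\kappa(\gamma)\log(\mathrm{SEO}^\gamma\mathrm{SD}^{1-\gamma})$.

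\emph{Step 3 (weighting, the main obstacle).} Set $a=-\log\mathrm{SEO}\ge0$ and $b=-\log\mathrm{SD}\ge0$; both are nonnegative because $\mathrm{SEO}\le1$ and $\mathrm{SD}\le1$. Let $m=\min(\gamma,1-\gamma)$. We need
\begin{equation*}
-\tfrac k2(a+b)\ge-\tfrac{k}{2m}(\gamma a+(1-\gamma)b),
\end{equation*}
which is equivalent to $m(a+b)\le\gamma a+(1-\gamma)b$. This holds termwise, since $m\le\gamma$, $m\le1-\gamma$, and $a,b\ge0$.

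The point to get right is the sign: because the logarithms are nonpositive, inflating the coefficient to $\kappa(\gamma)\ge k/2$ indeed loosens the bound rather than tightening it. The bound is tightest at $\gamma=\tfrac12$, where $\kappa(\tfrac12)\log(\mathrm{SEO}^{1/2}\mathrm{SD}^{1/2})=\tfrac k2(\log\mathrm{SEO}+\log\mathrm{SD})$ recovers Step~2 exactly, consistent with the remark after Theorem~\ref{thm:lower_bound} in the main text.

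A secondary technical point is that Jensen's inequality and Lemma~\ref{lem:seo_from_main} involve population versus empirical covariances. We will treat $\Sigma_y$ and $\Sigma_Z$ consistently as the covariances entering the Gaussian assumptions, and inherit the mean-orthogonality assumption of Lemma~\ref{lem:seo_from_main}.
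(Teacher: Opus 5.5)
Your proposal is correct and follows the paper's proof essentially step for step. Both arguments use data processing, Gaussian entropies and Jensen's inequality on $\log\det$, then factor $\det\Sigma_Z$ through $\mathrm{SD}_{\det}$ and apply Lemma~\ref{lem:seo_from_main} and Lemma~\ref{lem:sd_from_main} to obtain $\tfrac{k}{2}(\log\mathrm{SEO}+\log\mathrm{SD})+C_\delta$, and finish with the termwise comparison against $\min(\gamma,1-\gamma)$; your choice of $a,b\ge 0$ instead of the paper's $a,b\le 0$ is only a sign convention. Your explicit note that the mean-orthogonality hypothesis of Lemma~\ref{lem:seo_from_main} must be carried along is apt, since the paper's proof also relies on it without listing it among the theorem's stated assumptions.
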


\begin{proof}
Since $z=yU_k^\top$ is a deterministic function of $y$, the Markov chain
\begin{equation}
    x \longleftrightarrow y \longrightarrow z
\end{equation}
holds. By the data processing inequality,
\begin{equation}
    I(y;x)\geq I(z;x).
\end{equation}
It therefore suffices to lower-bound $I(z;x)$.

By the definition of mutual information,
\begin{equation}
    I(z;x)=h(z)-h(z|x).
\end{equation}
Since $z\sim\mathcal{N}(\mu_Z,\Sigma_Z)$ and $\Sigma_Z\succ0$, its marginal entropy satisfies:
\begin{equation}
    h(z)
    =
    \frac{1}{2}
    \log\det(2\pi e\,\Sigma_Z).
\end{equation}
Under the conditional Gaussian assumption
$z|x\sim\mathcal{N}(\mu_z(x),\Sigma_{z|x})$, we have
\begin{equation}
    h(z|x)
    =
    \mathbb{E}_x
    \left[
        \frac{1}{2}
        \log\det(2\pi e\,\Sigma_{z|x})
    \right].
\end{equation}
Since $\log\det(\cdot)$ is concave on the positive definite cone, Jensen's
inequality gives
\begin{equation}
    \mathbb{E}_x
    \left[
        \log\det(\Sigma_{z|x})
    \right]
    \leq
    \log\det
    \left(
        \mathbb{E}_x[\Sigma_{z|x}]
    \right)
    =
    \log\det(\bar{\Sigma}_{z|x}).
\end{equation}
Therefore,
\begin{equation}
    h(z|x)
    \leq
    \frac{1}{2}
    \log\det(2\pi e\,\bar{\Sigma}_{z|x}).
\end{equation}
Combining the preceding displays yields
\begin{equation}
\begin{aligned}
    I(z;x)
    &=
    h(z)-h(z|x) \\
    &\geq
    \frac{1}{2}
    \log\det(2\pi e\,\Sigma_Z)
    -
    \frac{1}{2}
    \log\det(2\pi e\,\bar{\Sigma}_{z|x}) \\
    &=
    \frac{1}{2}
    \log\det(\Sigma_Z)
    -
    \frac{1}{2}
    \log\det(\bar{\Sigma}_{z|x}).
\end{aligned}
\end{equation}

We now express the determinant term using SEO and SD. By the definition of
$\mathrm{SD}_{\det}$,
\begin{equation}
    \det(\Sigma_Z)^{1/k}
    =
    \frac{\operatorname{tr}(\Sigma_Z)}{k}
    \mathrm{SD}_{\det}.
\end{equation}
By Lemma~\ref{lem:seo_from_main},
\begin{equation}
    \frac{\operatorname{tr}(\Sigma_Z)}{k}
    \ge
    \frac{\operatorname{tr}(\Sigma_y)}{k}
    \mathrm{SEO}.
\end{equation}
Hence,
\begin{equation}
\begin{aligned}
    \log\det(\Sigma_Z)
    &=
    k\log
    \left(
        \det(\Sigma_Z)^{1/k}
    \right) \\
    &=
    k\log
    \left(
        \frac{\operatorname{tr}(\Sigma_Z)}{k}
        \mathrm{SD}_{\det}
    \right) \\
    &\ge
    k\log
    \left(
        \frac{\operatorname{tr}(\Sigma_y)}{k}
        \mathrm{SEO} \cdot
        \mathrm{SD}_{\det}
    \right).
\end{aligned}
\end{equation}
By Lemma~\ref{lem:sd_from_main}, the condition in Eq.~\ref{eq:anti_collapse} implies
\begin{equation}
    \log\mathrm{SD}_{\det}
    \geq
    \log\mathrm{SD}
    +
    \log\delta.
\end{equation}
Therefore,
\begin{equation}
\begin{aligned}
    \log\det(\Sigma_Z)
    &\geq
    k\log
    \left(
        \frac{\operatorname{tr}(\Sigma_y)}{k}
    \right)
    +
    k\log\mathrm{SEO}
    +
    k\log\mathrm{SD}
    +
    k\log\delta.
\end{aligned}
\end{equation}
Substituting this into the mutual-information lower bound gives
\begin{equation}
\begin{aligned}
    I(z;x)
    &\geq
    \frac{k}{2}\log\mathrm{SEO}
    +
    \frac{k}{2}\log\mathrm{SD}
    +
    \frac{k}{2}
    \log
    \left(
        \frac{\operatorname{tr}(\Sigma_y)}{k}
    \right)
    -
    \frac{1}{2}
    \log\det(\bar{\Sigma}_{z|x})
    +
    \frac{k}{2}\log\delta \\
    &=
    \frac{k}{2}
    \left(
        \log\mathrm{SEO}
        +
        \log\mathrm{SD}
    \right)
    +
    C_{\delta}.
\end{aligned}
\end{equation}

Finally, we convert the unweighted log-product into the weighted alignment
score. Let
\begin{equation}
    a:=\log\mathrm{SEO},
    \qquad
    b:=\log\mathrm{SD}.
\end{equation}
Since $0<\mathrm{SEO}\leq1$ and $0<\mathrm{SD}\leq1$, we have $a\leq0$ and
$b\leq0$. For any $\gamma\in(0,1)$, let
\begin{equation}
    m_\gamma:=\min(\gamma,1-\gamma).
\end{equation}
Since $\gamma\geq m_\gamma$ and $1-\gamma\geq m_\gamma$, and since $a,b\leq0$,
we have
\begin{equation}
    \gamma a\leq m_\gamma a,
    \qquad
    (1-\gamma)b\leq m_\gamma b.
\end{equation}
Adding both inequalities gives
\begin{equation}
    \gamma a+(1-\gamma)b
    \leq
    m_\gamma(a+b).
\end{equation}
Equivalently,
\begin{equation}
    a+b
    \geq
    \frac{1}{m_\gamma}
    \left(
        \gamma a+(1-\gamma)b
    \right).
\end{equation}
Since
\begin{equation}
    \gamma a+(1-\gamma)b
    =
    \log
    \left(
        \mathrm{SEO}^{\gamma}
        \mathrm{SD}^{1-\gamma}
    \right),
\end{equation}
we obtain
\begin{equation}
    \log\mathrm{SEO}
    +
    \log\mathrm{SD}
    \geq
    \frac{1}{\min(\gamma,1-\gamma)}
    \log
    \left(
        \mathrm{SEO}^{\gamma}
        \mathrm{SD}^{1-\gamma}
    \right).
\end{equation}
Consequently,
\begin{equation}
\begin{aligned}
    I(z;x)
    &\geq
    \frac{k}{2\min(\gamma,1-\gamma)}
    \log
    \left(
        \mathrm{SEO}^{\gamma}
        \mathrm{SD}^{1-\gamma}
    \right)
    +
    C_{\delta}.
\end{aligned}
\end{equation}
Since $I(y;x)\geq I(z;x)$, we conclude that
\begin{equation}
    I(y;x)
    \geq
    \frac{k}{2\min(\gamma,1-\gamma)}
    \log
    \left(
        \mathrm{SEO}^{\gamma}
        \mathrm{SD}^{1-\gamma}
    \right)
    +
    C_{\delta}.
\end{equation}
This completes the proof.
\end{proof}

\subsection{Derivation of Eq.~\ref{eq:target_properties}}\label{sec:append_properties}
In the main text, for the top-$k$ right singular vectors $U_k$ of the text activations, we defined the corresponding orthogonal projector and its orthogonal complement as $P_k\,(=U_k^\top U_k)$ and $P_k^\perp\,(=I-P_k)$, respectively. We further defined the uniform singular value that preserves the total output energy:
\begin{equation}
    \rho \;=\; \frac{\|X_{\mathrm{mod}}(\theta_0+\tau_r)\|_F}{\sqrt{k}}. \nonumber 
\end{equation}
In addition, we used the following target for the modality-token activations after the weight update:
\begin{equation}
T_\alpha = c_\alpha \cdot A\;\mathrm{diag}\left(\sigma_j^\alpha \rho^{1-\alpha}\right)_{j=1}^{k}B^\top U_k,
\qquad
c_\alpha = \frac{\|X_{\mathrm{mod}}(\theta_0+\tau_r) \|_F}{\left(\sum_{j=1}^{k}\sigma_j^{2\alpha}\rho^{2(1-\alpha)}\right)^{1/2}} \nonumber
\end{equation}
We now derive the following two properties of $T_\alpha$:
\begin{equation}
T_\alpha P_k^\perp=0,
\qquad
\|T_\alpha\|_F=\|X_{\mathrm{mod}}(\theta_0+\tau_r)\|_F. \nonumber
\end{equation}
For the first one, $T_\alpha$ can be written in the form $T_\alpha = Q U_k$ for some matrix $Q$. Hence, we obtain:
\begin{align}
T_\alpha P_k^\perp 
&= Q U_k (I - P_k) \nonumber \\
&= Q U_k (I - U_k^\top U_k) \nonumber  \\
&= Q U_k - Q U_k U_k^\top U_k \nonumber  \\
&= Q U_k - Q U_k \nonumber  \\
&= 0, \nonumber 
\end{align}
since $U_k U_k^\top = I_k$. For the second one, we denote $D_{\alpha}$ as:
\begin{equation}
    D_\alpha \;:=\; \mathrm{diag}\!\left(\sigma_j^{\alpha}\,\rho^{1-\alpha}\right)_{j=1}^{k}. \nonumber 
\end{equation}
Then, starting from the definition of $T_\alpha$:
\begin{equation}
    \|T_\alpha\|_F^2
    \;=\;
    c_\alpha^2 \;\cdot\; \bigl\|A\,D_\alpha\,B^\top\,U_k\bigr\|_F^2. \nonumber 
\end{equation}
 
Expanding the Frobenius norm via $\|M\|_F^2 = \mathrm{tr}(M^\top M)$:
\begin{align}
    \bigl\|A\,D_\alpha\,B^\top\,U_k\bigr\|_F^2
    &= \mathrm{tr}\!\Big(
        U_k^\top\,B\,D_\alpha\,\underbrace{A^\top A}_{=\,I_k}\,D_\alpha\,B^\top\,U_k
    \Big) \nonumber  \\[4pt] 
    &= \mathrm{tr}\!\Big(
        U_k^\top\,B\,D_\alpha^2\,B^\top\,U_k
    \Big). \label{eq:step_ATA}
\end{align}
 
Applying the cyclic property of the trace, $\mathrm{tr}(PQRS) = \mathrm{tr}(SPQR)$:
\begin{align}
    \mathrm{tr}\!\Big(
        U_k^\top\,B\,D_\alpha^2\,B^\top\,U_k
    \Big)
    &= \mathrm{tr}\!\Big(
        \underbrace{U_k\,U_k^\top}_{=\,I_k}\,B\,D_\alpha^2\,B^\top
    \Big) \nonumber  \\[4pt]
    &= \mathrm{tr}\!\Big(
        B\,D_\alpha^2\,B^\top
    \Big). \label{eq:step_UkUk}
\end{align}
 
Since $B$ is orthogonal,
\begin{align}
    \mathrm{tr}\!\Big(
        B\,D_\alpha^2\,B^\top
    \Big)
    &= \mathrm{tr}(D_\alpha^2)
    \;=\; \sum_{j=1}^{k} \sigma_j^{2\alpha}\,\rho^{2(1-\alpha)}. \label{eq:step_BBT}
\end{align}
 
Substituting~\eqref{eq:step_ATA}--\eqref{eq:step_BBT} back and inserting the definition of $c_\alpha$:
\begin{align}
    \|T_\alpha\|_F^2
    &= c_\alpha^2 \cdot \sum_{j=1}^{k}\sigma_j^{2\alpha}\,\rho^{2(1-\alpha)} \nonumber  \\[6pt]
    &= \frac{\|X_{\mathrm{mod}}(\theta_0+\tau_r) \|_F^2}{\displaystyle\sum_{j=1}^{k}\sigma_j^{2\alpha}\,\rho^{2(1-\alpha)}}
       \;\cdot\;
       \sum_{j=1}^{k}\sigma_j^{2\alpha}\,\rho^{2(1-\alpha)} \nonumber  \\[6pt]
    &= \|X_{\mathrm{mod}}(\theta_0+\tau_r)\|_F^2. \nonumber 
\end{align}

\subsection{Derivation of Eq.~\ref{eq:alignment_loss}}\label{sec:append_align}
In Eq.~\ref{eq:alignment_loss} of the main text, we adopted the following objective to enhance alignment:
\begin{align}
\mathcal{L}_{\mathrm{self}}(\tau) 
&= \underbrace{\|X_{\mathrm{mod}}(\theta_0 + \tau) P_k^\perp\|_F^2}_{\text{\textbf{Encouraging SEO}}}  \nonumber
+ \underbrace{\|X_{\mathrm{mod}}(\theta_0 + \tau)P_k-T_\alpha\|_F^2}_{\text{\textbf{Encouraging SD}}} \nonumber \\
&= \|X_{\mathrm{mod}}(\theta_0 + \tau)-T_\alpha\|_F^2. \nonumber
\end{align}
Then, the second equality can be derived as follows. From the previous derivation, $T_\alpha$ satisfies $T_\alpha P_k^\perp = 0$, which is equivalent to:
\begin{equation}\label{eq:Talpha_Pk}
    T_\alpha P_k = T_\alpha.
\end{equation}
 
For notational convenience, let us write $Y := X_{\mathrm{mod}}(\theta_0+\tau)$. Since $P_k + P_k^\perp = I$, we can decompose $Y$ and $T_\alpha$ as:
\begin{equation}
    Y = Y P_k + Y P_k^\perp, \qquad T_\alpha = T_\alpha P_k + T_\alpha P_k^\perp = T_\alpha + 0 = T_\alpha, \nonumber
\end{equation}
where the second equality uses~\eqref{eq:Talpha_Pk}.
Therefore:
\begin{equation}\label{eq:decomp}
    Y - T_\alpha
    = (Y P_k - T_\alpha) + Y P_k^\perp. 
\end{equation}
 
Expanding the squared Frobenius norm of~\eqref{eq:decomp}:
\begin{equation}\label{eq:expand}
    \|Y - T_\alpha\|_F^2
    = \|Y P_k - T_\alpha\|_F^2
    + \|Y P_k^\perp\|_F^2
    + 2\,\mathrm{tr}\!\Big(
        (Y P_k - T_\alpha)^\top\,Y P_k^\perp
    \Big). \nonumber
\end{equation}
 
It remains to show that the cross term vanishes. We split it into two parts:
\begin{equation}\label{eq:cross}
    \mathrm{tr}\!\Big((Y P_k - T_\alpha)^\top\,Y P_k^\perp\Big)
    = \underbrace{\mathrm{tr}\!\Big(P_k\,Y^\top Y\,P_k^\perp\Big)}_{\text{(I)}}
    \;-\; \underbrace{\mathrm{tr}\!\Big(T_\alpha^\top\,Y\,P_k^\perp\Big)}_{\text{(II)}}.
\end{equation}
 
For the term (I), the cyclic property of the trace gives us:
\begin{equation}
    \mathrm{tr}\!\Big(P_k\,Y^\top Y\,P_k^\perp\Big)
    = \mathrm{tr}\!\Big(\underbrace{P_k^\perp\,P_k}_{=\,0}\,Y^\top Y\Big)
    = 0,
\end{equation}
where we used $P_k^\perp P_k = 0$. For the term (II), since $T_\alpha = Q\,U_k$ by its definition, we have $T_\alpha^\top = U_k^\top Q^\top$.
By the cyclic property of the trace:
\begin{equation}
    \mathrm{tr}\!\Big(T_\alpha^\top\,Y\,P_k^\perp\Big)
    = \mathrm{tr}\!\Big(U_k^\top\,Q^\top\,Y\,P_k^\perp\Big)
    = \mathrm{tr}\!\Big(\underbrace{P_k^\perp\,U_k^\top}_{=\,0}\,Q^\top\,Y\Big)
    = 0,
\end{equation}
where we used
\begin{equation}
    P_k^\perp\,U_k^\top
    = (I - U_k^\top U_k)\,U_k^\top
    = U_k^\top - U_k^\top\underbrace{U_k\,U_k^\top}_{=\,I_k}
    = U_k^\top - U_k^\top
    = 0. \nonumber
\end{equation}

\subsection{Derivation of closed form for DCAT}\label{sec:append_closed}
  
\begin{proposition}

Let $X_m:=X_{\mathrm{mod}}$, $X_t:=X_{\mathrm{text}}$ for brevity.
Assume that
\begin{equation}
    X_m^\top X_m + \eta X_t^\top X_t \succ 0 .
    \label{eq:positive_definite_assumption}
\end{equation}
Then, for a given layer with pretrained weight $\theta_0\in\mathbb{R}^{d_{\mathrm{in}}\times d_{\mathrm{out}}}$, the DCAT objective is defined as:
\begin{equation}\label{eq:obj}
    \min_{\tau}\;\mathcal{L}_{\mathrm{self}}(\tau)\;+\;\eta\cdot\mathcal{L}_{\mathrm{cross}}(\tau),
\end{equation}
with
\begin{align}
    \mathcal{L}_{\mathrm{self}}(\tau)
    &= \|X_{\mathrm{m}}(\theta_0+\tau)-T_\alpha\|_F^2, \label{eq:Lalign} \nonumber \\
    \mathcal{L}_{\mathrm{cross}}(\tau)
    &= \beta\,\|X_{\mathrm{t}}(\tau-\tau_r)\|_F^2 + (1-\beta)\,\|X_{\mathrm{t}}(\tau-\tau_d)\|_F^2. \nonumber
\end{align}

Then, the objective~\eqref{eq:obj} has a unique minimizer as:
\begin{equation}\label{eq:closed_form}
    \tau^\star
    =
    \bigl(X_m^\top X_m + \eta\,X_t^\top X_t\bigr)^{-1}
    \Bigl[
        X_m^\top\bigl(T_\alpha - X_m\theta_0\bigr)
        \;+\;
        \eta\,X_t^\top X_t\bigl(\beta\tau_r+(1-\beta)\tau_d\bigr)
    \Bigr].
\end{equation}

\end{proposition}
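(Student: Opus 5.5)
The plan is to treat the objective in~\eqref{eq:obj} as a strictly convex quadratic in $\tau\in\mathbb{R}^{d_{\mathrm{in}}\times d_{\mathrm{out}}}$ and to read off the unique minimizer from the first-order condition. Here $T_\alpha$, $\theta_0$, $\tau_r$, $\tau_d$ are fixed, since $T_\alpha$ is built from pre-update recipient activations, so no term depends on $\tau$ except through the linear maps $\tau\mapsto X_m\tau$ and $\tau\mapsto X_t\tau$. I would first rewrite $\mathcal{L}_{\mathrm{self}}(\tau)=\|X_m\tau-R\|_F^2$ with $R:=T_\alpha-X_m\theta_0$. This makes every term of the form $\|M\tau-N\|_F^2$, whose gradient is $2M^\top(M\tau-N)$.

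Next I would compute the gradient term by term:
\begin{equation*}
\nabla_\tau\bigl[\mathcal{L}_{\mathrm{self}}+\eta\mathcal{L}_{\mathrm{cross}}\bigr]
=2X_m^\top(X_m\tau-R)+2\eta\beta X_t^\top X_t(\tau-\tau_r)+2\eta(1-\beta)X_t^\top X_t(\tau-\tau_d).
\end{equation*}
The two $X_t$ terms merge, because $\beta+(1-\beta)=1$, into $2\eta X_t^\top X_t\bigl(\tau-(\beta\tau_r+(1-\beta)\tau_d)\bigr)$. Setting the gradient to zero gives the normal equation
\begin{equation*}
\bigl(X_m^\top X_m+\eta X_t^\top X_t\bigr)\tau
=X_m^\top(T_\alpha-X_m\theta_0)+\eta X_t^\top X_t\bigl(\beta\tau_r+(1-\beta)\tau_d\bigr).
\end{equation*}
Assumption~\eqref{eq:positive_definite_assumption} makes the left-hand matrix invertible, so solving yields exactly~\eqref{eq:closed_form}.

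The one point needing care is that this stationary point is the \emph{unique global} minimizer, not just a critical point. I would argue it through the Hessian. Acting on a direction $\Delta$, the objective changes at second order by $2\,\mathrm{tr}\bigl(\Delta^\top(X_m^\top X_m+\eta X_t^\top X_t)\Delta\bigr)$. This is the Hessian $2\,(I_{d_{\mathrm{out}}}\otimes H)$ with $H:=X_m^\top X_m+\eta X_t^\top X_t$. It is positive definite whenever $H\succ0$, which holds with $\beta\in[0,1]$ and $\eta>0$, because the $\beta$-weights again sum to one. The objective is therefore strictly convex, and its critical point is its unique minimizer.

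An alternative route that avoids vectorization is to complete the square. Write the objective as $\mathrm{tr}(\tau^\top H\tau)-2\,\mathrm{tr}(\tau^\top G)+\mathrm{const}$, where $G$ is the bracketed right-hand side in~\eqref{eq:closed_form}. This equals $\|H^{1/2}(\tau-H^{-1}G)\|_F^2+\mathrm{const}$, which is minimized only at $\tau=H^{-1}G$. I expect no real obstacle beyond bookkeeping the $\beta$-weights and the $\theta_0$ offset inside $\mathcal{L}_{\mathrm{self}}$.
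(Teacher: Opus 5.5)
Your proposal is correct and follows the paper's own route: differentiate both terms, merge the two $X_t$ terms using $\beta+(1-\beta)=1$, and solve the resulting normal equation using the assumed invertibility of $X_m^\top X_m+\eta X_t^\top X_t$. Your Hessian (or completing-the-square) argument for unique global minimality is a useful addition that the paper leaves implicit. It needs only the stated assumption $X_m^\top X_m+\eta X_t^\top X_t\succ 0$, not $\beta\in[0,1]$ or $\eta>0$, because the quadratic part of the objective equals this matrix for any $\beta$.
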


\begin{proof}
The total objective $f(\tau):=\mathcal{L}_{\mathrm{self}}(\tau)+\eta\,\mathcal{L}_{\mathrm{cross}}(\tau)$ is a quadratic function of $\tau$, so we proceed by computing its gradient and setting it to zero. First, $\mathcal{L}_{\mathrm{self}}(\tau)$ can be represented as: 
\begin{align}
    \mathcal{L}_{\mathrm{self}}(\tau)
    &= \|X_{\mathrm{m}}(\theta_0+\tau)-T_\alpha\|_F^2 \nonumber \\
    &= \mathrm{tr}\!\bigl(\tau^\top X_m^\top X_m\,\tau\bigr)
       \;+\; 2\,\mathrm{tr}\!\bigl(\tau^\top X_m^\top(X_m\theta_0-T_\alpha)\bigr)
       \;+\; \mathrm{const}. \nonumber 
\end{align}
Using the identities $\nabla_\tau\,\mathrm{tr}(\tau^\top M\tau) = (M+M^\top)\tau = 2 M\tau$ for symmetric $M$, and $\nabla_\tau\,\mathrm{tr}(\tau^\top C) = C$, we obtain
\begin{equation}\label{eq:grad_align}
    \nabla_\tau\,\mathcal{L}_{\mathrm{self}}(\tau)
    = 2\,X_m^\top X_m\,\tau \;+\; 2\,X_m^\top(X_m\theta_0-T_\alpha)
    = 2\,X_m^\top\!\bigl(X_m(\theta_0+\tau)-T_\alpha\bigr).
\end{equation}
 
Similarly, the gradient of $\mathcal{L}_{\mathrm{cross}}$ can be represented as:
\begin{align}
    \nabla_\tau\,\mathcal{L}_{\mathrm{cross}}(\tau)
    &= 2\beta\,X_t^\top X_t(\tau-\tau_r)
      \;+\; 2(1-\beta)\,X_t^\top X_t(\tau-\tau_d) \nonumber \\
    &= 2\,X_t^\top X_t\,\tau
      \;-\; 2\,X_t^\top X_t\!\bigl(\beta\tau_r+(1-\beta)\tau_d\bigr).
      \label{eq:grad_text}
\end{align}
 
Then, $\nabla_\tau\,\mathcal{L}_{\mathrm{self}}+\eta\,\nabla_\tau\,\mathcal{L}_{\mathrm{cross}}$ from~\eqref{eq:grad_align} and~\eqref{eq:grad_text} is represented as:
\begin{equation}
    X_m^\top X_m\,\tau \;+\; X_m^\top(X_m\theta_0-T_\alpha)
    \;+\; \eta\,X_t^\top X_t\,\tau
    \;-\; \eta\,X_t^\top X_t\!\bigl(\beta\tau_r+(1-\beta)\tau_d\bigr)
    \;=\; 0.
    \label{eq:zero_gradient}
\end{equation}
Rearranging Eq.~\ref{eq:zero_gradient} gives the linear system:
\begin{equation}\label{eq:SPD_system}
    \underbrace{\bigl(X_m^\top X_m + \eta\,X_t^\top X_t\bigr)}_{=:\,A}\,\tau
    \;=\;
    \underbrace{X_m^\top(T_\alpha-X_m\theta_0)
    \;+\;
    \eta\,X_t^\top X_t\!\bigl(\beta\tau_r+(1-\beta)\tau_d\bigr)}_{=:\,B}.
    \nonumber
\end{equation}
Since we stack sufficiently large tokens to construct $X_m$ and $X_t$, we assume $X_m^\top X_m + \eta\,X_t^\top X_t \succ 0$, the closed-form solution for $\tau$ is:
\begin{equation}\label{eq:closed_form}
    \tau^\star
    =
    \bigl(X_m^\top X_m + \eta\,X_t^\top X_t\bigr)^{-1}
    \Bigl[
        X_m^\top\bigl(T_\alpha - X_m\theta_0\bigr)
        \;+\;
        \eta\,X_t^\top X_t\bigl(\beta\tau_r+(1-\beta)\tau_d\bigr)
    \Bigr]. \nonumber
\end{equation} 
\end{proof}


\begin{table}[h]
    \caption{\textbf{Empirical validation on $\mathrm{SD}$, $\mathrm{SD}_{\det}$, and $\delta_{\mathrm{eff}}$.}
    To assess the tightness of the bound, we compute the entropy-based spectral diversity
    $\mathrm{SD}$, the proof-side spectral diversity $\mathrm{SD}_{\det}$, and spectral conversion factor $\delta_{\mathrm{eff}}$ for Vicuna-based models in Fig.~\ref{fig:correlation}. The results show that $\delta_{\mathrm{eff}}$ remains close
    to one, indicating that the entropy-based effective-rank SD introduces only a mild
    constant-level relaxation in the MI lower bound.}
    \label{tab:emprical_validation_tau}
    \centering
    \begin{tabular}{ccc|ccc}
    \toprule
    \multicolumn{3}{c|}{Audio+Vision} & \multicolumn{3}{c}{Video+Vision}\\
    $\mathrm{SD}$ & $\mathrm{SD}_{\det}$ & $\delta_{\mathrm{eff}} \leq $
    & $\mathrm{SD}$ & $\mathrm{SD}_{\det}$ & $\delta_{\mathrm{eff}} \leq $  \\
    \midrule
     0.2839\tiny{$\pm0.0273$} 
     & 0.2752\tiny{$\pm0.0228$} 
     & 0.9704\tiny{$\pm0.0224$}
     & 0.2015\tiny{$\pm0.0178$} 
     & 0.2226\tiny{$\pm0.0197$} 
     & 1.1046\tiny{$\pm0.0140$} \\
     \bottomrule
    \end{tabular}
\end{table}

\subsection{Empirical validation of $\delta$}
\label{sec:empirical_validation_tau}

The proof of Theorem~\ref{thm:lower_bound} involves a spectral non-degeneracy factor
that connects the proof-side spectral term $\mathrm{SD}_{\det}$ with the entropy-based
effective-rank measure $\mathrm{SD}$ used in the main text. 
In particular, the relevant condition can be expressed as the existence of a constant
$\delta>0$ such that
\begin{equation}
\label{eq:sd_det_tau_sd_app}
    \mathrm{SD}_{\det}
    \geq
    \delta\,\mathrm{SD}.
\end{equation}
Equivalently, the largest empirical value supported by a given spectrum is
\begin{equation}
    \delta_{\mathrm{eff}}
    :=
    \frac{\mathrm{SD}_{\det}}{\mathrm{SD}}.
\end{equation}
Importantly, $\delta_{\mathrm{eff}}$ is a spectral conversion factor between two
spectral-spread measures, not a per-eigenvalue probability lower bound of the form
$p_j\geq \delta/k$. Therefore, $\delta_{\mathrm{eff}}$ is not constrained to be at most
one, and values larger than one do not contradict the probability constraint
$\sum_j p_j=1$.

For empirical validation, we compute both $\mathrm{SD}$ and $\mathrm{SD}_{\det}$ using
the features examined in our analysis, and report their ratio
$\delta_{\mathrm{eff}}$ in Table~\ref{tab:emprical_validation_tau}. 
As shown in the table, $\delta_{\mathrm{eff}}$ remains close to one in both settings:
$0.9704$ for Audio+Vision and $1.1046$ for Video+Vision. 
These results indicate that the conversion from $\mathrm{SD}_{\det}$ to the
entropy-based $\mathrm{SD}$ incurs only a mild constant-level relaxation in the MI
lower bound. Thus, although the main text adopts the more interpretable entropy-based effective-rank
measure $\mathrm{SD}$, the empirical gap between $\mathrm{SD}$ and $\mathrm{SD}_{\det}$
is small in our analyzed model families. This supports the use of $\mathrm{SD}$ as a
practical and meaningful spectral diversity metric for the proposed Alignment score.

\section{Implementation Details}\label{sec:append_implementation}
\subsection{Models and computation}\label{sec:append_models_and_computation}
For the Vicuna, the setting allows a controlled comparison because the three MLLMs are trained from the same base under the same recipe except for the modality in OptMerge~\cite{wei2025optmerge}. To complement this controlled setting, we additionally evaluate DCAT on independently trained models that nevertheless share the LLaMA-3.1-8B-Instruct backbones for audio, video, and vision modalities. For the two backbones, the vision MLLM is used as the donor in every experiment, since vision MLLMs are the modality on which standard merging tends to leave performance unchanged or slightly degraded while consistently lifting the other modalities. All experiments run on a single NVIDIA L40S 48\,GB GPU and the DCAT solver over all projections completes in about 25-30 minutes per setting.

\subsection{Calibration and alignment-target data}
For all four settings, calibration data is drawn from the publicly available AVQA dataset~\cite{yang2022avqa} since the evaluation benchmarks (MMAU, Video-MME, etc.) are entirely disjoint from AVQA so that calibration and evaluation never share data.
AVQA provides paired audio and video streams aligned to the same set of question/answer prompts, which lets us instantiate DCAT in a uniform way across audio and video recipients without changing the underlying source of $X_{\mathrm{text}}$. For audio recipients, we feed the \emph{audio track} in AVQA to the recipient MLLM as the modality input; for video recipients, we instead feed the \emph{video clip} of the same sample with the same prompt used for the audio track. 
To solve the objective in Eq. ~\ref{eq:DCAT_objective}, we draw small set of AVQA dataset (256 samples for each modality).

\subsection{Progressive, dependency-aware merging}
Although the closed-form update in Eq.~\ref{eq:DCAT_objective} could in principle be solved for all projections in parallel, we found it preferable to apply the update \emph{progressively} along two axes—across layers and across projection groups within each layer. Concretely, we process layers sequentially, and within each layer we partition the seven projections into four dependency-aware groups that mirror the activation-flow structure of a transformer block: \{\texttt{query,key,value}\} and \{\texttt{out}\} projections in the attention layers, \{\texttt{up,gate}\} and \{\texttt{down}\} projections in the attention projections and the feed-forward networks. 
The four groups are merged in the listed order, and after every group merge we re-collect $X_{\mathrm{mod}}$ and $X_{\mathrm{text}}$ at the relevant point so that the next group is solved against activations that already reflect the previous update. In this way, errors that would arise from treating projections as independent are absorbed iteratively, and the alignment accumulated across layers is corrected as it propagates.
 
\subsection{Hyperparameter values}
In DCAT we should control three hyperparameters: the text-subspace dimension $k$, the recipient/donor balance $\beta$ in the cross-modal alignment term, and the cross-modal alignment weight $\eta$. For example, the text-subspace dimension $k$ is shared within a backbone family ($k=128$ for Vicuna and $k=256$ for LLaMA-3.1-8B-Instruct), reflecting that hidden states of LLaMA-3.1-8B-Instruct are wider and require a larger effective rank to span the principal text directions. The complete configuration is reported in Table~\ref{tab:impl_full}.
 
\begin{table}[t]
    \centering
    \caption{\textbf{Full implementation configuration of DCAT for the four headline settings.} A blank entry under \emph{Evaluation seed} indicates that the benchmark is small enough to be evaluated in full and therefore does not require subsampling.}
    \label{tab:impl_full}
    \small
    \setlength{\tabcolsep}{4pt}
    \renewcommand{\arraystretch}{1.1}
    \resizebox{\textwidth}{!}{
    \begin{tabular}{@{}l|c|c|c|c@{}}
        \toprule
        Component & Vicuna+Audio & Vicuna+Video & LLaMA+Audio & LLaMA+Video \\
        \midrule
        Backbone                       & Vicuna-7B-v1.5                       & Vicuna-7B-v1.5                       & LLaMA-3.1-8B-Instruct                & LLaMA-3.1-8B-Instruct \\
        Calibration source             & AVQA (audio track)                   & AVQA (video track)                   & AVQA (audio track)                   & AVQA (video track) \\
        \# calibration samples         & 256                                  & 256                                  & 256                                  & 256 \\
        $k$ (text subspace dim)        & 128                                  & 128                                  & 256                                  & 256 \\
        $\beta$                & 0.30                                 & 0.30                                 & 0.95                                 & 0.95 \\
        $\eta$                         & $1{\times}10^{5}$                    & $1{\times}10^{5}$                    & $1{\times}10^{6}$                    & $1{\times}10^{6}$ \\
        \bottomrule
    \end{tabular}}
\end{table}
 
\subsection{Closed-form solve and engineering notes}
The DCAT objective in Eq.~\ref{eq:DCAT_objective} is quadratic in $\tau$ and admits a layer-wise SPD system $A\tau=B$ with
\[
A \;=\; X_{\mathrm{mod}}^\top X_{\mathrm{mod}} \;+\; \eta\, X_{\mathrm{text}}^\top X_{\mathrm{text}}.
\]
We solve this system in float64 using a Cholesky-based linear solver, with a small ridge $\epsilon_{\mathrm{solve}}=10^{-6}\,\mathrm{tr}(A)/d$ added on the diagonal for numerical stability. Activation matrices used by the solver are collected in float32, and the resulting weight delta is applied to the recipient checkpoint in its native bfloat16 precision. No gradient-based optimization is involved at any point.

\section{Experimental Details}\label{sec:append_details}
\subsection{Analysis details}\label{sec:append_analysis_details}
\paragraph{Alignment-accuracy Correlation Analysis.} To examine whether the proposed alignment metric is correlated with model performance, we conduct a correlation analysis between alignment and benchmark accuracy using both Pearson and Spearman correlation coefficients. The analysis is performed on Audio+Vision and Video+Vision settings, under Vicuna~\cite{chiang2023vicuna} and LLaMA 3.1~\cite{grattafiori2024llama} backbones. For model merging, we include Task Arithmetic~\cite{ilharco2022editing} (TA), TIES-merging~\cite{yadav2023ties} (TIES), ISO-C~\cite{marczak2025no} (ISO), Task Singular Vector~\cite{gargiulo2025task} (TSV), WUDI~\cite{cheng2025whoever}, and OptMerge~\cite{wei2025optmerge}.

For each merging method, we first construct a fully merged model by merging the Vision model with either the Audio or Video model. Because correlation analysis requires multiple evaluation points rather than a single merged checkpoint, we further generate additional model variants by applying SVD-based low-rank decomposition to the fully merged network parameters. Specifically, starting from each full-merge checkpoint, we decompose the merged weights and reconstruct a series of low-rank approximations with ranks 1, 2, 4, 8, 16, 32, 64, and 128. We then evaluate both the full-merge model and these SVD-derived variants, and compute Pearson and Spearman correlations between their Alignment scores and benchmark accuracies. This allows us to analyze how changes in the merged parameter space are reflected in both alignment and task performance.

The evaluation is conducted separately on the benchmark corresponding to each constituent modality. For example, in the Audio+Vision setting, we measure performance on both the audio benchmark and the vision benchmark. We use TextVQA~\cite{singh2019towards} for vision, MMAU~\cite{sakshi2024mmau} for audio, and Video-MME~\cite{fu2025video} for video. In addition, we examine a broader set of benchmarks, whose results are reported in Sec.~\ref{sec:additional_correlation_analysis} For each benchmark, we evaluate 300 samples that are equally sampled across categories to ensure balanced coverage. For video evaluation, we use tokens from only the first, middle, and last frames of each sample for computational efficiency, instead of processing all frames.

To provide a reference for the effect of merging itself, we additionally evaluate the original modality-specific models, i.e., the models trained on a single modality without merging, on the corresponding benchmarks. These reference results allow us to determine whether merging improves or degrades performance relative to the original specialized models.

For experiments, the proposed alignment metric is computed between modality tokens and text tokens at each backbone layer with $k=32$ and $\gamma=0.8$, and the alignment values are averaged over all samples in the benchmark dataset to obtain a layer-wise alignment profile. Since alignment varies substantially across layers, we analyze the results in a layer-wise manner, as shown in Fig.~\ref{fig:layerwise_alignment}. Consistent with prior observations that cross-modal alignment tends to be strongest in middle layers, we select the 21st layer as a representative layer and report the linear regression plots and correlation values based on this layer in Fig.~\ref{fig:correlation}.

\subsection{Comparison details}\label{sec:append_comparison_detail}
\subsubsection{Baselines}
\paragraph{Task Arithmetic.}
Task Arithmetic represents the effect of task-specific fine-tuning as a \emph{task vector} in weight space, defined by the difference between the fine-tuned model and the pretrained initialization~\cite{ilharco2022editing}. 
Given a pretrained model with parameters $\theta_0$ and modality- or task-specific models $\{\theta_i\}_{i=1}^{N}$ fine-tuned from the same initialization, each task vector is computed as $\tau_i = \theta_i - \theta_0$. 
Model merging is then performed by adding a scaled sum of task vectors back to the pretrained weights:
\begin{equation}
    \theta_{\mathrm{TA}}
    =
    \theta_0
    +
    \lambda
    \sum_{i=1}^{N}
    \left(\theta_i - \theta_0\right),
\end{equation}
where $\lambda$ controls the strength of the merged update. 
The original work shows that such vectors can be added or negated to compose or suppress task behaviors, enabling training-free model editing and multi-task merging at the weight level.

\paragraph{TIES-Merging.}
TIES-Merging addresses parameter interference that arises when merging multiple task-specific models fine-tuned from the same pretrained initialization~\citep{yadav2023ties}. 
Given task vectors $\tau_i = \theta_i - \theta_0$, TIES first trims small-magnitude updates by resetting them to zero, thereby removing redundant parameter changes. 
It then elects a shared sign for each parameter dimension and merges only the task-vector entries whose signs agree with the elected sign. 
A simplified form of the resulting merge can be written as
\begin{equation}
    \theta_{\mathrm{TIES}}
    =
    \theta_0
    +
    \lambda
    \mathbf{m},
    \qquad
    m_j
    =
    \frac{1}{|\mathcal{A}_j|}
    \sum_{i \in \mathcal{A}_j}
    \tilde{\tau}_{i,j},
    \quad
    \mathcal{A}_j
    =
    \left\{
    i :
    \mathrm{sign}(\tilde{\tau}_{i,j})
    =
    \mathrm{sign}\!\left(\sum_{k=1}^{N} \tilde{\tau}_{k,j}\right)
    \right\},
\end{equation}
where $\tilde{\tau}_i$ denotes the trimmed task vector and $\lambda$ controls the merge strength. 
By resolving sign disagreement before averaging, TIES-Merging reduces destructive interference and improves training-free multi-task model merging compared to direct weight averaging or task-vector addition.
\paragraph{Isotropic Merging (ISO-C).}
Isotropic Merging analyzes model merging from the perspective of the singular subspaces of task-specific weight updates~\citep{marczak2025no}. 
Given layer-wise task matrices $\Delta_i = W_i - W_0$, it first constructs a merged task matrix $\Delta = \sum_{i=1}^{N}\Delta_i$ and decomposes it as $\Delta = U\;\mathrm{diag}(\sigma)\cdot V^\top$. 
Instead of preserving the skewed singular value spectrum of the summed update, Iso-C replaces the singular values with a uniform scale, thereby balancing the contribution of different singular directions. 
Iso-CTS further augments this common subspace with task-specific residual directions, which helps retain underrepresented task information when merging many diverse models. 
The resulting merge can be written as
\begin{equation}
    W_{\mathrm{Iso}}
    =
    W_0
    +
    \lambda
    U_{\mathrm{CTS}}
    \left(
    \bar{\sigma} I
    \right)
    V_{\mathrm{CTS}}^\top,
    \qquad
    \bar{\sigma}
    =
    \frac{1}{r}
    \sum_{j=1}^{r}
    \sigma_j,
\end{equation}
where $U_{\mathrm{CTS}}$ and $V_{\mathrm{CTS}}$ denote the orthogonalized bases obtained from common and task-specific subspaces, and $\lambda$ controls the merge strength.

\paragraph{Task Singular Vectors.}
Task Singular Vectors (TSV) analyzes task vectors at the layer level by preserving each task-specific update as a matrix rather than flattening the entire network into a single vector~\citep{gargiulo2025task}. 
Given a pretrained layer $W_0^\ell$ and task-specific layers $\{W_i^\ell\}_{i=1}^{N}$, TSV first decomposes each layer task matrix $\Delta_i^\ell = W_i^\ell - W_0^\ell$ using SVD and retains only the top-$r$ singular components. 
The method observes that overlap between singular vectors from different tasks induces task interference, and therefore TSV-Merge reduces such interference by decorrelating the concatenated left and right singular-vector bases through whitening or an equivalent orthogonal Procrustes step. 
A simplified layer-wise merge can be written as
\begin{equation}
    W_{\mathrm{TSV}}^\ell
    =
    W_0^\ell
    +
    \lambda
    \hat{U}^\ell
    \bar{\Sigma}^\ell
    (\hat{V}^\ell)^\top .
\end{equation}
where $\hat{U}^\ell$ and $\hat{V}^\ell$ denote the decorrelated versions of $\bar{U}^\ell$ and $\bar{V}^\ell$, and $\lambda$ controls the merge strength. 
By combining low-rank approximation with singular-vector decorrelation, TSV-Merge aims to reduce destructive interference among task-specific updates while keeping the merge training-free.

\paragraph{WUDI-Merging.} WUDI-Merging is a data-free model merging method that reduces interference among task-specific experts by exploiting the observation that linear-layer task vectors approximately span the corresponding input subspace~\citep{cheng2025whoever}. 
Given a pretrained model $\theta$ and task vectors $\tau_i=\theta_i-\theta$, WUDI focuses on the linear layers and directly optimizes the merged task vector $\tau_{m,l}$ to stay compatible with each task vector $\tau_{i,l}$. 
For each linear layer $l$, the optimization objective is written as
\begin{equation}
\begin{aligned}
    \mathcal{L}_l(\tau_{m,l})
    &=
    \sum_i
    \frac{1}{\|\tau_{i,l}\|_F^2}
    \left\|
    (\tau_{m,l}-\tau_{i,l})\tau_{i,l}^{\top}
    \right\|_F^2,\quad \theta_m=\theta + \tau_m .
\end{aligned}
\end{equation}
The merged task vectors are optimized independently for each linear layer and then assembled into the final merged model. 
By using task vectors themselves as guidance signals, WUDI-Merging mitigates destructive interference without requiring additional training data, test-time samples, or extra task-specific modules.

\paragraph{OptMerge.}
OptMerge studies model merging for MLLMs and proposes a data-free optimization method for combining task- or modality-specific expert models~\citep{wei2025optmerge}. 
Given task vectors $\tau_i=\theta_i-\theta_0$, OptMerge observes that directly optimizing the merged task vector can be unstable due to redundant and noisy components in task-specific updates. 
For each linear layer $l$, it first applies a low-rank approximation to the centered task matrix, $\tau_{i,l}-\bar{\tau}_l \approx U_{i,k}\Sigma_{i,k}V_{i,k}^{\top}$, and uses the denoised update $\tilde{\tau}_{i,l}=U_{i,k}\Sigma_{i,k}V_{i,k}^{\top}+\bar{\tau}_l$ for optimization. 
The layer-wise merged vector is obtained by minimizing task-vector interactions:
\begin{equation}
\begin{aligned}
    \tau_{m,l}^{*}
    &=
    \arg\min_{\tau_{m,l}}
    \sum_{i=1}^{N}
    \frac{
    \left\|
    (\tau_{m,l}-\tilde{\tau}_{i,l})
    S_{i,l}^{\top}
    \right\|_{F}^{2}
    }{
    \|\tau_{i,l}\|_{F}^{2}
    },
    \quad
    S_{i,l}=\Sigma_{i,k}V_{i,k}^{\top},\quad \theta_m=\theta_0+\tau_m .
\end{aligned}
\end{equation}
For LoRA fine-tuned models, OptMerge further uses direct truncated SVD, mean initialization, and SGD-based optimization to avoid excessive growth of the merged vector norm.

\subsubsection{Base models}
\paragraph{Vicuna-based MLLMs.}
We use the Vicuna-based modality-specific MLLMs released by OptMerge~\citep{wei2025optmerge}. 
All models share Vicuna-7B-v1.5 as the language backbone, while employing modality-specific encoders and connectors: the vision-language model uses CLIP-ViT-L-336px~\cite{radford2021learning} with an MLP projector, the audio-language model uses BEATs-Iter3+~\cite{liu2025ola} with a Q-Former~\cite{li2023blip} connector, and the video-language model uses LanguageBind~\cite{zhu2023languagebind} with an MLP projector. 
These models are trained on bi-modal modality-text data for vision, audio, and video, respectively, following a two-stage procedure consisting of connector alignment and subsequent fine-tuning. 
Following the original modality-merging setup, we preserve each modality-specific encoder and connector, and apply merging only to the shared language-model parameters. 
This setup allows us to evaluate whether cross-modality merging can transfer alignment across different modality-specific MLLMs while keeping the underlying language backbone fixed.

\paragraph{LLaMA 3.1-based MLLMs.}
We further evaluate DCAT on LLaMA 3.1-based MLLMs collected from publicly available in-the-wild checkpoints. 
Specifically, we use LLaVA-LLaMA3.1-8B\footnote{\url{https://huggingface.co/modelscope/llava-llama3.1-8b}} as the vision-language model, Ultravox-v0.4.1-LLaMA-3.1-8B\footnote{\url{https://huggingface.co/fixie-ai/ultravox-v0_4_1-llama-3_1-8b}} as the audio-language model, and LLaVA-Video-LLaMA-3.1-8B\footnote{\url{https://huggingface.co/weizhiwang/LLaVA-Video-Llama-3.1-8B}} as the video-language model. 
Unlike the Vicuna-based models, these checkpoints are independently developed and trained with different data mixtures, training recipes, and modality-specific components, while sharing the same LLaMA 3.1-8B language-model architecture. 
For each model, we preserve its original modality encoder and connector, and apply merging only to the language-model parameters that are architecturally compatible across models. 
This setting provides a more practical and less controlled testbed for evaluating whether DCAT can robustly transfer cross-modal alignment across heterogeneous real-world MLLMs.

\paragraph{Considered, but not experimented models.}
Our merging setup assumes that the modality-specific MLLMs share the same language-model architecture and parameter shapes, since both DCAT and the baseline merging methods operate directly at the weight level. 
We also considered other in-the-wild MLLMs that appear to share the same backbone family, such as Qwen-Audio~\cite{chu2023qwen} and LLaVA-Next~\cite{li2024llava} with Qwen-1.5~\cite{bai2023qwen} backbones. 
However, these models often differ in tokenizer size, input embedding size, and LM head size, which prevents direct weight-space merging across all compared methods without additional architectural modification or retraining. 
Similarly, several recent public MLLMs, including Idefics3-8B-Llama3\footnote{\url{https://huggingface.co/HuggingFaceM4/Idefics3-8B-Llama3}} and Llama-3.1-Nemotron-Nano-VL-8B-V1\footnote{\url{https://huggingface.co/nvidia/Llama-3.1-Nemotron-Nano-VL-8B-V1}}, were considered but excluded because their backbone implementations or parameter shapes are not fully compatible with the other modality-specific models. 
We did not introduce additional training or ad-hoc resizing procedures to force compatibility, as this would confound the comparison among training-free merging methods. 
To the best of our knowledge, our experiments cover the publicly available in-the-wild MLLMs that satisfy the required architectural compatibility conditions; we provide a more detailed discussion of this limitation in Sec.~\ref{sec:limitation}.

\subsubsection{Benchmarks}
To compare the overall performance across various tasks, we evaluated the baseline and merged models on multiple audio and video question-answering benchmarks. For audio, we used MMAU~\cite{sakshi2024mmau}, AIR-Bench-Foundation~\cite{yang2024air}, MuChoMusic~\cite{weck2024muchomusic}, Clotho-AQA~\cite{lipping2022clotho}, and VocalSound~\cite{gong_vocalsound}. For video, we used Video-MME~\cite{fu2025video}, TempCompass~\cite{liu2024tempcompass}, MVBench~\cite{li2024MVBench}, MLVU~\cite{zhou2024mlvu}, EgoSchema~\cite{mangalam2023egoschema}, and LongVideoBench~\cite{wu2024longvideobench}.

MMAU and AIR-Bench-Foundation consist of 1,000 and 24,683 multiple-choice questions (MCQs), respectively, paired with audio clips covering natural sounds, speech, and music. MuChoMusic contains 1,187 MCQs designed to evaluate the music understanding capability of models. For Clotho-AQA, we used a subset consisting of 1,442 binary QA questions related to audio. VocalSound contains 3,591 questions for classifying six types of human vocal sounds. We used a custom prompt to prevent the model from generating answers outside the given choices.

Video-MME contains 2,700 MCQs based on 900 videos with diverse durations and categories. For TempCompass, we used the multiple-choice, caption-matching, and yes/no subsets, which contain 1,580, 1,503, and 2,453 questions, respectively. These subsets evaluate the temporal understanding capability of models across five categories: action, attribute change, direction, event order, and speed. MVBench consists of 20 diverse question sets for evaluating temporal perception, with 200 questions in each set. MLVU consists of 2,174 MCQs that evaluate long-video understanding ability across various genres. For EgoSchema, we used the subset containing 500 long-form video MCQ pairs. LongVideoBench consists of 1,337 MCQs designed to evaluate long-context video understanding. Overall results for these benchmarks, except EgoSchema and LongVideoBench, are reported in Table~\ref{tab:quant}, and detailed results for specific categories in MVBench and TempCompass are provided in Appendix~\ref{sec:append_subcategory}.

Additionally, we used GQA~\cite{hudson2019gqa}, MMBench~\cite{liu2024mmbench}, MMStar~\cite{chen2024we}, POPE~\cite{li2023evaluating}, and TextVQA~\cite{singh2019towards} to evaluate the vision capabilities of the models. The GQA testdev-balanced split contains 12,578 real-world visual question-answering examples. MMBench (Dev\_EN) and MMStar contain 1,164 and 1,500 questions, respectively, for evaluating various capabilities of vision-language models. POPE consists of 9,000 binary QA pairs designed to evaluate object hallucination. TextVQA (Val) consists of 5,000 questions for evaluating a model's ability to read text in images. Quantitative results for the vision benchmarks are provided in Appendix~\ref{sec:append_vision_benchmark}.

Each benchmark was evaluated using the lmms-eval~\cite{zhang2024lmmsevalrealitycheckevaluation,lmms_eval2024} toolkit. However, some prompts or evaluation methods were customized for our evaluation. Example prompts and evaluation methods are summarized in Table~\ref{tab:benchmark-summary}.

\section{Additional Analysis}\label{sec:append_analysis}
\subsection{Additional correlation analysis}\label{sec:additional_correlation_analysis}
To further assess the generality of the relationship between the proposed Alignment score and modality performance, we extend the correlation analysis to a broader set of benchmarks. For vision, we additionally consider GQA~\cite{hudson2019gqa}, OCR-VQA~\cite{mishra2019ocr}, MathVision~\cite{wang2024measuring}, and VizWiz~\cite{gurari2018vizwiz}; for audio, we use VocalSound~\cite{gong_vocalsound}, MuCho-music~\cite{weck2024muchomusic}, Clotho-AQA~\cite{lipping2022clotho}; and for video, we use TempCompass~\cite{liu2024tempcompass}, MVBench~\cite{li2024MVBench}, and MLVU~\cite{zhou2024mlvu}. The experimental protocol follows Sec.~\ref{sec:append_analysis_details}. In particular, we perform the analysis for both Audio+Vision and Video+Vision settings on both Vicuna and LLaMA 3.1 backbones. For each merged setting, we evaluate the benchmarks corresponding to each constituent modality. As in the main analysis, we examine the relationship between Alignment score and modality performance both layer-wise and at the representative middle layer (21st layer).

The layer-wise results are shown in Fig.~\ref{fig:supple_layerwise_corr_vicuna}-~\ref{fig:supple_layerwise_corr_llama}, and the corresponding 21st-layer regression and correlation results are presented in Fig.~\ref{fig:supple_21th_corr_vicuna}-\ref{fig:supple_21th_corr_llama}. Across all additional benchmarks and backbone settings, we observe a consistent trend. The strongest positive correlation between the Alignment score and performance appears in the middle layers, in agreement with the observations from the main analysis. Moreover, both Pearson and Spearman correlation coefficients between alignment and accuracy are positive across all cases in the 21st layer, indicating that higher alignment is consistently associated with better benchmark accuracy. These results further support that the proposed Alignment score is not specific to a particular benchmark or backbone, but instead provides a robust indicator of model quality across diverse evaluation settings.

\begin{figure}[h]
  \centering
  \vspace{-1.0em}
  \includegraphics[width=\linewidth]{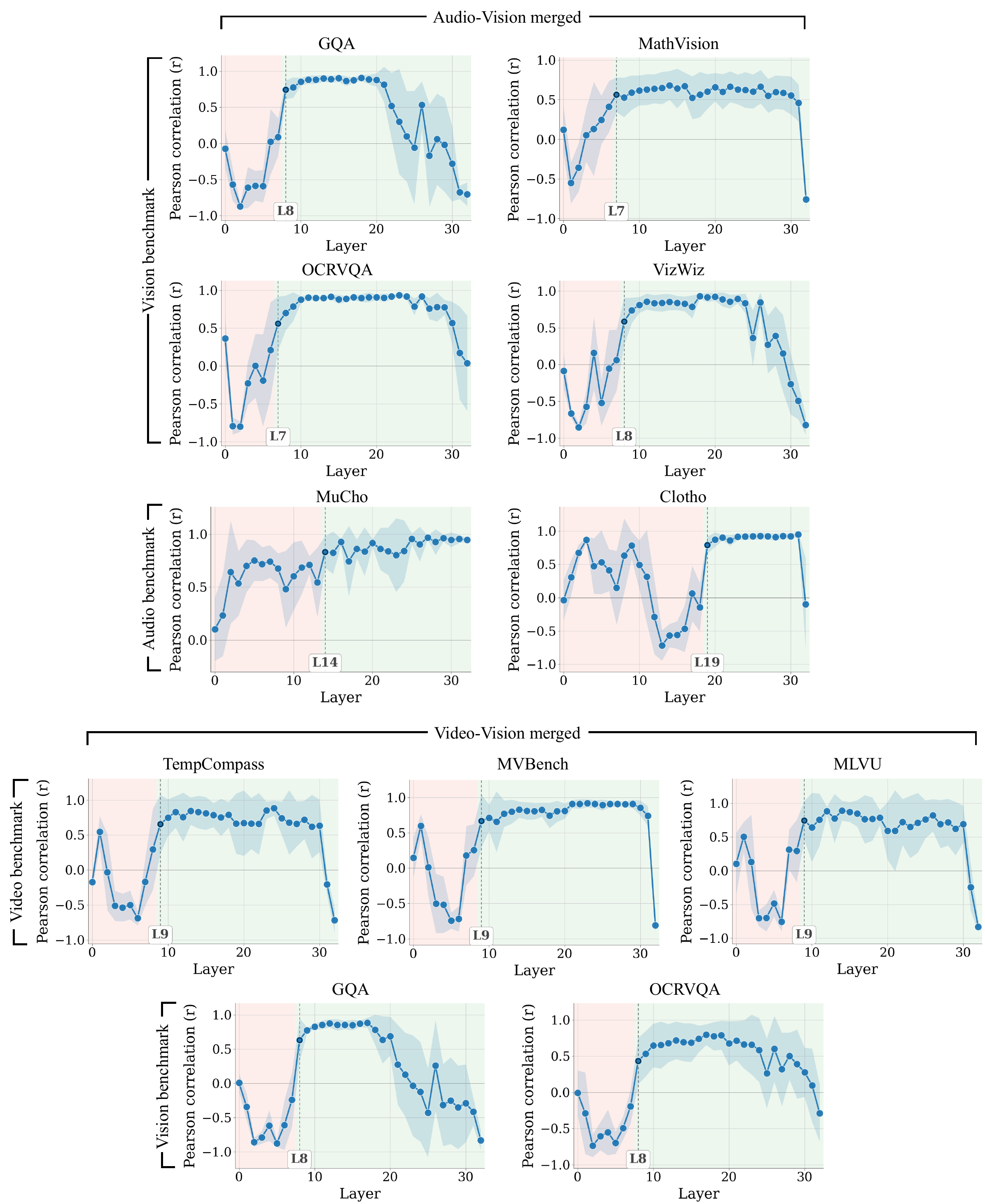}
  \caption{\textbf{Layer-wise Pearson correlation between Alignment score and modality performance on Vicuna backbone.} We measure the Alignment score in a layer-wise manner for vision, audio, and video benchmark on Audio+Vision and Video+Vision Vicuna backbone. Across all cases, the correlation between Alignment score and accuracy is strongest in the middle layers.}
  \label{fig:supple_layerwise_corr_vicuna}
\end{figure}

\begin{figure}[h]
  \centering
  \vspace{-1.0em}
  \includegraphics[width=\linewidth]{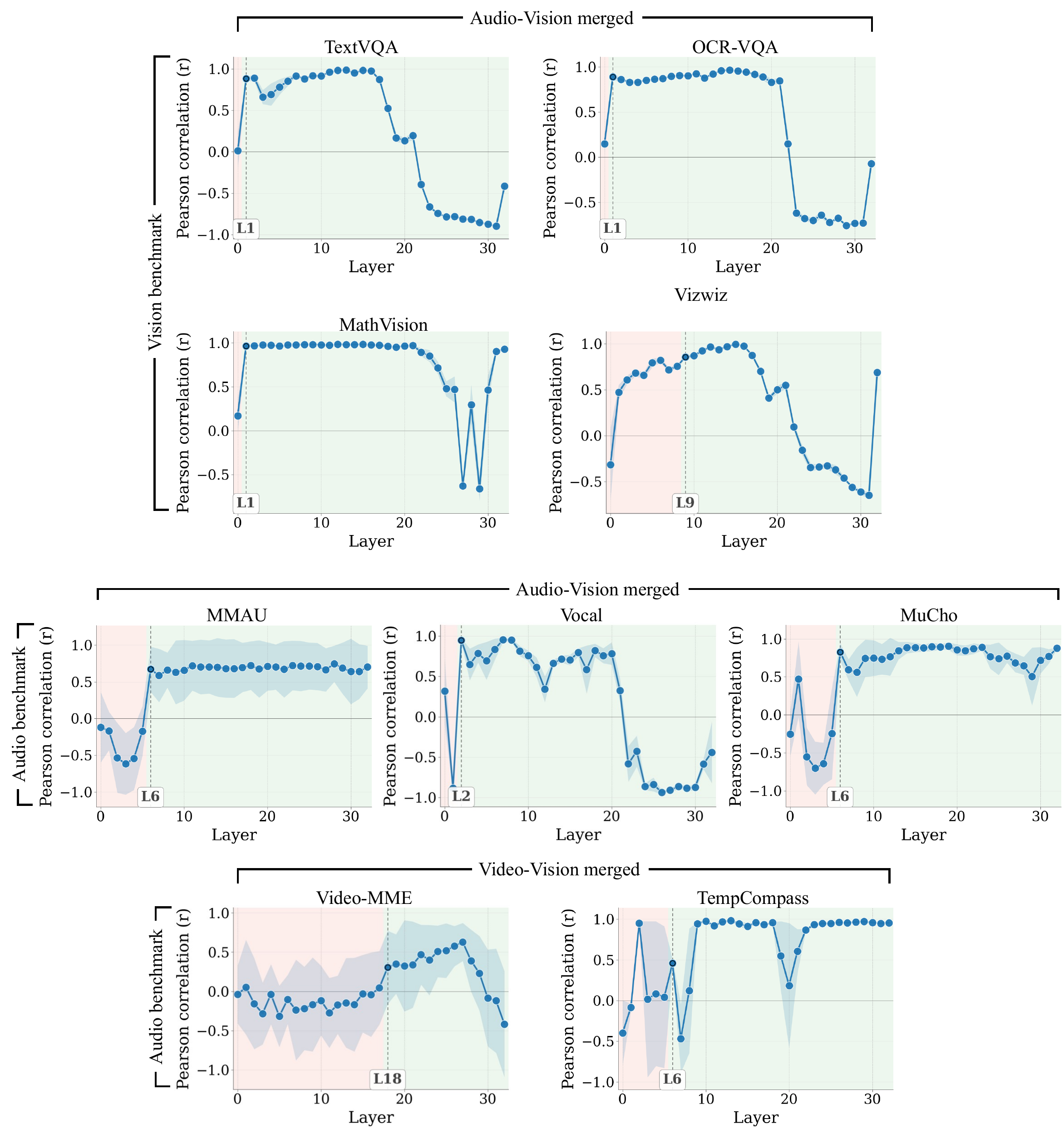}
  \caption{\textbf{Layer-wise Pearson correlation between Alignment score and modality performance on LLaMA-3.1.} We measure the Alignment score in a layer-wise manner for vision, audio, and video on Audio+Vision and Video+Vision LLaMa backbone. Across all cases, the correlation between the Alignment score and accuracy is strongest in the middle layers.}
  \label{fig:supple_layerwise_corr_llama}
\end{figure}

\begin{figure}[h]
  \centering
  \vspace{-1.0em}
  \includegraphics[width=\linewidth]{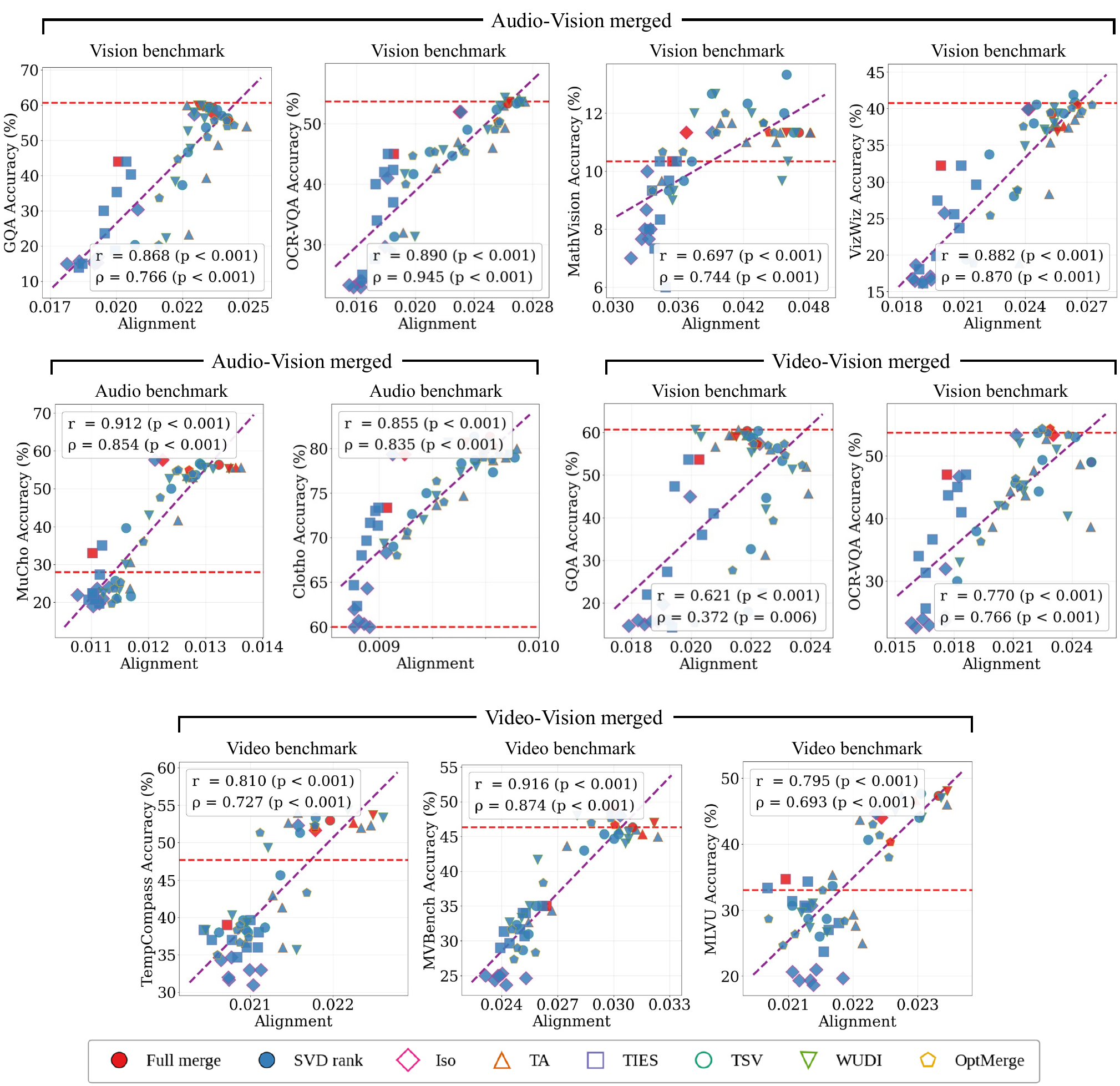}
    \caption{\textbf{Correlation between Alignment score and modality performance on the 21st layer of Vicuna backbone.} We analyze the correlation between the Alignment score and modality performance across checkpoints obtained from merging methods~\cite{ilharco2022editing, yadav2023ties, marczak2025no, gargiulo2025task, cheng2025whoever, wei2025optmerge}, and their SVD-derived variants. The red dotted line refers to the performance of original model. Across all results, the Alignment score shows a positive correlation with modality performance under both Pearson and Spearman measures.}
  \label{fig:supple_21th_corr_vicuna}
\end{figure}

\begin{figure}[h]
  \centering
  \includegraphics[width=\linewidth]{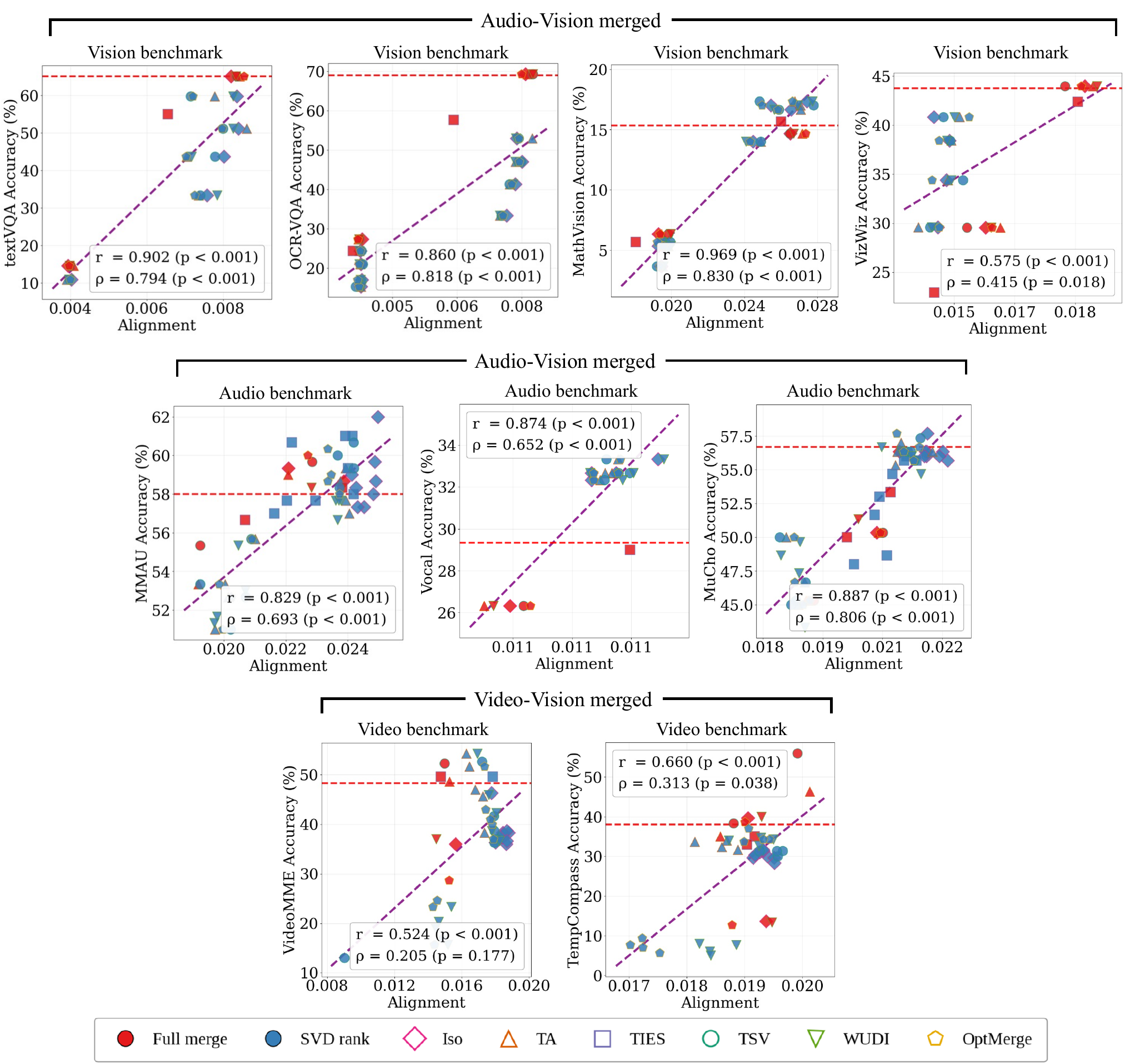}
    \caption{\textbf{Correlation between Alignment score and modality performance on the 21st layer of LLaMA-3.1 backbone.} We analyze the correlation between the Alignment score and modality performance across checkpoints obtained from merging methods~\cite{ilharco2022editing, yadav2023ties, marczak2025no, gargiulo2025task, cheng2025whoever, wei2025optmerge}, and their SVD-derived variants. The red dotted line refers to the performance of original model. Across all results, the Alignment score shows a positive correlation with modality performance under both Pearson and Spearman measures.}
    \label{fig:supple_21th_corr_llama}
  \vspace{-1.5em}
\end{figure}

\FloatBarrier

\subsection{Additional layer-wise SEO and SD analysis}
To further examine how DCAT affects the internal structure of the merged model, we conduct an additional layer-wise analysis of SEO and SD at the level of individual transformer components. Specifically, for the Vicuna backbone, we compare SEO and SD before and after applying DCAT for each major submodule in both the MLP and attention blocks, including the MLP projections (\texttt{gate}, \texttt{down}, and \texttt{up}) and the attention projections (\texttt{q}, \texttt{k}, \texttt{v}, and \texttt{o}). This analysis is performed on the merged Vicuna models in the Audio+Vision and Video+Vision settings.

The results are shown in Fig.~\ref{fig:supple_additional_seo_sd_av} and Fig.~\ref{fig:supple_additional_seo_sd_vdv}. Across all examined components and across both merged models, applying DCAT consistently increases both SEO and SD. This trend is observed not only at the model level but also throughout the individual layers and their constituent projections, indicating that the effect of DCAT is broadly distributed across the transformer rather than being confined to a small subset of parameters. These results provide additional empirical support for our theoretical motivation, showing that DCAT systematically encourages the structural properties associated with a higher mutual-information lower bound throughout the network.

\begin{figure}[h]
  \centering
  \vspace{-1.0em}
  \includegraphics[width=\linewidth]{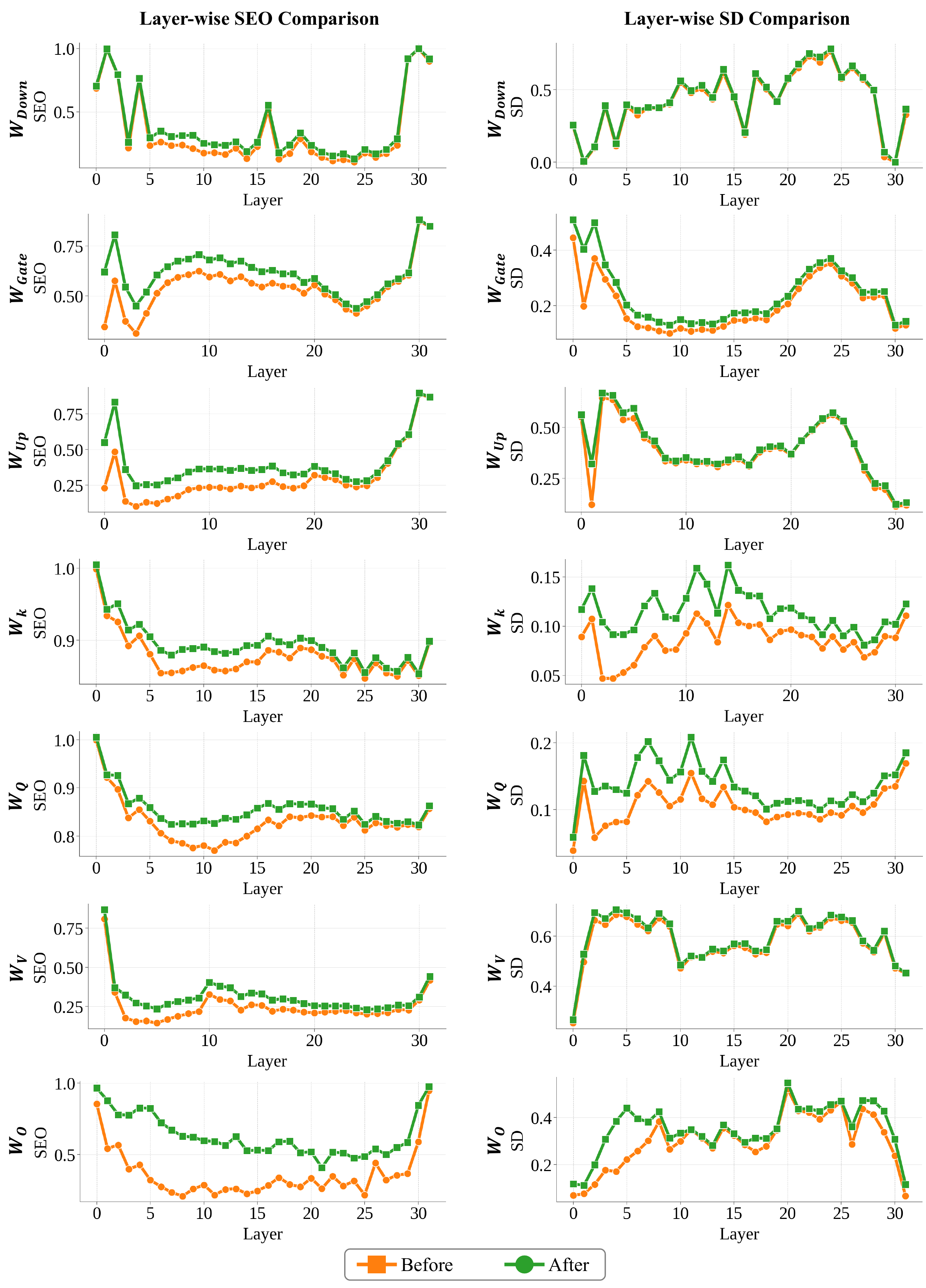}
    \caption{\textbf{Layer-wise SEO and SD of individual transformer submodules before and after applying DCAT on the Audio+Vision Vicuna.} We compare SEO and SD for the MLP projections ($W_{\mathrm{gate}}$, $W_{\mathrm{down}}$, and $W_{\mathrm{up}}$) and the attention projections ($W_{k}$, $W_{q}$, $W_{v}$, and $W_{o}$) across all Vicuna layers. In all examined components, applying DCAT consistently increases both SEO and SD.}
    \label{fig:supple_additional_seo_sd_av}
  \vspace{-1.5em}
\end{figure}

\begin{figure}[h]
  \centering
  \vspace{-1.0em}
  \includegraphics[width=\linewidth]{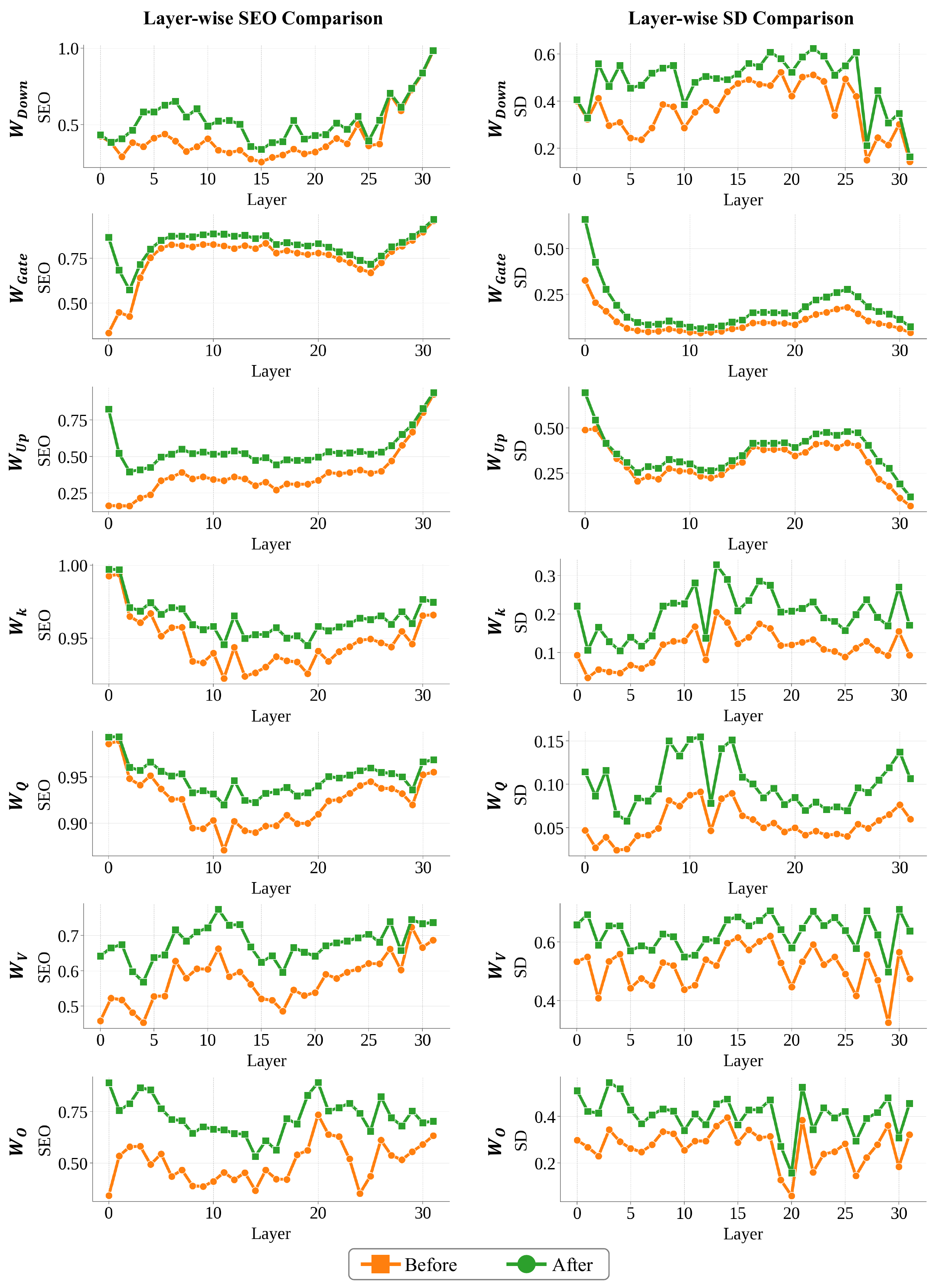}
    \caption{\textbf{Layer-wise SEO and SD of individual transformer submodules before and after applying DCAT on the Video+Vision Vicuna.} We compare SEO and SD for the MLP projections ($W_{\mathrm{gate}}$, $W_{\mathrm{down}}$, and $W_{\mathrm{up}}$) and the attention projections ($W_{k}$, $W_{q}$, $W_{v}$, and $W_{o}$) across all Vicuna layers. In all examined components, applying DCAT consistently increases both SEO and SD.}    \label{fig:supple_additional_seo_sd_vdv}
  \vspace{-1.5em}
\end{figure}

\FloatBarrier

\begin{figure}[h]
  \centering
  \vspace{-1.0em}
  \includegraphics[width=\linewidth]{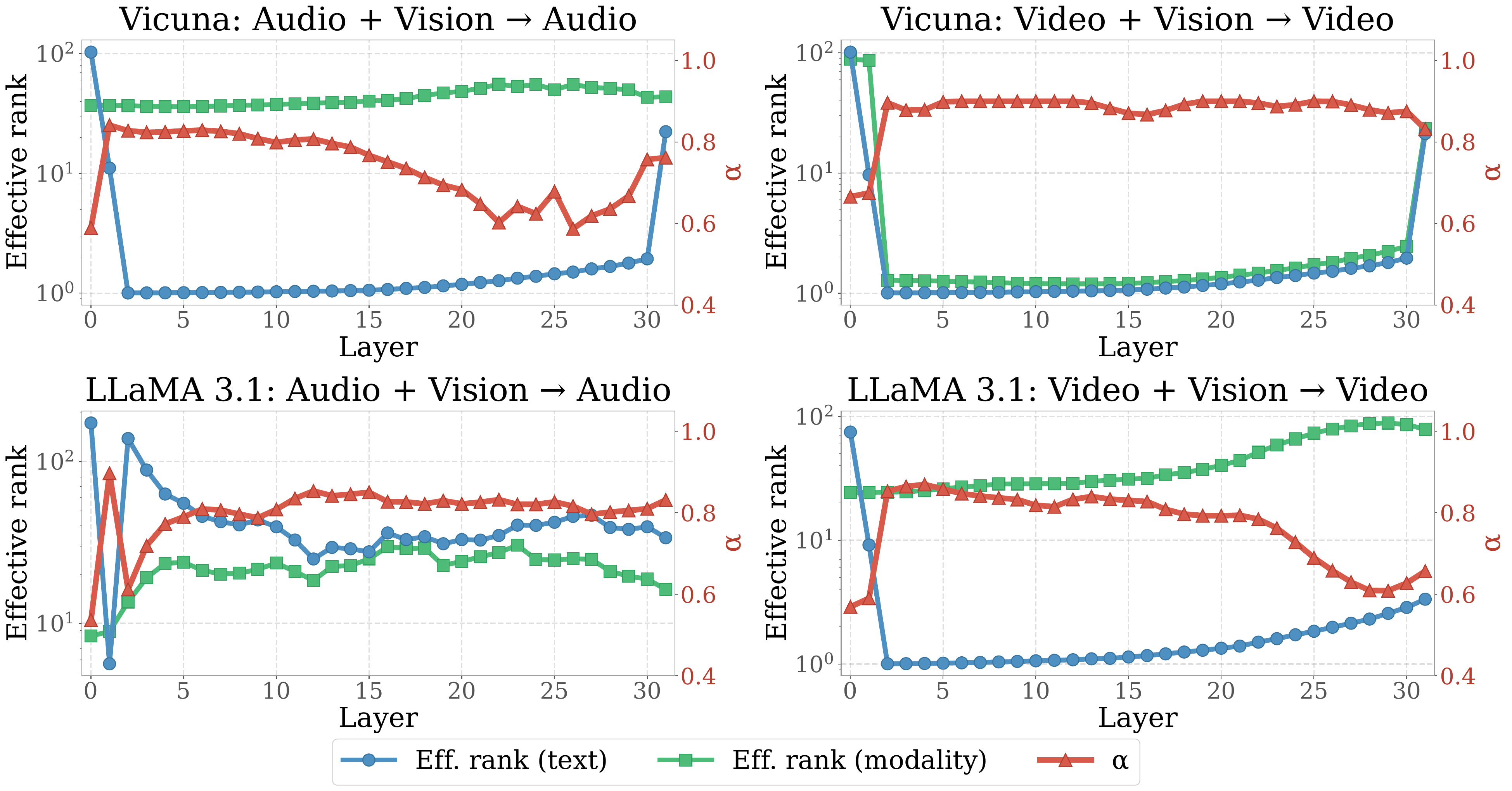}
  \caption{\textbf{Layer-wise effective rank and scheduling of $\alpha$.} DCAT schedules $\alpha$ inversely with the text effective rank, as small text effective rank with large $\alpha$ could introduce noisy alignment between text and modality-token activations.}
  \label{fig:plot_eff_rank_alpha}
  \vspace{-0.5em}
\end{figure}

\subsection{Rationale of scheduling $\alpha$}

DCAT adjusts $\alpha$ based on the relative changes in SEO and SD, as described in Eq.~\ref{eq:alpha_schedule}.
This design is motivated by the layer-wise structure of the text subspace.
As shown in Fig.~\ref{fig:plot_eff_rank_alpha}, the effective rank of text-token activations generally decreases in the middle layers, although the degree of this trend varies across architectures and modalities.
When the text subspace is effectively low-rank, aggressively increasing SD may spread modality information over weak or noisy text-space directions, leading to noisy alignment rather than reliable modality-text alignment.
To mitigate this risk, DCAT uses the relative behavior of SEO and SD to adaptively schedule $\alpha$, making the target more conservative when the reliable text subspace is limited.

Empirically, this schedule induces an inverse relationship between the text effective rank and $\alpha$.
In layers where the text effective rank is small, $\alpha$ becomes larger, encouraging DCAT to emphasize SEO and preserve the recipient's projected spectrum rather than over-promoting SD.
Conversely, when the text effective rank is larger, $\alpha$ decreases and allows stronger encouragement of SD.
Although the LLaMA 3.1-based audio model shows a relatively different effective-rank pattern compared with the other settings, the scheduled $\alpha$ still follows the intended inverse trend with respect to the text effective rank.

\section{Additional Results}

\subsection{Statistical significance of presented results}\label{sec:append_statistical}
To assess whether the quantitative improvements reported in the main text are statistically significant, we additionally compute the performance gain relative to the original modality-specific model and the corresponding $p$-value for each benchmark in the Audio+Vision and Video+Vision Vicuna settings. Specifically, for each merging method (\cite{ilharco2022editing, yadav2023ties, marczak2025no, gargiulo2025task, cheng2025whoever, wei2025optmerge}), including ours, we report the performance difference in percentage points ($\Delta$, \%p) with respect to the original model, together with its statistical significance.

We evaluate statistical significance using McNemar's test, which is appropriate in our setting because all compared models are evaluated on the same paired samples from each benchmark. Since each sample yields a matched pair of outcomes between two models (correct or incorrect), McNemar's test provides a natural way to determine whether the observed difference in accuracy is statistically meaningful beyond aggregate score differences.

The results in Table~\ref{tab:quant_audio} for Audio+Vision and Table~\ref{tab:quant_video} for Video+Vision show that our method yields statistically significant improvements across all reported settings. In the tables, $p$-values below 0.05 are highlighted in yellow, and those below 0.01 are highlighted in red. While baseline merging methods sometimes improve performance, their gains are not consistently significant and vary depending on the benchmark and backbone. In contrast, our method achieves positive improvements with statistically significant $p$-values in all cases, demonstrating that the performance gains reported in the main text are not only numerically favorable but also consistently supported at the sample level.

\begin{table}[]
    \caption{\textbf{$p$-value of DCAT on audio+vision}}
    \label{tab:quant_audio}
    \centering
    \resizebox{\textwidth}{!}{
    \begin{tabular}{cl|cc|cc|cc|cc|cc}
    \toprule
         & & \multicolumn{2}{c|}{MMAU} & \multicolumn{2}{c|}{AIR-Bench} & \multicolumn{2}{c|}{MuCho} & \multicolumn{2}{c|}{Clotho} & \multicolumn{2}{c}{Vocal} \\
         & & $\Delta$ (\%p) & $p$-value & $\Delta$ (\%p) & $p$-value & $\Delta$ (\%p) & $p$-value & $\Delta$ (\%p) & $p$-value & $\Delta$ (\%p) & $p$-value \\
    \midrule
    \multirow{7}{*}{\rotatebox{90}{Vicuna-based}} & TA & +4.30 & \cellcolor{tabthird}1.71e-02 & +5.91 & \cellcolor{tabfirst}3.62e-61 & +23.08 & \cellcolor{tabfirst}8.57e-29 & +19.76 & \cellcolor{tabfirst}4.14e-36 & -4.18 & \cellcolor{tabfirst}1.75e-07\\
    & TIES & +2.10 & 1.78e-01 & +5.50 & \cellcolor{tabfirst}3.74e-78 & +11.46 & \cellcolor{tabfirst}7.79e-17 & +13.73 & \cellcolor{tabfirst}1.83e-24 & +1.95 & \cellcolor{tabthird}1.55e-02\\
    & ISO & +4.50 & \cellcolor{tabfirst}5.57e-03 & +6.62 & \cellcolor{tabfirst}3.53e-88 & +25.86 & \cellcolor{tabfirst}1.64e-41 & +17.34 & \cellcolor{tabfirst}2.99e-33 & -4.68 & \cellcolor{tabfirst}2.86e-13\\
    & TSV & +3.80 & \cellcolor{tabthird}2.80e-02 & +7.34 & \cellcolor{tabfirst}3.15e-112 & +23.84 & \cellcolor{tabfirst}1.65e-33 & +19.97 & \cellcolor{tabfirst}4.21e-37 & -1.45 & \cellcolor{tabthird}4.50e-02\\
    & WUDI & +4.80 & \cellcolor{tabfirst}6.74e-03 & +6.06 & \cellcolor{tabfirst}5.07e-64 & +22.49 & \cellcolor{tabfirst}1.35e-27 & +19.49 & \cellcolor{tabfirst}1.09e-34 & -2.01 & \cellcolor{tabfirst}9.10e-03\\
    & OptMerge & +4.20 & \cellcolor{tabfirst}7.74e-03 & +7.33 & \cellcolor{tabfirst}6.15e-127 & +23.25 & \cellcolor{tabfirst}5.65e-37 & +19.00 & \cellcolor{tabfirst}1.97e-36 & +0.08 & 9.39e-01\\
    & \textbf{DCAT (Ours)} & +7.30 & \cellcolor{tabfirst}4.12e-04 & +9.27 & \cellcolor{tabfirst}2.42e-116 & +27.21 & \cellcolor{tabfirst}7.82e-39 & +21.73 & \cellcolor{tabfirst}2.27e-43 & +2.27 & \cellcolor{tabfirst}5.83e-03\\
    \midrule
    \multirow{13}{*}{\rotatebox{90}{LLaMA 3.1-based}}  & TA$^*$ ($\lambda=1.0$)  & -2.90 & 8.41e-02 & -3.18 & \cellcolor{tabfirst}2.78e-17 & -2.95 & 1.01e-01 & +1.60 & 2.62e-01 & -3.45 & \cellcolor{tabfirst}3.47e-04\\
    & TIES ($\lambda=1.0$) & +0.50 & 7.85e-01 & -0.43 & 2.17e-01 & +2.36 & 1.67e-01 & +2.08 & 5.99e-02 & -0.53 & 4.11e-01\\
    & ISO ($\lambda=1.0$) & +2.10 & 1.52e-01 & -2.23 & \cellcolor{tabfirst}5.00e-11 & +3.37 & \cellcolor{tabthird}4.42e-02 & +3.19 & \cellcolor{tabfirst}1.48e-03 & -1.67 & \cellcolor{tabfirst}1.26e-06\\
    & TSV$^*$ ($\lambda=1.0$) & -2.90 & 8.41e-02 & -3.18 & \cellcolor{tabfirst}2.78e-17 & -2.95 & 1.01e-01 & +1.60 & 2.62e-01 & -3.45 & \cellcolor{tabfirst}3.47e-04\\
    & WUDI ($\lambda=1.0$) & -1.90 & 2.69e-01 & -3.30 & \cellcolor{tabfirst}2.47e-18 & -2.61 & 1.43e-01 & +1.80 & 2.12e-01 & -4.12 & \cellcolor{tabfirst}1.80e-05\\
    & OptMerge$^*$ ($\lambda=1.0$) & -2.90 & 8.41e-02 & -3.18 & \cellcolor{tabfirst}2.78e-17 & -2.95 & 1.01e-01 & +1.60 & 2.62e-01 & -3.45 & \cellcolor{tabfirst}3.47e-04\\
    \cmidrule{2-12}
    & TA$^*$ ($\lambda=0.2$)  & +1.70 & 2.21e-01 & -0.04 & 8.92e-01 & +1.94 & 2.17e-01 & +2.08 & \cellcolor{tabfirst}7.85e-03 & +0.19 & 6.84e-01\\
    & TIES ($\lambda=0.2$) & +2.10 & 1.01e-01 & +0.51 & 6.96e-02 & +2.02 & 1.17e-01 & +1.18 & 7.48e-02 & -0.45 & 1.09e-01\\
    & ISO ($\lambda=0.2$) & +0.60 & 6.75e-01 & +0.64 & \cellcolor{tabthird}1.54e-02 & +0.67 & 4.75e-01 & +2.01 & \cellcolor{tabfirst}9.14e-04 & -0.84 & \cellcolor{tabfirst}2.88e-03\\
    & TSV$^*$ ($\lambda=0.2$) & +1.70 & 2.21e-01 & -0.04 & 8.92e-01 & +1.94 & 2.17e-01 & +2.08 & \cellcolor{tabfirst}7.85e-03 & +0.19 & 6.84e-01\\
    & WUDI ($\lambda=0.2$) & +0.20 & 9.39e-01 & +0.02 & 9.57e-01 & +0.17 & 9.50e-01 & +2.01 & \cellcolor{tabthird}1.49e-02 & -1.11 & \cellcolor{tabthird}3.11e-02\\
    & OptMerge$^*$ ($\lambda=0.2$) & +1.70 & 2.21e-01 & -0.04 & 8.92e-01 & +1.94 & 2.17e-01 & +2.08 & \cellcolor{tabfirst}7.85e-03 & +0.19 & 6.84e-01\\
    & \textbf{DCAT (Ours)} & +3.66 & \cellcolor{tabfirst}5.20e-03 & +2.07 & \cellcolor{tabfirst}3.41e-14 & +5.73 & \cellcolor{tabfirst}1.73e-05 & +6.08 & \cellcolor{tabfirst}2.34e-13 & +0.92 & \cellcolor{tabfirst}5.33e-03\\
    \bottomrule
    \end{tabular}
    }
\end{table}

\begin{table}[]
    \caption{\textbf{$p$-value of DCAT on video+vision}}
    \label{tab:quant_video}
    \centering
    \resizebox{\textwidth}{!}{
    \begin{tabular}{cl|cc|cc|cc|cc}
    \toprule
         & & \multicolumn{2}{c|}{Video-MME} & \multicolumn{2}{c|}{TempCompass} & \multicolumn{2}{c|}{MVBench} & \multicolumn{2}{c}{MLVU} \\
         & & $\Delta$ (\%p) & $p$-value & $\Delta$ (\%p) & $p$-value & $\Delta$ (\%p) & $p$-value & $\Delta$ (\%p) & $p$-value \\
    \midrule
    \multirow{7}{*}{\rotatebox{90}{Vicuna-based}} & TA & +2.11 & 7.31e-02 & -0.07 & 9.47e-01 & +5.20 & \cellcolor{tabfirst}1.30e-07 & +12.37 & \cellcolor{tabfirst}3.29e-20 \\
    & TIES & -6.70 & \cellcolor{tabfirst}7.38e-16 & -0.83 & 1.81e-01 & -5.70 & \cellcolor{tabfirst}2.99e-17 & +0.41 & 6.36e-01 \\
    & ISO & +3.44 & \cellcolor{tabfirst}8.96e-04 & -0.07 & 9.43e-01 & +4.53 & \cellcolor{tabfirst}1.74e-06 & +13.06 & \cellcolor{tabfirst}2.53e-21 \\
    & TSV & +4.04 & \cellcolor{tabfirst}1.33e-04 & +1.79 & \cellcolor{tabthird}1.13e-02 & +6.10 & \cellcolor{tabfirst}2.25e-10 & +12.97 & \cellcolor{tabfirst}1.32e-28 \\
    & WUDI & +1.56 & 1.81e-01 & -0.52 & 5.39e-01 & +4.83 & \cellcolor{tabfirst}8.67e-07 & +12.05 & \cellcolor{tabfirst}1.21e-19 \\
    & OptMerge & +3.30 & \cellcolor{tabfirst}6.17e-05 & +1.01 & 6.10e-02 & +6.35 & \cellcolor{tabfirst}5.89e-14 & +10.35 & \cellcolor{tabfirst}6.89e-24 \\
    & \textbf{DCAT (Ours)} & +6.53 & \cellcolor{tabfirst}3.06e-08 & +4.25 & \cellcolor{tabfirst}1.11e-06 & +9.17 & \cellcolor{tabfirst}2.93e-20 & +13.75 & \cellcolor{tabfirst}6.69e-39 \\
    \midrule
\multirow{13}{*}{\rotatebox{90}{LLaMA 3.1-based}} & TA$^*$ ($\lambda=1.0$) & +0.70 & 3.94e-01 & +1.82 & \cellcolor{tabthird}1.18e-02 & +2.88 & \cellcolor{tabfirst}1.14e-04 & +11.55 & \cellcolor{tabfirst}5.43e-31 \\
    & TIES ($\lambda=1.0$) & +3.30 & \cellcolor{tabfirst}1.15e-04 & -4.06 & \cellcolor{tabfirst}4.81e-08 & +5.88 & \cellcolor{tabfirst}1.44e-13 & +13.75 & \cellcolor{tabfirst}1.74e-39 \\
    & ISO ($\lambda=1.0$) & +2.70 & \cellcolor{tabfirst}2.30e-03 & -22.44 & \cellcolor{tabfirst}1.88e-166 & +5.03 & \cellcolor{tabfirst}3.54e-09 & +11.96 & \cellcolor{tabfirst}1.18e-29 \\
    & TSV$^*$ ($\lambda=1.0$) & +3.67 & \cellcolor{tabfirst}8.75e-06 & +5.49 & \cellcolor{tabfirst}6.86e-15 & +6.00 & \cellcolor{tabfirst}5.26e-15 & +14.54 & \cellcolor{tabfirst}1.27e-45 \\
    & WUDI ($\lambda=1.0$) & +1.96 & \cellcolor{tabthird}1.49e-02 & -40.75 & \cellcolor{tabfirst}0.00e+00 & +4.35 & \cellcolor{tabfirst}3.66e-08 & +13.89 & \cellcolor{tabfirst}6.65e-40 \\
    & OptMerge$^*$ ($\lambda=1.0$) & +4.67 & \cellcolor{tabfirst}1.20e-07 & -47.69 & \cellcolor{tabfirst}0.00e+00 & +6.23 & \cellcolor{tabfirst}4.68e-14 & +15.59 & \cellcolor{tabfirst}6.01e-47 \\
    \cmidrule{2-10}
    & TA$^*$ ($\lambda=0.2$) & +4.74 & \cellcolor{tabfirst}1.16e-06 & -8.13 & \cellcolor{tabfirst}2.98e-24 & +2.18 & \cellcolor{tabthird}1.44e-02 & +9.43 & \cellcolor{tabfirst}3.83e-17 \\
    & TIES ($\lambda=0.2$) & +1.70 & 9.89e-02 & -6.36 & \cellcolor{tabfirst}4.20e-15 & +4.43 & \cellcolor{tabfirst}3.71e-07 & +5.80 & \cellcolor{tabfirst}6.91e-06 \\
    & ISO ($\lambda=0.2$) & +1.33 & 2.07e-01 & +5.91 & \cellcolor{tabfirst}5.03e-14 & +3.80 & \cellcolor{tabfirst}8.24e-06 & +11.64 & \cellcolor{tabfirst}2.07e-20 \\
    & TSV$^*$ ($\lambda=0.2$) & +3.26 & \cellcolor{tabfirst}1.17e-03 & +2.08 & \cellcolor{tabfirst}7.87e-03 & +4.65 & \cellcolor{tabfirst}1.02e-07 & +7.54 & \cellcolor{tabfirst}6.81e-10 \\
    & WUDI ($\lambda=0.2$) & +4.11 & \cellcolor{tabfirst}3.67e-05 & +7.15 & \cellcolor{tabfirst}6.29e-21 & +3.55 & \cellcolor{tabfirst}7.58e-05 & +11.45 & \cellcolor{tabfirst}9.42e-22 \\
    & OptMerge$^*$ ($\lambda=0.2$) & +2.41 & \cellcolor{tabthird}1.91e-02 & +6.02 & \cellcolor{tabfirst}8.59e-15 & +5.48 & \cellcolor{tabfirst}5.69e-10 & +9.25 & \cellcolor{tabfirst}8.34e-14 \\
    & \textbf{DCAT (Ours)} & +6.25 & \cellcolor{tabfirst}3.85e-09 & +8.60 & \cellcolor{tabfirst}7.24e-23 & +10.18 & \cellcolor{tabfirst}6.03e-30 & +18.36 & \cellcolor{tabfirst}5.51e-57 \\
    \bottomrule
    \end{tabular}
    }
\end{table}

\subsection{Additional quantitative results}\label{sec:append_subcategory}
For some of the results presented in Tab.\ref{tab:quant}, we reported only the overall or average scores rather than detailed subcategory results due to space constraints. In this section, we provide these detailed subcategory results, as well as additional benchmark results on EgoSchema\cite{mangalam2023egoschema} and LongVideoBench~\cite{wu2024longvideobench}. The results are reported in Tab.\ref{tab:append_MVBench_sub} and Tab.\ref{tab:append_mlvu_tempcompass_sub}. These results show that DCAT consistently improves performance across diverse subcategories and additional benchmarks, while maintaining a clear performance advantage over other merging methods.

\newcommand{\tabincell}[2]{\begin{tabular}{@{}#1@{}}#2\end{tabular}}
\begin{table}[]
    \caption{\textbf{Detailed results for MVBench benchmark.} We report the specific performance measured from subcategories on MVBench.}
    \label{tab:append_MVBench_sub}
    \centering
    \resizebox{\textwidth}{!}{
    \begin{tabular}{cl|cccccccccc}
    \toprule
         & & \multicolumn{10}{c}{MVBench} \\
         & & \tabincell{c}{Action \\ Antonym} & \tabincell{c}{Action \\ Count} & \tabincell{c}{Action \\ Localization} & \tabincell{c}{Action \\ Prediction} & \tabincell{c}{Action \\ Sequence} & \tabincell{c}{Character \\ Order} & \tabincell{c}{Counterfactual \\ Inference} & \tabincell{c}{Egocentric \\ Navigation} & \tabincell{c}{Episodic \\ Reasoning} & \tabincell{c}{Fine Grained \\ Action}\\
    \midrule
    \multirow{8}{*}{\rotatebox{90}{Vicuna-based}} & Original & 51.50 & \cellcolor{tabfirst}38.00 & 27.50 & 29.00 & 35.00 & \cellcolor{tabsecond}47.00 & 34.00 & 27.00 & 32.00 & 40.00 \\
    \cmidrule{2-12}
 & TA & \cellcolor{tabthird}62.00 & 32.50 & \cellcolor{tabsecond}36.00 & 46.00 & 48.50 & 45.00 & 34.00 & \cellcolor{tabfirst}34.00 & 40.50 & 38.00 \\
 & TIES & 55.50 & 16.50 & 26.00 & 24.50 & 32.00 & 28.50 & 31.50 & 25.00 & 22.50 & 30.50 \\
 & ISO & 54.50 & \cellcolor{tabfirst}38.00 & 28.00 & \cellcolor{tabfirst}49.00 & 46.00 & \cellcolor{tabthird}46.50 & \cellcolor{tabthird}41.00 & \cellcolor{tabfirst}34.00 & 37.50 & \cellcolor{tabsecond}41.00 \\
 & TSV & 60.50 & \cellcolor{tabthird}36.50 & 31.00 & \cellcolor{tabsecond}48.00 & \cellcolor{tabthird}50.00 & 46.00 & 38.50 & \cellcolor{tabfirst}34.00 & \cellcolor{tabthird}41.00 & \cellcolor{tabthird}40.50 \\
 & WUDI & \cellcolor{tabsecond}63.00 & 32.00 & \cellcolor{tabthird}34.00 & \cellcolor{tabthird}46.50 & \cellcolor{tabthird}50.00 & \cellcolor{tabthird}46.50 & 35.00 & \cellcolor{tabsecond}32.00 & 38.50 & 39.50 \\
 & OptMerge & 61.50 & \cellcolor{tabthird}36.50 & 28.50 & \cellcolor{tabfirst}49.00 & \cellcolor{tabsecond}52.50 & 44.00 & \cellcolor{tabsecond}43.00 & \cellcolor{tabthird}31.50 & \cellcolor{tabsecond}43.50 & 38.50 \\
 & \textbf{DCAT (Ours)} & \cellcolor{tabfirst}65.00 & \cellcolor{tabsecond}37.00 & \cellcolor{tabfirst}37.00 & \cellcolor{tabfirst}49.00 & \cellcolor{tabfirst}53.00 & \cellcolor{tabfirst}49.00 & \cellcolor{tabfirst}46.00 & \cellcolor{tabfirst}34.00 & \cellcolor{tabfirst}45.50 & \cellcolor{tabfirst}42.00 \\
    \midrule
    \multirow{14}{*}{\rotatebox{90}{LLaMA 3.1-based}} & Original & 61.50 & 18.00 & 30.00 & \cellcolor{tabsecond}47.00 & 45.00 & 50.00 & \cellcolor{tabsecond}39.50 & 21.00 & 33.50 & 36.50 \\
    \cmidrule{2-12}
 & TA$^*$ ($\lambda=1.0$) & 71.00 & \cellcolor{tabsecond}45.50 & 26.00 & 44.00 & 45.50 & 49.00 & 31.00 & 25.00 & 44.50 & 36.50 \\
 & TIES ($\lambda=1.0$) & \cellcolor{tabthird}75.00 & 39.50 & 27.50 & \cellcolor{tabthird}46.00 & 50.00 & \cellcolor{tabthird}55.50 & 32.50 & 33.50 & 47.00 & \cellcolor{tabthird}40.50 \\
 & ISO ($\lambda=1.0$) & 64.50 & \cellcolor{tabthird}44.50 & 27.50 & \cellcolor{tabsecond}47.00 & 50.00 & 52.00 & \cellcolor{tabthird}36.00 & 37.00 & 46.00 & 40.00 \\
 & TSV$^*$ ($\lambda=1.0$) & \cellcolor{tabfirst}76.50 & 41.00 & 28.50 & 45.50 & 50.50 & 55.00 & \cellcolor{tabthird}36.00 & 26.00 & 45.00 & 39.50 \\
 & WUDI ($\lambda=1.0$) & 69.00 & 42.00 & \cellcolor{tabthird}30.50 & 45.00 & 53.00 & \cellcolor{tabthird}55.50 & 30.00 & 29.50 & 46.00 & 38.00 \\
 & OptMerge$^*$ ($\lambda=1.0$) & 71.00 & 36.50 & \cellcolor{tabsecond}32.50 & 45.00 & 53.00 & \cellcolor{tabfirst}57.00 & 33.00 & 34.50 & 46.00 & \cellcolor{tabsecond}41.50 \\
    \cmidrule{2-12}
 & TA$^*$ ($\lambda=0.2$) & 63.50 & 28.00 & 24.00 & 45.00 & 54.00 & 44.50 & 35.50 & 1.50 & 45.00 & 38.50 \\
 & TIES ($\lambda=0.2$) & 64.50 & 27.50 & 21.00 & 44.00 & 57.00 & 49.50 & 32.50 & 36.50 & 49.50 & 36.50 \\
 & ISO ($\lambda=0.2$) & 58.00 & 27.50 & 25.50 & 40.50 & 56.50 & 48.00 & \cellcolor{tabfirst}40.00 & 36.50 & \cellcolor{tabsecond}51.00 & 35.50 \\
 & TSV$^*$ ($\lambda=0.2$) & 62.00 & 28.00 & 24.50 & 40.50 & \cellcolor{tabsecond}59.00 & 50.00 & \cellcolor{tabsecond}39.50 & \cellcolor{tabsecond}38.00 & \cellcolor{tabthird}50.50 & 36.50 \\
 & WUDI ($\lambda=0.2$) & 61.50 & 23.00 & 27.50 & 39.00 & 55.00 & 49.50 & 35.50 & \cellcolor{tabthird}37.50 & 49.50 & 38.50 \\
 & OptMerge$^*$ ($\lambda=0.2$) & 64.00 & 28.50 & 26.50 & 42.50 & \cellcolor{tabthird}58.50 & 49.50 & \cellcolor{tabfirst}40.00 & \cellcolor{tabthird}37.50 & \cellcolor{tabthird}50.50 & 34.50 \\
 & \textbf{DCAT (Ours)} & \cellcolor{tabsecond}75.50 & \cellcolor{tabfirst}48.50 & \cellcolor{tabfirst}33.50 & \cellcolor{tabfirst}48.50 & \cellcolor{tabfirst}60.00 & \cellcolor{tabsecond}56.00 & \cellcolor{tabsecond}39.50 & \cellcolor{tabfirst}39.50 & \cellcolor{tabfirst}51.50 & \cellcolor{tabfirst}42.00 \\
    \bottomrule
    \end{tabular}
    }
    \centering
    \resizebox{\textwidth}{!}{
    \begin{tabular}{cl|ccccccccccc}
    \toprule
         & & \multicolumn{11}{c}{MVBench} \\
         & & \tabincell{c}{Fine Grained \\ Pose} & \tabincell{c}{Moving \\ Attribute} & \tabincell{c}{Moving \\ Count} & \tabincell{c}{Moving \\ Direction} & \tabincell{c}{Object \\ Existence} & \tabincell{c}{Object \\ Interaction} & \tabincell{c}{Object \\ Shuffle} & \tabincell{c}{Scene \\ Transition} & \tabincell{c}{State \\ Change} & \tabincell{c}{Unexpected \\ Action} & Average\\
    \midrule
    \multirow{8}{*}{\rotatebox{90}{Vicuna-based}} & Original & 28.00 & 43.50 & 22.00 & 27.50 & 47.00 & 52.50 & \cellcolor{tabsecond}38.50 & 65.50 & 39.50 & 44.50 & 38.48 \\
    \cmidrule{2-13}
 & TA & \cellcolor{tabthird}33.00 & 47.00 & \cellcolor{tabthird}30.00 & \cellcolor{tabthird}30.00 & \cellcolor{tabsecond}54.50 & 54.00 & 34.50 & \cellcolor{tabthird}82.00 & \cellcolor{tabthird}45.00 & 47.00 & 43.68 \\
 & TIES & 31.00 & 42.00 & 20.50 & 19.50 & 39.00 & 47.00 & 30.00 & 54.00 & 39.00 & 41.00 & 32.78 \\
 & ISO & 29.50 & 47.50 & \cellcolor{tabsecond}34.50 & 27.00 & \cellcolor{tabthird}53.00 & 51.50 & 35.00 & 78.50 & 39.00 & 49.00 & 43.00 \\
 & TSV & 32.00 & \cellcolor{tabsecond}49.50 & 29.00 & \cellcolor{tabsecond}31.50 & 52.50 & \cellcolor{tabthird}55.00 & \cellcolor{tabthird}37.00 & \cellcolor{tabsecond}82.50 & \cellcolor{tabsecond}46.50 & \cellcolor{tabthird}50.00 & \cellcolor{tabthird}44.58 \\
 & WUDI & 30.50 & 47.00 & 27.50 & 28.50 & \cellcolor{tabfirst}55.00 & 52.00 & 34.00 & \cellcolor{tabthird}82.00 & \cellcolor{tabthird}45.00 & 47.50 & 43.30 \\
 & OptMerge & \cellcolor{tabsecond}33.50 & \cellcolor{tabthird}49.00 & 28.50 & \cellcolor{tabsecond}31.50 & 47.00 & \cellcolor{tabsecond}56.50 & \cellcolor{tabthird}37.00 & \cellcolor{tabfirst}83.00 & \cellcolor{tabsecond}46.50 & \cellcolor{tabfirst}55.50 & \cellcolor{tabsecond}44.83 \\
 & \textbf{DCAT (Ours)} & \cellcolor{tabfirst}35.00 & \cellcolor{tabfirst}51.00 & \cellcolor{tabfirst}36.50 & \cellcolor{tabfirst}33.00 & \cellcolor{tabsecond}54.50 & \cellcolor{tabfirst}58.50 & \cellcolor{tabfirst}41.00 & \cellcolor{tabfirst}83.00 & \cellcolor{tabfirst}48.50 & \cellcolor{tabsecond}54.50 & \cellcolor{tabfirst}47.65 \\
    \midrule
    \multirow{14}{*}{\rotatebox{90}{LLaMA 3.1-based}} & Original & 34.50 & 45.50 & 31.00 & 25.00 & \cellcolor{tabthird}51.00 & 58.50 & 31.50 & 79.00 & 38.50 & 43.50 & 41.00 \\
    \cmidrule{2-13}
 & TA$^*$ ($\lambda=1.0$) & 35.50 & \cellcolor{tabsecond}48.00 & 38.00 & 22.50 & \cellcolor{tabsecond}53.00 & 54.00 & 29.50 & 82.00 & 44.50 & 52.50 & 43.88 \\
 & TIES ($\lambda=1.0$) & 34.00 & \cellcolor{tabthird}47.00 & 32.00 & 30.50 & \cellcolor{tabfirst}53.50 & \cellcolor{tabsecond}67.00 & \cellcolor{tabsecond}35.50 & 87.00 & \cellcolor{tabthird}45.50 & 58.50 & 46.88 \\
 & ISO ($\lambda=1.0$) & 30.50 & 46.00 & 36.50 & 31.00 & 48.00 & 55.50 & 32.00 & \cellcolor{tabthird}88.00 & 43.50 & 65.00 & 46.03 \\
 & TSV$^*$ ($\lambda=1.0$) & \cellcolor{tabthird}36.50 & \cellcolor{tabsecond}48.00 & 34.50 & \cellcolor{tabthird}33.00 & \cellcolor{tabsecond}53.00 & 64.00 & \cellcolor{tabthird}34.00 & 87.00 & \cellcolor{tabsecond}46.00 & 60.50 & \cellcolor{tabthird}47.00 \\
 & WUDI ($\lambda=1.0$) & 35.00 & 44.50 & 36.00 & 27.00 & \cellcolor{tabfirst}53.50 & 59.50 & 30.00 & 83.50 & 41.50 & 58.00 & 45.35 \\
 & OptMerge$^*$ ($\lambda=1.0$) & 36.00 & 45.00 & 35.00 & 32.50 & \cellcolor{tabsecond}53.00 & 64.50 & \cellcolor{tabsecond}35.50 & 87.00 & 45.00 & 61.00 & \cellcolor{tabsecond}47.23 \\
    \cmidrule{2-13}
 & TA$^*$ ($\lambda=0.2$) & 36.00 & 46.00 & 38.00 & \cellcolor{tabsecond}34.00 & 48.50 & \cellcolor{tabthird}65.00 & 30.00 & 76.00 & \cellcolor{tabthird}45.50 & 65.00 & 43.18 \\
 & TIES ($\lambda=0.2$) & \cellcolor{tabsecond}37.00 & 46.50 & \cellcolor{tabsecond}40.00 & \cellcolor{tabfirst}35.00 & 48.00 & 63.50 & 31.50 & 78.00 & 43.50 & 67.00 & 45.43 \\
 & ISO ($\lambda=0.2$) & 35.00 & 45.50 & 35.50 & 22.50 & 49.00 & 59.50 & 33.50 & \cellcolor{tabsecond}89.00 & 40.00 & 67.50 & 44.80 \\
 & TSV$^*$ ($\lambda=0.2$) & \cellcolor{tabsecond}37.00 & 45.50 & 37.00 & 26.00 & 48.00 & 63.50 & 32.00 & 82.50 & 45.00 & \cellcolor{tabthird}68.00 & 45.65 \\
 & WUDI ($\lambda=0.2$) & 35.50 & 44.00 & 38.50 & 28.00 & 44.50 & 63.00 & 28.50 & 87.00 & 40.00 & 65.50 & 44.55 \\
 & OptMerge$^*$ ($\lambda=0.2$) & \cellcolor{tabthird}36.50 & 44.00 & \cellcolor{tabthird}39.50 & \cellcolor{tabsecond}34.00 & 45.50 & 64.50 & 33.00 & \cellcolor{tabthird}88.00 & 42.50 & \cellcolor{tabsecond}70.00 & 46.48 \\
 & \textbf{DCAT (Ours)} & \cellcolor{tabfirst}39.00 & \cellcolor{tabfirst}49.00 & \cellcolor{tabfirst}41.00 & \cellcolor{tabfirst}35.00 & \cellcolor{tabsecond}53.00 & \cellcolor{tabfirst}68.00 & \cellcolor{tabfirst}36.50 & \cellcolor{tabfirst}90.00 & \cellcolor{tabfirst}47.00 & \cellcolor{tabfirst}70.50 & \cellcolor{tabfirst}51.18 \\
    \bottomrule
    \end{tabular}
    }
\end{table}

\begin{table}[]
    \caption{\textbf{Detailed results for MLVU, TempCompass benchmark and other benchmarks.} We report the specific performance measured from subcategories on MLVU, TempCompass(Multi Choice, Caption Matching, Yes/No), and other two benchmarks(EgoSchema, LongVideoBench).}
    \label{tab:append_mlvu_tempcompass_sub}
    \centering
    \resizebox{\textwidth}{!}{
    \begin{tabular}{cl|cccccccc|cccccc}
    \toprule
         & & \multicolumn{8}{c|}{MLVU} & \multicolumn{6}{c}{TempCompass - Multi Choice} \\
         & & Count & \tabincell{c}{Topic \\ Reasoning} & Ego & Order & PlotQA & Needle & \tabincell{c}{Anomaly \\ Reco.} & Average & Action & Attr. & Dir. & Order & Speed & Overall \\
    \midrule
    \multirow{8}{*}{\rotatebox{90}{Vicuna-based}} & Original & 27.18 & 57.79 & 28.41 & 23.17 & 30.80 & 33.80 & 27.00 & 32.59 & 70.41 & 35.07 & 33.13 & 37.75 & 32.81 & 42.28 \\
    \cmidrule{2-16}
    & TA & \cellcolor{tabsecond}34.47 & 65.40 & 34.66 & 23.17 & \cellcolor{tabthird}50.09 & \cellcolor{tabsecond}45.07 & \cellcolor{tabthird}61.00 & 44.84 & \cellcolor{tabthird}82.84 & 45.14 & \cellcolor{tabsecond}37.01 & 38.74 & 35.33 & 48.29 \\
    & TIES & 25.73 & 58.56 & 27.27 & 23.17 & 32.47 & 31.83 & 33.00 & 33.15 & 63.61 & 33.33 & 32.54 & 35.76 & 30.60 & 39.56 \\
    & ISO & 30.10 & 67.30 & \cellcolor{tabthird}42.33 & 22.39 & 49.35 & \cellcolor{tabthird}44.23 & \cellcolor{tabsecond}61.50 & \cellcolor{tabsecond}45.31 & 81.66 & \cellcolor{tabthird}46.18 & 35.52 & 39.40 & 35.65 & 48.10 \\
    & TSV & \cellcolor{tabthird}33.98 & \cellcolor{tabthird}70.72 & \cellcolor{tabsecond}42.90 & \cellcolor{tabthird}23.94 & 48.98 & 42.25 & 53.50 & \cellcolor{tabthird}45.18 & \cellcolor{tabfirst}86.69 & \cellcolor{tabsecond}47.57 & \cellcolor{tabthird}36.42 & \cellcolor{tabsecond}42.38 & 35.02 & \cellcolor{tabsecond}50.06 \\
    & WUDI & 33.01 & 65.78 & 35.51 & 23.17 & \cellcolor{tabsecond}50.28 & 43.38 & 59.50 & 44.38 & 82.25 & 45.49 & 36.12 & \cellcolor{tabthird}40.07 & \cellcolor{tabthird}36.28 & \cellcolor{tabthird}48.48 \\
    & OptMerge & 30.58 & \cellcolor{tabsecond}71.10 & 36.08 & \cellcolor{tabsecond}25.10 & 43.60 & 40.56 & 56.00 & 43.29 & 79.29 & 44.10 & 35.22 & 37.75 & \cellcolor{tabsecond}38.17 & 47.34 \\
    & \textbf{DCAT (Ours)} & \cellcolor{tabfirst}35.74 & \cellcolor{tabfirst}72.37 & \cellcolor{tabfirst}44.24 & \cellcolor{tabfirst}25.73 & \cellcolor{tabfirst}51.73 & \cellcolor{tabfirst}46.04 & \cellcolor{tabfirst}62.33 & \cellcolor{tabfirst}46.34 & \cellcolor{tabsecond}86.27 & \cellcolor{tabfirst}48.16 & \cellcolor{tabfirst}38.06 & \cellcolor{tabfirst}43.54 & \cellcolor{tabfirst}38.45 & \cellcolor{tabfirst}51.62 \\
    \midrule
    \multirow{14}{*}{\rotatebox{90}{LLaMA 3.1-based}} & Original & 3.40 & 50.19 & 34.66 & 16.22 & 43.04 & 35.49 & 39.50 & 31.79 & 87.57 & 46.88 & \cellcolor{tabthird}36.12 & \cellcolor{tabthird}37.09 & 35.96 & 49.24 \\
    \cmidrule{2-16}
    & TA$^*$ ($\lambda=1.0$) & \cellcolor{tabthird}37.86 & 70.72 & 43.75 & 23.94 & 47.50 & 44.79 & 48.00 & 45.22 & 86.98 & 45.14 & 33.73 & \cellcolor{tabthird}37.09 & 36.91 & 48.48 \\
    & TIES ($\lambda=1.0$) & \cellcolor{tabsecond}38.35 & 73.00 & 46.02 & 23.17 & \cellcolor{tabthird}51.21 & 47.61 & 50.50 & 47.12 & 91.12 & \cellcolor{tabthird}49.31 & 34.03 & 35.10 & 42.27 & 50.89 \\
    & ISO ($\lambda=1.0$) & 24.27 & 67.30 & 42.05 & \cellcolor{tabthird}25.10 & 49.17 & \cellcolor{tabthird}49.58 & \cellcolor{tabfirst}59.50 & 45.28 & 67.46 & 38.19 & 26.27 & 18.21 & 17.03 & 33.86 \\
    & TSV$^*$ ($\lambda=1.0$) & \cellcolor{tabsecond}38.35 & \cellcolor{tabthird}74.52 & 47.73 & 23.55 & 51.02 & 47.89 & 53.50 & \cellcolor{tabthird}48.08 & 90.53 & 48.61 & 33.13 & 34.77 & \cellcolor{tabthird}43.85 & 50.70 \\
    & WUDI ($\lambda=1.0$) & \cellcolor{tabsecond}38.35 & 70.72 & 46.31 & 23.55 & 50.09 & 49.01 & \cellcolor{tabthird}54.50 & 47.51 & 17.16 & 8.68 & 6.57 & 4.64 & 6.62 & 8.86 \\
    & OptMerge$^*$ ($\lambda=1.0$) & \cellcolor{tabfirst}39.32 & 74.14 & 48.01 & \cellcolor{tabsecond}26.25 & \cellcolor{tabsecond}51.95 & 49.01 & \cellcolor{tabsecond}56.00 & \cellcolor{tabsecond}49.24 & 6.51 & 5.21 & 4.18 & 0.99 & 2.21 & 3.86 \\
    \cmidrule{2-16}
    & TA$^*$ ($\lambda=0.2$) & 13.11 & 70.72 & 47.16 & 14.67 & 50.46 & 48.45 & 42.00 & 40.94 & \cellcolor{tabsecond}93.20 & 48.96 & 34.93 & 36.76 & \cellcolor{tabsecond}47.00 & \cellcolor{tabsecond}52.72 \\
    & TIES ($\lambda=0.2$) & 21.36 & 54.37 & \cellcolor{tabsecond}51.14 & 11.97 & 41.19 & 42.54 & 47.50 & 38.58 & 89.65 & 43.75 & 28.96 & 31.79 & 37.85 & 46.96 \\
    & ISO ($\lambda=0.2$) & 21.84 & 73.38 & \cellcolor{tabthird}49.15 & 23.55 & 47.68 & 46.76 & 49.00 & 44.48 & 90.53 & 48.61 & 34.03 & \cellcolor{tabthird}37.09 & 42.27 & 51.01 \\
    & TSV$^*$ ($\lambda=0.2$) & 17.48 & 68.82 & \cellcolor{tabthird}49.15 & 13.51 & 43.04 & 41.97 & 49.00 & 40.42 & \cellcolor{tabthird}92.60 & 48.96 & 33.73 & 34.11 & 41.64 & 50.76 \\
    & WUDI ($\lambda=0.2$) & 13.59 & \cellcolor{tabsecond}75.29 & 48.30 & 21.62 & 48.24 & \cellcolor{tabsecond}50.14 & 49.50 & 43.81 & 92.31 & \cellcolor{tabsecond}49.65 & \cellcolor{tabsecond}36.42 & \cellcolor{tabsecond}38.08 & 43.53 & \cellcolor{tabthird}52.53 \\
    & OptMerge$^*$ ($\lambda=0.2$) & 17.96 & 69.20 & 48.58 & 18.92 & 45.83 & 45.35 & 47.00 & 41.83 & 91.72 & 47.57 & 31.94 & 35.43 & 42.27 & 50.32 \\
    & \textbf{DCAT (Ours)} & 37.38 & \cellcolor{tabfirst}76.81 & \cellcolor{tabfirst}51.70 & \cellcolor{tabfirst}27.03 & \cellcolor{tabfirst}52.69 & \cellcolor{tabfirst}50.99 & \cellcolor{tabthird}54.50 & \cellcolor{tabfirst}50.15 & \cellcolor{tabfirst}93.58 & \cellcolor{tabfirst}50.50 & \cellcolor{tabfirst}37.75 & \cellcolor{tabfirst}39.25 & \cellcolor{tabfirst}47.90 & \cellcolor{tabfirst}53.15 \\
    \bottomrule
    \end{tabular}
    }

    \centering
    \resizebox{\textwidth}{!}{
    \begin{tabular}{cl|cccccc|cccccc|cc}
    \toprule
         & & \multicolumn{6}{c|}{TempCompass - Caption Matching} & \multicolumn{6}{c|}{TempCompass - Yes/No} & \multicolumn{2}{c}{Others}\\
         & & Action & Attr. & Dir. & Order & Speed & Overall & Action & Attr. & Dir. & Order & Speed & Overall & EgoSchema & \tabincell{c}{LongVideo \\ Bench} \\
    \midrule
    \multirow{8}{*}{\rotatebox{90}{Vicuna-based}} & Original & \cellcolor{tabthird}85.86 & \cellcolor{tabthird}59.38 & \cellcolor{tabthird}55.96 & 49.67 & \cellcolor{tabsecond}59.11 & \cellcolor{tabsecond}61.88 & 79.39 & \cellcolor{tabsecond}62.72 & \cellcolor{tabsecond}54.05 & 53.66 & 46.37 & 60.01 & 48.60 & 39.04 \\
    \cmidrule{2-16}
    & TA & 67.00 & 54.17 & 54.43 & \cellcolor{tabfirst}51.67 & 47.08 & 54.89 & \cellcolor{tabsecond}81.08 & 57.59 & 48.58 & 55.24 & \cellcolor{tabthird}53.82 & 60.25 & 48.80 & 37.62 \\
    & TIES & \cellcolor{tabfirst}86.87 & 58.33 & 49.54 & 48.67 & 52.92 & 59.08 & 78.72 & 57.14 & \cellcolor{tabthird}53.85 & \cellcolor{tabfirst}56.28 & \cellcolor{tabsecond}57.36 & \cellcolor{tabsecond}61.60 & 38.20 & 35.90 \\
    & ISO & 80.47 & 57.29 & 52.60 & 47.00 & 55.33 & 58.42 & 77.53 & 56.47 & 49.80 & 51.57 & 50.84 & 58.21 & 47.20 & 38.07 \\
    & TSV & 82.49 & \cellcolor{tabfirst}62.15 & \cellcolor{tabsecond}56.57 & \cellcolor{tabthird}51.00 & 56.01 & \cellcolor{tabthird}61.54 & \cellcolor{tabthird}80.57 & 56.92 & 49.60 & 53.40 & 50.65 & 59.23 & \cellcolor{tabsecond}51.20 & \cellcolor{tabthird}40.02 \\
    & WUDI & 64.65 & 55.21 & 52.91 & 49.67 & 45.70 & 53.63 & 80.07 & 58.04 & 48.38 & 54.71 & 53.45 & 59.89 & \cellcolor{tabthird}49.60 & 38.30 \\
    & OptMerge & 81.48 & 57.99 & 53.21 & 46.33 & \cellcolor{tabthird}57.39 & 59.15 & \cellcolor{tabsecond}81.08 & \cellcolor{tabthird}61.38 & 51.82 & \cellcolor{tabthird}55.50 & 49.53 & \cellcolor{tabthird}60.70 & 47.60 & \cellcolor{tabsecond}40.46 \\
    & \textbf{DCAT (Ours)} & \cellcolor{tabsecond}86.08 & \cellcolor{tabsecond}60.67 & \cellcolor{tabfirst}57.23 & \cellcolor{tabsecond}51.28 & \cellcolor{tabfirst}59.77 & \cellcolor{tabfirst}62.67 & \cellcolor{tabfirst}81.24 & \cellcolor{tabfirst}63.56 & \cellcolor{tabfirst}55.61 & \cellcolor{tabsecond}56.23 & \cellcolor{tabfirst}59.02 & \cellcolor{tabfirst}63.11 & \cellcolor{tabfirst}52.05 & \cellcolor{tabfirst}40.54 \\
    \midrule
    \multirow{14}{*}{\rotatebox{90}{LLaMA 3.1-based}} & Original & 78.11 & 32.99 & 48.32 & 41.00 & 32.30 & 46.71 & 76.35 & 50.45 & 49.60 & 47.38 & 47.49 & 55.40 & 37.20 & 40.17 \\
    \cmidrule{2-16}
    & TA$^*$ ($\lambda=1.0$) & 67.00 & 44.79 & 42.51 & 47.67 & 42.27 & 48.77 & \cellcolor{tabthird}78.04 & \cellcolor{tabfirst}55.36 & \cellcolor{tabthird}50.00 & \cellcolor{tabfirst}50.79 & \cellcolor{tabsecond}54.00 & \cellcolor{tabfirst}58.74 & 41.80 & 40.31 \\
    & TIES ($\lambda=1.0$) & 58.59 & 41.67 & 41.28 & 37.67 & 38.83 & 43.58 & 73.48 & 43.53 & 47.17 & 39.01 & 26.63 & 47.09 & \cellcolor{tabthird}48.40 & 44.35 \\
    & ISO ($\lambda=1.0$) & 54.55 & 29.17 & 29.05 & 32.00 & 37.46 & 36.33 & 25.34 & 29.91 & 25.10 & 10.47 & 12.66 & 21.04 & 41.20 & 45.25 \\
    & TSV$^*$ ($\lambda=1.0$) & 82.49 & 54.86 & \cellcolor{tabsecond}57.19 & 56.33 & 54.30 & 61.01 & \cellcolor{tabfirst}79.56 & \cellcolor{tabsecond}54.24 & 48.79 & \cellcolor{tabsecond}50.00 & 51.96 & \cellcolor{tabthird}58.09 & 46.60 & 44.35 \\
    & WUDI ($\lambda=1.0$) & 2.36 & 1.74 & 0.61 & 0.33 & 0.69 & 1.13 & 29.39 & 20.54 & 20.24 & 6.02 & 6.89 & 17.37 & 34.80 & 42.78 \\
    & OptMerge$^*$ ($\lambda=1.0$) & 0.67 & 0.35 & 1.53 & 0.00 & 0.69 & 0.67 & 7.94 & 5.58 & 6.28 & 4.19 & 1.68 & 5.22 & 37.20 & 45.10 \\
    \cmidrule{2-16}
    & TA$^*$ ($\lambda=0.2$) & 49.49 & 37.85 & 36.09 & 45.00 & 46.05 & 42.78 & 53.55 & 31.25 & 37.45 & 28.01 & 30.54 & 37.22 & \cellcolor{tabsecond}49.20 & 48.09 \\
    & TIES ($\lambda=0.2$) & 53.20 & 31.25 & 32.72 & 32.33 & 42.27 & 38.26 & 57.94 & 38.17 & 45.75 & 42.67 & 49.72 & 47.70 & 44.60 & 46.67 \\
    & ISO ($\lambda=0.2$) & 87.21 & 56.94 & 52.60 & 56.00 & 62.54 & 62.87 & 77.03 & 51.79 & \cellcolor{tabsecond}52.63 & 47.38 & \cellcolor{tabthird}53.26 & 57.68 & 42.80 & 46.30 \\
    & TSV$^*$ ($\lambda=0.2$) & 75.76 & 50.00 & 44.34 & 55.00 & 59.11 & 56.62 & 72.97 & 45.09 & 49.39 & 42.93 & 48.23 & 53.04 & 46.80 & \cellcolor{tabthird}48.17 \\
    & WUDI ($\lambda=0.2$) & \cellcolor{tabsecond}93.94 & \cellcolor{tabsecond}64.58 & \cellcolor{tabthird}56.88 & \cellcolor{tabsecond}62.00 & \cellcolor{tabthird}63.23 & \cellcolor{tabthird}67.93 & 77.87 & 48.88 & 49.39 & 46.86 & 52.33 & 56.42 & 47.40 & \cellcolor{tabsecond}48.54 \\
    & OptMerge$^*$ ($\lambda=0.2$) & \cellcolor{tabthird}93.60 & \cellcolor{tabthird}63.54 & \cellcolor{tabthird}56.88 & \cellcolor{tabthird}61.00 & \cellcolor{tabsecond}67.35 & \cellcolor{tabsecond}68.26 & 74.66 & 47.10 & \cellcolor{tabthird}50.00 & 45.29 & 51.77 & 55.08 & 47.00 & 47.64 \\
    & \textbf{DCAT (Ours)} & \cellcolor{tabfirst}94.25 & \cellcolor{tabfirst}65.36 & \cellcolor{tabfirst}58.10 & \cellcolor{tabfirst}63.35 & \cellcolor{tabfirst}68.15 & \cellcolor{tabfirst}69.14 & \cellcolor{tabsecond}78.30 & \cellcolor{tabthird}52.70 & \cellcolor{tabfirst}54.05 & \cellcolor{tabthird}48.50 & \cellcolor{tabfirst}54.25 & \cellcolor{tabsecond}58.54 & \cellcolor{tabfirst}49.80 & \cellcolor{tabfirst}49.06 \\
    \bottomrule
    \end{tabular}
    }
\end{table}

\subsection{Results on vision benchmarks}\label{sec:append_vision_benchmark}
We use vision MLLMs as donor models to improve the performance of audio and video MLLMs. A natural question is how the vision performance of the resulting transferred models changes after DCAT. We investigate this question using Vicuna-based models, and report the results in Tab.\ref{tab:vision_benchmarks_vicuna_grouped}. We evaluate on a diverse set of vision benchmarks, including GQA\cite{hudson2019gqa}, MMBench~\cite{liu2024mmbench}, MMStar~\cite{chen2024we}, POPE~\cite{li2023evaluating}, and Text-VQA~\cite{singh2019towards}. When the audio MLLM is used as the recipient, the vision performance is largely preserved or slightly improved, indicating only minor perturbations. In contrast, when the video MLLM is used as the recipient, we observe some degree of degradation in vision performance. This behavior is expected because DCAT transfers alignment rather than modality-specific capabilities. We emphasize that our goal is not to obtain a single model that performs well on both modalities, but rather to transfer the capability of a strongly aligned MLLM from one modality to another in order to improve performance on the target modality.

\begin{table*}[t]
    \caption{\textbf{Results on vision benchmarks (Vicuna-based).} The performance of Vicuna based audio-vision(A+V) and video-vision(Vd+V) models are measured from five different benchmarks: GQA, MMBench-Dev(EN), MMStar, POPE, TextVQA-Val.}
    \label{tab:vision_benchmarks_vicuna_grouped}
    \centering
    \resizebox{\textwidth}{!}{
    \begin{tabular}{l|c|c|c|c|c|c|c|c|c|c|c|c}
    \toprule
    \multirow{2}{*}{} 
    & \multicolumn{2}{c|}{GQA}
    & \multicolumn{2}{c|}{MMBench}
    & \multicolumn{2}{c|}{MMStar}
    & \multicolumn{2}{c|}{POPE}
    & \multicolumn{2}{c|}{Text-VQA}
    & \multicolumn{2}{c}{Average} \\
    \midrule
    Original
    & \multicolumn{2}{c|}{61.91}
    & \multicolumn{2}{c|}{63.83}
    & \multicolumn{2}{c|}{35.44}
    & \multicolumn{2}{c|}{84.31}
    & \multicolumn{2}{c|}{47.76}
    & \multicolumn{2}{c}{58.65} \\
    \midrule
    Merging Method 
    & A+V & Vd+V 
    & A+V & Vd+V  
    & A+V & Vd+V  
    & A+V & Vd+V  
    & A+V & Vd+V  
    & A+V & Vd+V \\
    \midrule
    TA 
    & 61.76 & 58.08 & 62.97 & 60.40 & 35.62 & 34.69 & 85.34 & 84.83 & 49.10 & 45.39 & 58.96 & 56.68 \\
    TIES 
    & 43.89 & 53.02 & 57.99 & 59.36 & 33.98 & 33.61 & 79.56 & 85.06 & 42.99 & 46.26 & 51.68 & 55.46 \\
    ISO 
    & 58.75 & 57.62 & 61.43 & 60.74 & 34.07 & 32.48 & 85.31 & 82.51 & 48.15 & 46.89 & 57.54 & 56.05 \\
    TSV 
    & 60.92 & 60.75 & 63.83 & 63.66 & 34.26 & 33.58 & 86.02 & 85.58 & 49.86 & 47.87 & 58.98 & 58.29 \\
    WUDI 
    & 61.64 & 58.28 & 63.06 & 60.82 & 35.39 & 33.97 & 85.32 & 84.36 & 48.88 & 45.71 & 58.86 & 56.63 \\
    OptMerge 
    & 58.08 & 59.34 & 62.71 & 62.20 & 34.06 & 33.82 & 86.08 & 86.20 & 49.35 & 48.71 & 58.06 & 58.05 \\
    \midrule
    \textbf{DCAT (Ours)}
    & 61.43 & 40.62 & 63.75 & 51.46 & 33.53 & 35.06 & 85.13 & 61.08 & 49.30 & 24.34 & 58.63 & 42.51  \\
    \bottomrule
    \end{tabular}
    }
\end{table*}

\subsection{Results on audio-video DCAT}
Beyond using a vision MLLM as the donor in DCAT, we also conducted experiments with Vicuna-based models in two additional settings: Video-to-Audio, where a video MLLM is used as the donor and an audio MLLM as the recipient, and Audio-to-Video, where an audio MLLM is used as the donor and a video MLLM as the recipient. The results are reported in Tab.\ref{tab:audio_video_dcat}. In the Video-to-Audio setting, DCAT improves the performance of the audio MLLM, whereas the improvement in the Audio-to-Video setting is limited. We hypothesize that this discrepancy arises from differences in the intrinsic alignment quality of the donor model, which is consistent with the conjecture discussed in Sec.\ref{sec:discussion} of the main manuscript.

\begin{table*}[t]
    \centering
    \caption{\textbf{Audio-video cross-modal transfer results.}
    We report DCAT performance when transferring alignment between audio and video models.
    The left table evaluates video-to-audio transfer on MMAU~\cite{sakshi2024mmau}, while the right table evaluates audio-to-video transfer on Video-MME~\cite{fu2025video}.}
    \label{tab:audio_video_dcat}
    \setlength{\tabcolsep}{2pt}
    \renewcommand{\arraystretch}{1.08}

    \begin{minipage}[t]{0.49\textwidth}
        \centering
        \small
        \textbf{MMAU (Video-to-Audio)}
        \vspace{2pt}
        
        \begin{tabular}{lcccc}
            \toprule
            \textbf{Model} 
            & Music & Sound & Speech & Overall \\
            \midrule
            Original
            & 54.79 & 60.66 & 31.23 & 48.90 \\
            DCAT
            & 57.49 & 66.07 & 37.24 & 53.60 \\
            \bottomrule
        \end{tabular}
    \end{minipage}%
    \hfill
    \begin{minipage}[t]{0.49\textwidth}
        \centering
        \small
        \textbf{Video-MME (Audio-to-Video)}
        \vspace{2pt}
        
        \begin{tabular}{lcccc}
            \toprule
            \textbf{Model} 
            & Short & Medium & Long & Overall \\
            \midrule
            Original
            & 39.89 & 34.22 & 30.89 & 35.00 \\
            DCAT
            & 37.78 & 33.78 & 29.33 & 33.63 \\
            \bottomrule
        \end{tabular}
    \end{minipage}
\end{table*}

\subsection{Calibration-matched baselines}
We compare DCAT with LoRA fine-tuning (LoRA rank=256, batch size=8, 128 steps), RegMean~\cite{jin2022dataless}, and RegMean++~\cite{huu2025regmean++} under a matched calibration budget of 256 AVQA samples. Tab.~\ref{tab:calib_matched} report results on Vicuna-based models. DCAT outperforms the strongest calibration-aware baseline on both MMAU and Video-MME (by 1.9 and 1.3 points overall, respectively) across all categories in each benchmark, suggesting that DCAT's gains arise from its alignment-transfer objective, which leverages the donor model to jointly improve subspace overlap, spectral diversity, and text-compatible directions.

\begin{table}[h]
    \caption{\textbf{Calibration-matched comparison.} We compare DCAT with LoRA fine-tuning, RegMean~\cite{jin2022dataless}, and RegMean++~\cite{huu2025regmean++} under a matched calibration budget of 256 AVQA samples. MMAU is evaluated with Audio+Vision Vicuna-based models, and Video-MME with Video+Vision Vicuna-based models. LoRA is trained with rank 256, batch size 8, and 128 steps. The best results are in \textbf{bold}, and the second-best are \underline{underlined}.}
    \label{tab:calib_matched}
    \centering
    \renewcommand{\arraystretch}{1.0}
    \setlength{\tabcolsep}{9pt}
    \resizebox{\textwidth}{!}{%
    \begin{tabular}{l|cccc|cccc}
    \toprule
    \multirow{2}{*}{\textbf{Method}} 
    & \multicolumn{4}{c|}{MMAU} & \multicolumn{4}{c}{Video-MME} \\
    & Music & Sound & Speech & Overall & Short & Medium & Long & Overall \\
    \midrule
    Original & 54.8 & 60.7 & 31.2 & 48.9 & 39.9 & 34.2 & 30.9 & 35.0 \\
    LoRA fine-tuned & 53.6 & 62.2 & 33.0 & 49.6 & 44.2 & 35.7 & 31.3 & 37.1 \\
    RegMean~\cite{jin2022dataless} & 58.5 & \underline{66.5} & \underline{37.9} & \underline{54.3} & 45.8 & 38.4 & \underline{34.8} & 39.7 \\
    RegMean++~\cite{huu2025regmean++} & \underline{58.8} & \underline{66.5} & 35.5 & 53.6 & \underline{46.9} & \underline{39.3} & 34.6 & \underline{40.3} \\
    \midrule
    DCAT (ours) & \textbf{60.2} & \textbf{69.1} & \textbf{41.7} & \textbf{56.2} & \textbf{48.0} & \textbf{40.5} & \textbf{36.2} & \textbf{41.6} \\
    \bottomrule
    \end{tabular}
    }
\end{table}

\subsection{Calibration sample robustness}
To study the robustness of DCAT with respect to the calibration sample, we randomly sampled 256 AVQA training examples using 10 different random seeds and applied DCAT to both Audio+Vision and Video+Vision settings. The results in Tab.~\ref{tab:calib_seed} show low overall standard deviations of 0.33 on MMAU and 0.14 on VideoMME across calibration subsets, demonstrating that DCAT is robust to calibration sampling and random seed within AVQA.

\begin{table}[h]
    \caption{\textbf{Robustness to calibration sampling.} We randomly sample 256 AVQA training examples with 10 different random seeds and apply DCAT to the Audio+Vision (MMAU) and Video+Vision (Video-MME) settings. We report per-seed results, together with the mean and standard deviation over the 10 calibration subsets.}
    \label{tab:calib_seed}
    \centering
    \renewcommand{\arraystretch}{1.0}
    \setlength{\tabcolsep}{9pt}
    \resizebox{\textwidth}{!}{%
    \begin{tabular}{l|cccc|cccc}
    \toprule
    \multirow{2}{*}{\textbf{Seed}} 
    & \multicolumn{4}{c|}{MMAU} & \multicolumn{4}{c}{Video-MME} \\
    & Music & Sound & Speech & Overall & Short & Medium & Long & Overall \\
    \midrule
    0 & 59.38 & 68.57 & 40.04 & 56.00 & 48.64 & 40.53 & 35.53 & 41.57 \\
    1 & 59.08 & 68.27 & 39.74 & 55.70 & 48.76 & 39.98 & 35.53 & 41.42 \\
    2 & 60.88 & 67.67 & 39.74 & 56.10 & 48.09 & 40.31 & 35.76 & 41.39 \\
    3 & 59.68 & 66.47 & 40.34 & 55.50 & 47.64 & 41.09 & 35.31 & 41.35 \\
    4 & 57.89 & 67.67 & 40.34 & 55.30 & 48.76 & 40.98 & 35.20 & 41.64 \\
    5 & 57.89 & 68.87 & 41.54 & 56.10 & 47.87 & 41.09 & 35.09 & 41.35 \\
    6 & 59.68 & 68.27 & 40.34 & 56.10 & 47.53 & 41.09 & 35.09 & 41.24 \\
    7 & 59.68 & 67.67 & 40.94 & 56.10 & 47.98 & 40.64 & 35.64 & 41.42 \\
    8 & 58.19 & 67.97 & 40.34 & 55.50 & 48.20 & 40.20 & 35.20 & 41.20 \\
    9 & 60.28 & 67.67 & 40.64 & 56.20 & 48.31 & 40.31 & 35.09 & 41.24 \\
    \midrule
    \textbf{Mean} & \textbf{59.26} & \textbf{67.91} & \textbf{40.40} & \textbf{55.86} & \textbf{48.18} & \textbf{40.62} & \textbf{35.34} & \textbf{41.38} \\
    \textbf{Std.} & \textbf{1.01} & \textbf{0.66} & \textbf{0.54} & \textbf{0.33} & \textbf{0.44} & \textbf{0.42} & \textbf{0.25} & \textbf{0.14} \\
    \bottomrule
    \end{tabular}
    }
\end{table}

\subsection{Language ability on alignment-transferred models}
We evaluate general language ability after DCAT on 13 text benchmarks~\cite{hendrycks2020measuring, clark2018think, zellers2019hellaswag, sakaguchi2021winogrande, bisk2020piqa, clark2019boolq, welbl2017crowdsourcing, mihaylov2018can, talmor2019commonsenseqa, sap2019social, lai2017race, roemmele2011choice, zhou2023instruction} for Vicuna-based Audio+Vision and Video+Vision models in Tab.~\ref{tab:lang_ability}. McNemar’s test shows no significant change for Audio+Vision ($p=0.506$) and a significant improvement for Video+Vision ($p\ll0.01$), indicating no degradation.

\begin{table}[h]
    \caption{\textbf{Language ability before and after DCAT.} We evaluate general language ability on 13 text benchmarks for the Vicuna-based Audio+Vision and Video+Vision models before and after applying DCAT. $\Delta$ denotes After $-$ Before. McNemar's test shows no significant change for Audio+Vision ($p=0.506$) and a significant improvement for Video+Vision ($p\ll0.01$).}
    \label{tab:lang_ability}
    \centering
    \renewcommand{\arraystretch}{1.0}
    \setlength{\tabcolsep}{9pt}
    \resizebox{0.8\textwidth}{!}{%
    \begin{tabular}{l|ccc|ccc}
    \toprule
    \multirow{2}{*}{\textbf{Task}} 
    & \multicolumn{3}{c|}{Audio+Vision} & \multicolumn{3}{c}{Video+Vision} \\
    & Before & After & $\Delta$ & Before & After & $\Delta$ \\
    \midrule
    MMLU & 0.492 & 0.491 & $-$0.001 & 0.501 & 0.510 & $+$0.009 \\
    ARC & 0.550 & 0.514 & $-$0.036 & 0.506 & 0.521 & $+$0.015 \\
    HellaSwag & 0.763 & 0.745 & $-$0.018 & 0.777 & 0.775 & $-$0.002 \\
    WinoGrande & 0.731 & 0.710 & $-$0.021 & 0.722 & 0.722 & 0.000 \\
    PIQA & 0.787 & 0.784 & $-$0.003 & 0.778 & 0.786 & $+$0.008 \\
    BoolQ & 0.821 & 0.841 & $+$0.020 & 0.837 & 0.844 & $+$0.007 \\
    SciQ & 0.890 & 0.900 & $+$0.010 & 0.886 & 0.902 & $+$0.016 \\
    OpenBookQA & 0.446 & 0.454 & $+$0.008 & 0.450 & 0.462 & $+$0.012 \\
    CommonsenseQA & 0.567 & 0.699 & $+$0.132 & 0.651 & 0.708 & $+$0.057 \\
    Social IQa & 0.506 & 0.500 & $-$0.006 & 0.479 & 0.491 & $+$0.012 \\
    RACE & 0.426 & 0.435 & $+$0.009 & 0.426 & 0.434 & $+$0.008 \\
    CoPA & 0.870 & 0.860 & $-$0.010 & 0.870 & 0.860 & $-$0.010 \\
    IFEval & 0.314 & 0.279 & $-$0.035 & 0.307 & 0.311 & $+$0.004 \\
    \midrule
    \textbf{Mean} & \textbf{0.628} & \textbf{0.632} & \textbf{+0.004} & \textbf{0.630} & \textbf{0.641} & \textbf{+0.011} \\
    \bottomrule
    \end{tabular}
    }
\end{table}

\subsection{Additional ablation study}\label{sec:append_ablation}
In this section, in addition to the component-level ablation studies conducted in Tab.~\ref{tab:ablation}, we present ablation studies on key hyperparameters. Specifically, we study the balancing coefficient $\beta$ in $\mathcal{L}_\mathrm{cross}(\tau)$, the balancing coefficient $\eta$ in Eq.~\ref{eq:DCAT_objective}, the subspace dimension $k$ used to construct $P_k$, and the size of the calibration dataset. All experiments are conducted with Vicuna-based models. For each hyperparameter, we select the sweep range of the parameters to the values where marked performance degradation is observed on target-modality benchmarks such as MMAU and Video-MME.
The results of these additional ablation studies are reported in Tab.~\ref{tab:append_ablation}. We observe that appropriate choices of $\beta$ and $\eta$ are important for achieving strong performance. In contrast, DCAT maintains robust performance with respect to the subspace dimension $k$ and the number of calibration samples. These experiments provide empirical rationale for our hyperparameter choices.

\begin{table}[h]
    \caption{\textbf{Hyperparameter study on DCAT.} We conduct a hyperparameter study for DCAT. We evaluate a range of options for $\beta$, $\eta$, $k$, and the calibration sample size, and provide the rationale behind our hyperparameter choices.}
    \label{tab:append_ablation}
    \centering
    \renewcommand{\arraystretch}{1.0}
    \setlength{\tabcolsep}{9pt}
    \resizebox{\textwidth}{!}{%
    \begin{tabular}{l|cccc|cccc}
    \toprule
    \multirow{2}{*}{\textbf{Study subject}} 
    & \multicolumn{4}{c|}{MMAU} & \multicolumn{4}{c}{Video-MME} \\
    & Music & Sound & Speech & Overall & Short & Medium & Long & Overall \\
    \midrule
    Original & 54.79 & 60.66 & 31.23 & 48.90 & 38.33 & 33.67 & 30.67 & 34.22 \\
    Base (DCAT) & 60.22 & 69.06 & 41.74 & 56.20 & 46.33 & 39.67 & 34.33 & 40.11 \\
    \midrule
    $\beta=0.1$ in Eq.~\ref{eq:text_regularization}
    & 50.90 & 60.66 & 34.53 & 48.70 & 37.33 & 29.00 & 26.33 & 30.89 \\
    $\beta=0.2$ in Eq.~\ref{eq:text_regularization}
    & 61.08 & 66.97 & 39.64 & 55.10 & 49.33 & 36.67 & 34.67 & 40.22 \\
    $\beta=0.4$ in Eq.~\ref{eq:text_regularization}
    & 60.18 & 67.27 & 39.34 & 54.80 & 49.33 & 37.00 & 37.67 & 41.33 \\
    $\beta=0.5$ in Eq.~\ref{eq:text_regularization}
    & 58.98 & 66.67 & 39.64 & 54.30 & 49.67 & 37.00 & 38.33 & 41.67 \\
    $\beta=0.6$ in Eq.~\ref{eq:text_regularization}
    & 58.68 & 65.47 & 35.44 & 52.40 & 48.33 & 37.67 & 35.00 & 40.33 \\
    $\beta=1.0$ in Eq.~\ref{eq:text_regularization}
    & 51.50 & 58.56 & 33.63 & 47.90 & 33.00 & 26.33 & 27.33 & 28.89 \\
    \midrule
    $\eta=10^3$ in Eq.~\ref{eq:DCAT_objective}
    & 37.72 & 44.44 & 25.23 & 35.80 & 1.33 & 1.67 & 1.67 & 1.56 \\
    $\eta=10^4$ in Eq.~\ref{eq:DCAT_objective}
    & 56.29 & 66.67 & 37.54 & 53.50 & 46.67 & 34.33 & 34.67 & 38.56 \\
    $\eta=5\times10^4$ in Eq.~\ref{eq:DCAT_objective}
    & 58.38 & 66.37 & 39.04 & 54.60 & 49.67 & 38.00 & 36.00 & 41.22 \\
    $\eta=5\times10^5$ in Eq.~\ref{eq:DCAT_objective}
    & 57.49 & 66.67 & 38.74 & 54.30 & 50.67 & 37.33 & 34.33 & 40.78 \\
    $\eta=10^6$ in Eq.~\ref{eq:DCAT_objective}
    & 58.89 & 66.67 & 38.14 & 53.90 & 50.00 & 37.00 & 35.33 & 40.78 \\
    $\eta=10^7$ in Eq.~\ref{eq:DCAT_objective}
    & 56.89 & 66.67 & 33.33 & 52.30 & 45.67 & 37.33 & 34.67 & 39.22 \\
    \midrule
    $k=32$ 
    & 56.89 & 66.37 & 33.33 & 52.20 & 41.00 & 34.67 & 30.00 & 35.22 \\
    $k=64$
    & 58.68 & 66.67 & 39.34 & 54.90 & 49.33 & 37.67 & 35.67 & 40.89 \\
    $k=256$
    & 58.08 & 66.37 & 37.24 & 53.90 & 49.67 & 37.00 & 37.33 & 41.33 \\
    $k=512$
    & 58.68 & 66.97 & 33.63 & 53.10 & 44.33 & 37.67 & 34.67 & 38.89 \\
    \midrule
    $\#\text{calib. samples}=64$
    & 55.99 & 65.17 & 35.44 & 52.20 & 40.33 & 38.00 & 32.00 & 36.78 \\
    $\#\text{calib. samples}=1024$
    & 56.89 & 67.57 & 34.83 & 53.10 & 44.33 & 40.00 & 32.67 & 39.00 \\
    \bottomrule
    \end{tabular}
    }
\end{table}

\begingroup
\footnotesize
\setlength{\tabcolsep}{4pt}
\renewcommand{\arraystretch}{1.0}
\begin{xltabular}{\textwidth}{@{}
>{\centering\arraybackslash}p{1.0cm}
p{1.6cm}
p{1.0cm}
Y
r
>{\raggedright\arraybackslash}p{1.6cm}
@{}}
\caption{\textbf{Summary of benchmarks used in our evaluation.} Each benchmark's question type, example prompt, number of questions, and evaluation method are listed. For benchmarks's prompt where we modified the default lmms-eval prompt, both the default and custom prompts are provided.}\label{tab:benchmark-summary}\\
\toprule
\centering\textbf{Modality}
& \centering\textbf{Benchmark}
& \centering\textbf{Question Type}
& \centering\textbf{Example Prompt}
& \centering\textbf{\#\,Q}
& \centering\arraybackslash\textbf{Evaluation Method} \\
\midrule
\endfirsthead
\multicolumn{6}{c}{\tablename\ \thetable\ -- continued from previous page}\\
\toprule
\multicolumn{1}{c}{\textbf{Modality}} &
\multicolumn{1}{c}{\textbf{Benchmark}} &
\multicolumn{1}{c}{\makecell{\textbf{Question}\\\textbf{Type}}} &
\multicolumn{1}{c}{\textbf{Example Prompt}} &
\multicolumn{1}{c}{\textbf{\# Q}} &
\multicolumn{1}{c}{\makecell{\textbf{Evaluation}\\\textbf{Method}}} \\
\midrule
\endhead
\midrule
\multicolumn{6}{r}{\textit{continued on next page}} \\
\endfoot
\bottomrule
\endlastfoot
\multirow{5}{*}{\textbf{Audio}} & MMAU & MCQ & {"Based on the given audio, identify the source of the speaking voice. \newline A. Man \newline B. Woman \newline C. Child \newline D. Robot \newline Answer with the option's letter from the given choices directly."} & 1000 & rule and based parsing and matching \\
\cmidrule{2-6}
 & AIR-Bench-Foundation & MCQ & {"In terms of mood, what does this music suggest? \newline A. party \newline B. emotional \newline C. soundscape \newline D. trailer \newline  \newline Answer with the option's letter from the given choices directly."} & 24683 & rule and based parsing and matching \\
 \cmidrule{2-6}
 & MuChoMusic & MCQ & {"Which instrument initiates the piece? \newline (A) Synthesizer (B) Rueful tune (C) Acoustic guitar (D) Vocals  \newline Answer with the option's letter from the given choices directly: "} & 1187 & rule-based parsing and matching \\
 \cmidrule{2-6}
 & Clotho-AQA (filtered binary subset) & binary QA & {"Are people talking? \newline Answer the question using a single word only. "} & 1442 & exact match \\
 \cmidrule{2-6}
 & VocalSound & 6-class classification & {*We use the custom prompt for both Vicuna and LLaMA evaluation. \newline  \newline Default: \newline "Classify the human vocal sound to VocalSound in English: " \newline  \newline Custom: \newline "What type of vocal sound is in the audio? Answer with exactly one of: Laughter, Sniff, Throat clearing, Cough, Sigh, Sneeze."} & 3591 & rule-based parsing and matching \\
\midrule
\multirow{8}{*}{\textbf{Video}} & VideoMME & MCQ & {*We use the custom prompt for Vicuna evaluation. \newline  \newline Default: \newline "Select the best answer to the following multiple-choice question based on the video and the subtitles. Respond with only the letter (A, B, C, or D) of the correct option. \newline How many national flags appear in the video? \newline A. 3. \newline B. 4. \newline C. 2. \newline D. 5. \newline  \newline Answer with the option's letter from the given choices directly." \newline  \newline Custom: \newline "Select the best answer to the following multiple-choice question based on the video. Respond with only the letter (A, B, C, or D) of the correct option. How many national flags appear in the video? A. 3. B. 4. C. 2. D. 5. The best answer is:"} & 2700 & rule-based parsing and matching \\
\cmidrule{2-6}
 & TempCompass (multi-choice) & MCQ & {*We use the custom prompt for Vicuna evaluation. \newline  \newline Default: \newline "What is the man doing in the video? \newline A. dunking a basketball \newline B. dribbling a basketball \newline C. passing a basketball \newline Please directly give the best option:" \newline  \newline Custom: \newline "What is the man doing in the video? \newline A. dunking a basketball \newline B. dribbling a basketball \newline C. passing a basketball The best answer is:"} & 1580 & rule-based parsing and matching / calls gpt-3.5-turbo-1106~\cite{DBLP:journals/corr/abs-2005-14165} for judging on rule failure \\
 \cmidrule{2-6}
 & TempCompass (caption-matching) & MCQ & {*We use the custom prompt for Vicuna evaluation. \newline  \newline Default: \newline "Which description is a more suitable match for the video? \newline Option 1: The man is dribbling a basketball. \newline Option 2: A man is dunking a basketball. \newline Please directly give the best option:" \newline  \newline Custom: \newline "Which description is a more suitable match for the video? \newline Option 1: The man is dribbling a basketball. \newline Option 2: A man is dunking a basketball. The best answer is:"} & 1503 & rule-based parsing and matching / calls gpt-3.5-turbo-1106 for judging on rule failure \\
 \cmidrule{2-6}
 & TempCompass (yes/no) & binary QA & {*We use the custom prompt for Vicuna evaluation. \newline  \newline Default: \newline "Is the man dunking? \newline Please answer yes or no:" \newline  \newline Custom: \newline "Is the man dunking? Please answer yes or no:"} & 2453 & rule-based parsing and matching / calls gpt-3.5-turbo-1106 for judging on rule failure \\
 \cmidrule{2-6}
 & MVBench & MCQ & {*We use the custom prompt for Vicuna evaluation. \newline  \newline Default: \newline "Question:What is the action performed by the person in the video? \newline Option: \newline (A) Not sure \newline (B) Scattering something down \newline (C) Piling something up \newline Only give the best option. \newline " \newline  \newline Custom: \newline "Question:What is the action performed by the person in the video? \newline Option: \newline (A) Not sure \newline (B) Scattering something down \newline (C) Piling something up \newline The best answer is:"} & 4000 & rule and based parsing and matching \\
 \cmidrule{2-6}
 & MLVU & MCQ & {"What color is the water on the tropical beach in the video? \newline (A) Dark blue \newline (B) Clear \newline (C) Green \newline (D) Azure blue \newline  \newline Only give the best option. \newline Best option: ("} & 2174 & rule-based parsing and matching \newline\newline * uses per- task-type accuracy, not overall accuracy \\
 \cmidrule{2-6}
 & EgoSchema (subset) & MCQ & {"Taking into account all the actions performed by c, what can you deduce about the primary objective and focus within the video content? \newline A. C is cooking. \newline B. C is doing laundry. \newline C. C is cleaning the kitchen. \newline D. C is cleaning dishes. \newline E. C is cleaning the bathroom. \newline Answer with the option's letter from the given choices directly."} & 500 & rule-based parsing and matching \\
 \cmidrule{2-6}
 & LongVideo- Bench (Val) & MCQ & {"In front of an ancient building, a woman walks by holding something. When the subtitle says 'temples scattered throughout the', what color is the woman's clothing at that moment? \newline A. yellow \newline B. green \newline C. blue \newline D. red \newline Answer with the option's letter from the given choices directly. \newline "} & 1337 & rule-based parsing and matching \\
\midrule
\multirow{5}{*}{\textbf{Vision}} & GQA & Free-form short-answer & {"Who is wearing the dress? \newline Answer the question using a single word or phrase."} & 12578 & exact match \\
\cmidrule{2-6}
 & MMBench (Dev\_EN) & MCQ & {"Identify the emotion expressed in this image. There are several options: \newline A. happiness \newline B. sadness \newline C. anger \newline D. loneliness \newline  \newline Answer with the option's letter from the given choices directly."} & 1164 & rule-based parsing and matching / calls gpt-4o-2024-11-20~\cite{hurst2024gpt} for parsing if rule-based parsing fails \newline\newline *adapts circular evaluation \\
 \cmidrule{2-6}
 & MMStar & MCQ & {"What seems to be the theme of the image? \newline Options: A: Hanging Posters, B: Music, C: Home decoration, D: Playing guitar \newline Answer with the option's letter from the given choices directly"} & 1500 & rule-based parsing and matching \newline\newline * uses per- task-type accuracy, not overall accuracy \\
 \cmidrule{2-6}
 & POPE & binary QA & {"Is there a snowboard in the image? \newline Answer the question using a single word or phrase."} & 9000 & exact match \\
 \cmidrule{2-6}
 & TextVQA (Val) & Free-form short-answer & {"What is the brand of this camera? \newline Answer the question using a single word or phrase."} & 5000 & VQA soft accuracy \\
\end{xltabular}
\endgroup

\subsection{Efficiency study}\label{sec:append_efficiency}
All efficiency numbers are measured on a single NVIDIA L40S 48GB GPU with the same numerical precision as the original checkpoints.
For each method and recipient setting, we time only the core merging algorithm, excluding per-checkpoint adapter loading and saving so that the reported cost reflects merging rather than I/O.
All methods use the merging hyperparameters from their respective references ($300$ Adam iterations for WUDI and $300$ SGD iterations for OptMerge) from the source of OptMerge~\cite{wei2025optmerge}.

Tab.~\ref{tab:efficiency} shows that DCAT remains practical despite using calibration activations.
For the LLaMA 3.1-based settings, DCAT has a merging time comparable to stronger baselines such as TSV and OptMerge: $28.45$/$34.57$ minutes for LLaMA-Audio/Video, compared with $25.61$/$33.02$ for TSV and $29.96$/$35.55$ for OptMerge.
Its peak memory is also comparable, requiring $16.72$GB mainly to store layer-wise outputs for a small number of calibration samples, while TSV and OptMerge use $12.12$GB and $18.66$GB.
For Vicuna, DCAT is more expensive than lightweight baselines because it merges the full weights after merging LoRA adapters, making its cost closer to the full-weight LLaMA setting.
Even so, DCAT finishes within about $24$--$25$ minutes and uses $14.56$GB on a single L40S GPU, indicating that its one-time merging cost remains highly feasible.

\begin{table*}[t]
    \caption{\textbf{Efficiency study of model merging methods.}
    We compare the computational cost of DCAT with existing merging baselines across four recipient settings: Vicuna-Audio, Vicuna-Video, LLaMA-Audio, and LLaMA-Video.
    We report the wall-clock merging time and peak GPU memory measured during the merging procedure.
    All results are measured on the same hardware and with the same precision.}
    \label{tab:efficiency}
    \centering
    \small
    \renewcommand{\arraystretch}{1.18}
    \setlength{\tabcolsep}{4.5pt}
    \resizebox{\textwidth}{!}{%
    \begin{tabular}{l|cccc|cccc}
    \toprule
    \multirow{2}{*}{\textbf{Method}}
    & \multicolumn{4}{c|}{\textbf{Merging Time (min.)} $\downarrow$}
    & \multicolumn{4}{c}{\textbf{Peak GPU Memory (GB)} $\downarrow$} \\
    & Vicuna-Audio & Vicuna-Video & LLaMA-Audio & LLaMA-Video
    & Vicuna-Audio & Vicuna-Video & LLaMA-Audio & LLaMA-Video \\
\midrule                                                                                                                                
      TA       
      & 0.14 & 0.45 & 0.74  & 0.93  & 0.00 & 0.00 & 2.94  & 2.94  \\
      TIES     
      & 1.18 & 0.51 & 1.81  & 1.64  & 0.00 & 0.00 & 22.51 & 22.51 \\
      ISO      
      & 0.58 & 0.53 & 8.96  & 12.03 & 0.04 & 0.04 & 8.09  & 8.09  \\
      TSV      
      & 0.68 & 0.69 & 25.61 & 33.02 & 0.05 & 0.05 & 12.12 & 12.12 \\
      WUDI     
      & 5.76 & 5.75 & 23.20 & 23.20 & 2.29 & 2.29 & 4.89  & 4.89  \\
      OptMerge 
      & 5.86 & 5.79 & 29.96 & 35.55 & 2.33 & 2.33 & 18.66 & 18.66 \\
      \midrule                                                  
      \textbf{DCAT}                                      
      & 23.64 & 25.39 & 28.45 & 34.57 & 14.56 & 14.56 & 16.72 & 16.72 \\      
    \bottomrule
    \end{tabular}
    }
\end{table*}

\begin{figure}[!h]
  \centering
  \vspace{-1.0em}
  \includegraphics[width=0.5\linewidth]{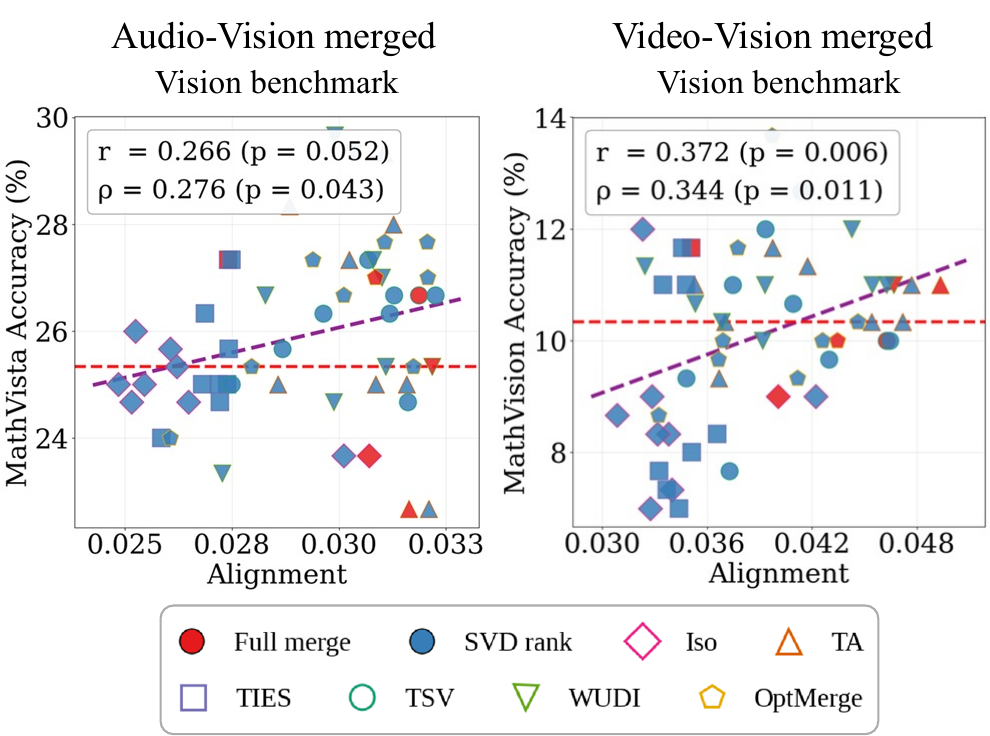}
  \caption{\textbf{Failure case on alignment--performance correlation.} Although our experiments across hundreds of model weights and diverse benchmarks demonstrate a strong correlation between the alignment metric and performance, we observe a small number of benchmarks for which this correlation is not pronounced. These cases suggest that further study may be needed to explain the underlying factors driving performance improvements.}
  \label{fig:append_limitation}
  \vspace{-1.5em}
\end{figure}
\section{Limitation}\label{sec:limitation}
First, although our analyses across hundreds of model checkpoints and diverse benchmarks reveal a strong and statistically meaningful correlation between the proposed Alignment score and downstream performance, this relationship is not universal across all evaluation settings. 
As shown in Fig.~\ref{fig:append_limitation}, a small number of benchmarks exhibit weak or negligible correlation, suggesting that factors beyond modality--text alignment may dominate performance in certain tasks. 
For example, some benchmarks may depend more heavily on modality-specific perception, reasoning format, dataset bias, or evaluation protocol than on the alignment properties captured by our score. 
Therefore, our Alignment score should be interpreted as a useful and broadly predictive indicator for cross-modal transfer, rather than as a complete explanation of all performance variations. 
Understanding when alignment is insufficient, and identifying complementary factors that better explain such exceptional cases, is an important direction for future work. Second, DCAT assumes that the models to be merged have compatible parameter shapes, as it operates directly in weight space. 
This assumption is shared by many training-free merging methods, but it limits applicability to in-the-wild MLLMs whose language backbones differ in tokenizer size, embedding matrix size, LM head size, or other architectural components. 
Such layers are particularly important in LLMs, since they directly determine token representation and generation behavior; na\"ive resizing, padding, or partial copying can severely degrade model performance and would also introduce confounding factors beyond the merging method itself. 
For this reason, we restrict our experiments to models with shape-compatible language-model parameters and do not apply additional retraining or ad-hoc architectural surgery to force compatibility. 
Extending directional cross-modal alignment transfer to models with different vocabularies, embedding spaces, or heterogeneous backbone shapes would substantially broaden the coverage of DCAT and represents a promising future research direction.

\section{Broader Impacts}\label{sec:append_broader_impacts}
This work proposes an efficient model-merging framework that transfers alignment across MLLMs trained for different modalities, improving target-modality performance without additional large-scale training. This may have positive societal impact by reducing the data and compute costs required to develop capable multimodal systems, especially for data-scarce, high-volume, or domain-specific modalities such as audio, video, medical, or scientific data. As a result, the approach could improve accessibility for researchers and organizations with limited computational resources. However, stronger audio and video understanding capabilities may also be misused for surveillance, sensitive attribute inference, large-scale profiling, or downstream systems that make unfair or unsafe decisions. In addition, cross-modality alignment transfer may propagate or amplify biases, privacy risks, or failure modes inherited from the donor model, and incorrect multimodal understanding could cause harm when used in high-stakes applications. To mitigate these risks, practical deployments should include careful evaluation across demographic groups and domains, privacy-preserving calibration data, safety testing before and after merging, restrictions on sensitive use cases, and human oversight in high-risk settings. Although this work is primarily foundational, its direct path to improving multimodal capability warrants consideration of these potential impacts.

\clearpage
\newpage
\input{checklist.tex}

\end{document}

%% file: checklist.tex
\section*{NeurIPS Paper Checklist}
\begin{enumerate}
\item {\bf Claims}
    \item[] Question: Do the main claims made in the abstract and introduction accurately reflect the paper's contributions and scope?
    \item[] Answer: \answerYes{} 
    \item[] Justification: We identify a phenomenon in which merging MLLMs trained on different modalities using existing model-merging methods can substantially improve performance on specific target modalities. We theoretically characterize this phenomenon and leverage it to propose a method for improving the performance of modality-specific models in settings where both models and data are scarce. These contributions are clearly presented in the abstract and introduction.
    \item[] Guidelines:
    \begin{itemize}
        \item The answer \answerNA{} means that the abstract and introduction do not include the claims made in the paper.
        \item The abstract and/or introduction should clearly state the claims made, including the contributions made in the paper and important assumptions and limitations. A \answerNo{} or \answerNA{} answer to this question will not be perceived well by the reviewers. 
        \item The claims made should match theoretical and experimental results, and reflect how much the results can be expected to generalize to other settings. 
        \item It is fine to include aspirational goals as motivation as long as it is clear that these goals are not attained by the paper. 
    \end{itemize}

\item {\bf Limitations}
    \item[] Question: Does the paper discuss the limitations of the work performed by the authors?
    \item[] Answer: \answerYes{} 
    \item[] Justification: We've included limitations in Sec.~\ref{sec:limitation}.
    \item[] Guidelines:
    \begin{itemize}
        \item The answer \answerNA{} means that the paper has no limitation while the answer \answerNo{} means that the paper has limitations, but those are not discussed in the paper. 
        \item The authors are encouraged to create a separate ``Limitations'' section in their paper.
        \item The paper should point out any strong assumptions and how robust the results are to violations of these assumptions (e.g., independence assumptions, noiseless settings, model well-specification, asymptotic approximations only holding locally). The authors should reflect on how these assumptions might be violated in practice and what the implications would be.
        \item The authors should reflect on the scope of the claims made, e.g., if the approach was only tested on a few datasets or with a few runs. In general, empirical results often depend on implicit assumptions, which should be articulated.
        \item The authors should reflect on the factors that influence the performance of the approach. For example, a facial recognition algorithm may perform poorly when image resolution is low or images are taken in low lighting. Or a speech-to-text system might not be used reliably to provide closed captions for online lectures because it fails to handle technical jargon.
        \item The authors should discuss the computational efficiency of the proposed algorithms and how they scale with dataset size.
        \item If applicable, the authors should discuss possible limitations of their approach to address problems of privacy and fairness.
        \item While the authors might fear that complete honesty about limitations might be used by reviewers as grounds for rejection, a worse outcome might be that reviewers discover limitations that aren't acknowledged in the paper. The authors should use their best judgment and recognize that individual actions in favor of transparency play an important role in developing norms that preserve the integrity of the community. Reviewers will be specifically instructed to not penalize honesty concerning limitations.
    \end{itemize}

\item {\bf Theory assumptions and proofs}
    \item[] Question: For each theoretical result, does the paper provide the full set of assumptions and a complete (and correct) proof?
    \item[] Answer: \answerYes{} 
    \item[] Justification: We present Theorem~\ref{thm:lower_bound} and provide detailed proofs of the theorem and its accompanying propositions in Appendix Sec.~\ref{sec:append_proof}. We also explicitly state the Gaussian assumption required for the theorem.
    \item[] Guidelines:
    \begin{itemize}
        \item The answer \answerNA{} means that the paper does not include theoretical results. 
        \item All the theorems, formulas, and proofs in the paper should be numbered and cross-referenced.
        \item All assumptions should be clearly stated or referenced in the statement of any theorems.
        \item The proofs can either appear in the main paper or the supplemental material, but if they appear in the supplemental material, the authors are encouraged to provide a short proof sketch to provide intuition. 
        \item Inversely, any informal proof provided in the core of the paper should be complemented by formal proofs provided in appendix or supplemental material.
        \item Theorems and Lemmas that the proof relies upon should be properly referenced. 
    \end{itemize}

    \item {\bf Experimental result reproducibility}
    \item[] Question: Does the paper fully disclose all the information needed to reproduce the main experimental results of the paper to the extent that it affects the main claims and/or conclusions of the paper (regardless of whether the code and data are provided or not)?
    \item[] Answer: \answerYes{} 
    \item[] Justification: We describe the experimental settings and implementation details in the main manuscript Sec.~\ref{sec:experimental_settings}. To ensure fair comparison, we report the hyperparameter values used for the baselines in Sec.~\ref{sec:quanti}. Additional details are provided in Appendix Sec.~\ref{sec:append_details}.
    \item[] Guidelines:
    \begin{itemize}
        \item The answer \answerNA{} means that the paper does not include experiments.
        \item If the paper includes experiments, a \answerNo{} answer to this question will not be perceived well by the reviewers: Making the paper reproducible is important, regardless of whether the code and data are provided or not.
        \item If the contribution is a dataset and\slash or model, the authors should describe the steps taken to make their results reproducible or verifiable. 
        \item Depending on the contribution, reproducibility can be accomplished in various ways. For example, if the contribution is a novel architecture, describing the architecture fully might suffice, or if the contribution is a specific model and empirical evaluation, it may be necessary to either make it possible for others to replicate the model with the same dataset, or provide access to the model. In general. releasing code and data is often one good way to accomplish this, but reproducibility can also be provided via detailed instructions for how to replicate the results, access to a hosted model (e.g., in the case of a large language model), releasing of a model checkpoint, or other means that are appropriate to the research performed.
        \item While NeurIPS does not require releasing code, the conference does require all submissions to provide some reasonable avenue for reproducibility, which may depend on the nature of the contribution. For example
        \begin{enumerate}
            \item If the contribution is primarily a new algorithm, the paper should make it clear how to reproduce that algorithm.
            \item If the contribution is primarily a new model architecture, the paper should describe the architecture clearly and fully.
            \item If the contribution is a new model (e.g., a large language model), then there should either be a way to access this model for reproducing the results or a way to reproduce the model (e.g., with an open-source dataset or instructions for how to construct the dataset).
            \item We recognize that reproducibility may be tricky in some cases, in which case authors are welcome to describe the particular way they provide for reproducibility. In the case of closed-source models, it may be that access to the model is limited in some way (e.g., to registered users), but it should be possible for other researchers to have some path to reproducing or verifying the results.
        \end{enumerate}
    \end{itemize}

\item {\bf Open access to data and code}
    \item[] Question: Does the paper provide open access to the data and code, with sufficient instructions to faithfully reproduce the main experimental results, as described in supplemental material?
    \item[] Answer: \answerYes{} 
    \item[] Justification: All datasets used for our experiments and evaluation are open-source and publicly accessible, and we will release our code upon acceptance. In addition, reviewers are provided with instructions to reproduce the main experimental results in the \texttt{README.md} file included with the supplementary code.
    \item[] Guidelines:
    \begin{itemize}
        \item The answer \answerNA{} means that paper does not include experiments requiring code.
        \item Please see the NeurIPS code and data submission guidelines (\url{https://neurips.cc/public/guides/CodeSubmissionPolicy}) for more details.
        \item While we encourage the release of code and data, we understand that this might not be possible, so \answerNo{} is an acceptable answer. Papers cannot be rejected simply for not including code, unless this is central to the contribution (e.g., for a new open-source benchmark).
        \item The instructions should contain the exact command and environment needed to run to reproduce the results. See the NeurIPS code and data submission guidelines (\url{https://neurips.cc/public/guides/CodeSubmissionPolicy}) for more details.
        \item The authors should provide instructions on data access and preparation, including how to access the raw data, preprocessed data, intermediate data, and generated data, etc.
        \item The authors should provide scripts to reproduce all experimental results for the new proposed method and baselines. If only a subset of experiments are reproducible, they should state which ones are omitted from the script and why.
        \item At submission time, to preserve anonymity, the authors should release anonymized versions (if applicable).
        \item Providing as much information as possible in supplemental material (appended to the paper) is recommended, but including URLs to data and code is permitted.
    \end{itemize}

\item {\bf Experimental setting/details}
    \item[] Question: Does the paper specify all the training and test details (e.g., data splits, hyperparameters, how they were chosen, type of optimizer) necessary to understand the results?
    \item[] Answer: \answerYes{} 
    \item[] Justification: We provide detailed implementation and experimental details, including hyperparameters, the models used, and the analysis procedures, in the main manuscript Sec.~\ref{sec:experimental_settings}, Appendix Sec.~\ref{sec:append_implementation}, and Sec.~\ref{sec:append_details}.
    \item[] Guidelines:
    \begin{itemize}
        \item The answer \answerNA{} means that the paper does not include experiments.
        \item The experimental setting should be presented in the core of the paper to a level of detail that is necessary to appreciate the results and make sense of them.
        \item The full details can be provided either with the code, in appendix, or as supplemental material.
    \end{itemize}

\item {\bf Experiment statistical significance}
    \item[] Question: Does the paper report error bars suitably and correctly defined or other appropriate information about the statistical significance of the experiments?
    \item[] Answer: \answerYes{} 
    \item[] Justification: We report Pearson and Spearman correlation coefficients with their corresponding $p$-values in Fig.~\ref{fig:correlation}. We also include standard deviations in Fig.~\ref{fig:layerwise_alignment} and Fig.~\ref{fig:bef_aft_DCAT}. In Appendix Sec.~\ref{sec:append_statistical}, we conduct statistical significance tests for performance changes across all results presented in the main manuscript and report the complete results.
    \item[] Guidelines:
    \begin{itemize}
        \item The answer \answerNA{} means that the paper does not include experiments.
        \item The authors should answer \answerYes{} if the results are accompanied by error bars, confidence intervals, or statistical significance tests, at least for the experiments that support the main claims of the paper.
        \item The factors of variability that the error bars are capturing should be clearly stated (for example, train/test split, initialization, random drawing of some parameter, or overall run with given experimental conditions).
        \item The method for calculating the error bars should be explained (closed form formula, call to a library function, bootstrap, etc.)
        \item The assumptions made should be given (e.g., Normally distributed errors).
        \item It should be clear whether the error bar is the standard deviation or the standard error of the mean.
        \item It is OK to report 1-sigma error bars, but one should state it. The authors should preferably report a 2-sigma error bar than state that they have a 96\% CI, if the hypothesis of Normality of errors is not verified.
        \item For asymmetric distributions, the authors should be careful not to show in tables or figures symmetric error bars that would yield results that are out of range (e.g., negative error rates).
        \item If error bars are reported in tables or plots, the authors should explain in the text how they were calculated and reference the corresponding figures or tables in the text.
    \end{itemize}

\item {\bf Experiments compute resources}
    \item[] Question: For each experiment, does the paper provide sufficient information on the computer resources (type of compute workers, memory, time of execution) needed to reproduce the experiments?
    \item[] Answer: \answerYes{} 
    \item[] Justification: We report the compute resources used in our experiments in Appendix Sec.~\ref{sec:append_models_and_computation}. The computational costs of our method and the baselines are provided in Appendix Sec.~\ref{sec:append_efficiency}.
    \item[] Guidelines:
    \begin{itemize}
        \item The answer \answerNA{} means that the paper does not include experiments.
        \item The paper should indicate the type of compute workers CPU or GPU, internal cluster, or cloud provider, including relevant memory and storage.
        \item The paper should provide the amount of compute required for each of the individual experimental runs as well as estimate the total compute. 
        \item The paper should disclose whether the full research project required more compute than the experiments reported in the paper (e.g., preliminary or failed experiments that didn't make it into the paper). 
    \end{itemize}
    
\item {\bf Code of ethics}
    \item[] Question: Does the research conducted in the paper conform, in every respect, with the NeurIPS Code of Ethics \url{https://neurips.cc/public/EthicsGuidelines}?
    \item[] Answer: \answerYes{} 
    \item[] Justification: The authors have reviewed the NeurIPS Code of Ethics.
    \item[] Guidelines:
    \begin{itemize}
        \item The answer \answerNA{} means that the authors have not reviewed the NeurIPS Code of Ethics.
        \item If the authors answer \answerNo, they should explain the special circumstances that require a deviation from the Code of Ethics.
        \item The authors should make sure to preserve anonymity (e.g., if there is a special consideration due to laws or regulations in their jurisdiction).
    \end{itemize}

\item {\bf Broader impacts}
    \item[] Question: Does the paper discuss both potential positive societal impacts and negative societal impacts of the work performed?
    \item[] Answer: \answerYes{} 
    \item[] Justification: We provide broader impacts in Appendix Sec.~\ref{sec:append_broader_impacts}.
    \item[] Guidelines:
    \begin{itemize}
        \item The answer \answerNA{} means that there is no societal impact of the work performed.
        \item If the authors answer \answerNA{} or \answerNo, they should explain why their work has no societal impact or why the paper does not address societal impact.
        \item Examples of negative societal impacts include potential malicious or unintended uses (e.g., disinformation, generating fake profiles, surveillance), fairness considerations (e.g., deployment of technologies that could make decisions that unfairly impact specific groups), privacy considerations, and security considerations.
        \item The conference expects that many papers will be foundational research and not tied to particular applications, let alone deployments. However, if there is a direct path to any negative applications, the authors should point it out. For example, it is legitimate to point out that an improvement in the quality of generative models could be used to generate Deepfakes for disinformation. On the other hand, it is not needed to point out that a generic algorithm for optimizing neural networks could enable people to train models that generate Deepfakes faster.
        \item The authors should consider possible harms that could arise when the technology is being used as intended and functioning correctly, harms that could arise when the technology is being used as intended but gives incorrect results, and harms following from (intentional or unintentional) misuse of the technology.
        \item If there are negative societal impacts, the authors could also discuss possible mitigation strategies (e.g., gated release of models, providing defenses in addition to attacks, mechanisms for monitoring misuse, mechanisms to monitor how a system learns from feedback over time, improving the efficiency and accessibility of ML).
    \end{itemize}
    
\item {\bf Safeguards}
    \item[] Question: Does the paper describe safeguards that have been put in place for responsible release of data or models that have a high risk for misuse (e.g., pre-trained language models, image generators, or scraped datasets)?
    \item[] Answer: \answerNA{} 
    \item[] Justification: Our experiments are conducted using existing pretrained models and do not aim to introduce new capabilities. Therefore, safeguard design is outside the scope of this work, and we rely on the safeguards already incorporated in the original models.
    \item[] Guidelines:
    \begin{itemize}
        \item The answer \answerNA{} means that the paper poses no such risks.
        \item Released models that have a high risk for misuse or dual-use should be released with necessary safeguards to allow for controlled use of the model, for example by requiring that users adhere to usage guidelines or restrictions to access the model or implementing safety filters. 
        \item Datasets that have been scraped from the Internet could pose safety risks. The authors should describe how they avoided releasing unsafe images.
        \item We recognize that providing effective safeguards is challenging, and many papers do not require this, but we encourage authors to take this into account and make a best faith effort.
    \end{itemize}

\item {\bf Licenses for existing assets}
    \item[] Question: Are the creators or original owners of assets (e.g., code, data, models), used in the paper, properly credited and are the license and terms of use explicitly mentioned and properly respected?
    \item[] Answer: \answerYes{} 
    \item[] Justification: We appropriately cite the papers from which the models were obtained, provide citations for all baselines, and credit in-the-wild models by including direct links to their sources.
    \item[] Guidelines:
    \begin{itemize}
        \item The answer \answerNA{} means that the paper does not use existing assets.
        \item The authors should cite the original paper that produced the code package or dataset.
        \item The authors should state which version of the asset is used and, if possible, include a URL.
        \item The name of the license (e.g., CC-BY 4.0) should be included for each asset.
        \item For scraped data from a particular source (e.g., website), the copyright and terms of service of that source should be provided.
        \item If assets are released, the license, copyright information, and terms of use in the package should be provided. For popular datasets, \url{paperswithcode.com/datasets} has curated licenses for some datasets. Their licensing guide can help determine the license of a dataset.
        \item For existing datasets that are re-packaged, both the original license and the license of the derived asset (if it has changed) should be provided.
        \item If this information is not available online, the authors are encouraged to reach out to the asset's creators.
    \end{itemize}

\item {\bf New assets}
    \item[] Question: Are new assets introduced in the paper well documented and is the documentation provided alongside the assets?
    \item[] Answer: \answerYes{} 
    \item[] Justification: We provide details of the provided code in supplementary \texttt{README.md}.
    \item[] Guidelines:
    \begin{itemize}
        \item The answer \answerNA{} means that the paper does not release new assets.
        \item Researchers should communicate the details of the dataset\slash code\slash model as part of their submissions via structured templates. This includes details about training, license, limitations, etc. 
        \item The paper should discuss whether and how consent was obtained from people whose asset is used.
        \item At submission time, remember to anonymize your assets (if applicable). You can either create an anonymized URL or include an anonymized zip file.
    \end{itemize}

\item {\bf Crowdsourcing and research with human subjects}
    \item[] Question: For crowdsourcing experiments and research with human subjects, does the paper include the full text of instructions given to participants and screenshots, if applicable, as well as details about compensation (if any)? 
    \item[] Answer: \answerNA{} 
    \item[] Justification: The paper does not involve crowdsourcing nor research with human subjects.
    \item[] Guidelines:
    \begin{itemize}
        \item The answer \answerNA{} means that the paper does not involve crowdsourcing nor research with human subjects.
        \item Including this information in the supplemental material is fine, but if the main contribution of the paper involves human subjects, then as much detail as possible should be included in the main paper. 
        \item According to the NeurIPS Code of Ethics, workers involved in data collection, curation, or other labor should be paid at least the minimum wage in the country of the data collector. 
    \end{itemize}

\item {\bf Institutional review board (IRB) approvals or equivalent for research with human subjects}
    \item[] Question: Does the paper describe potential risks incurred by study participants, whether such risks were disclosed to the subjects, and whether Institutional Review Board (IRB) approvals (or an equivalent approval/review based on the requirements of your country or institution) were obtained?
    \item[] Answer: \answerNA{} 
    \item[] Justification: The paper does not involve crowdsourcing nor research with human subjects.
    \item[] Guidelines:
    \begin{itemize}
        \item The answer \answerNA{} means that the paper does not involve crowdsourcing nor research with human subjects.
        \item Depending on the country in which research is conducted, IRB approval (or equivalent) may be required for any human subjects research. If you obtained IRB approval, you should clearly state this in the paper. 
        \item We recognize that the procedures for this may vary significantly between institutions and locations, and we expect authors to adhere to the NeurIPS Code of Ethics and the guidelines for their institution. 
        \item For initial submissions, do not include any information that would break anonymity (if applicable), such as the institution conducting the review.
    \end{itemize}

\item {\bf Declaration of LLM usage}
    \item[] Question: Does the paper describe the usage of LLMs if it is an important, original, or non-standard component of the core methods in this research? Note that if the LLM is used only for writing, editing, or formatting purposes and does \emph{not} impact the core methodology, scientific rigor, or originality of the research, declaration is not required.
    \item[] Answer: \answerYes{} 
    \item[] Justification: Our experiments include some benchmarks that use an LLM-as-a-judge and parsing.
    \item[] Guidelines:
    \begin{itemize}
        \item The answer \answerNA{} means that the core method development in this research does not involve LLMs as any important, original, or non-standard components.
        \item Please refer to our LLM policy in the NeurIPS handbook for what should or should not be described.
    \end{itemize}

\end{enumerate}